\documentclass{article}

\usepackage[nonatbib,preprint]{neurips_2026}

\usepackage[utf8]{inputenc}
\usepackage[T1]{fontenc}
\usepackage{graphicx}
\usepackage{subcaption}
\usepackage{booktabs}
\usepackage{url}
\usepackage{nicefrac}
\usepackage{microtype}

\usepackage{inconsolata}
\usepackage[subtle,tracking=normal,paragraphs=normal]{savetrees} % paragraphs=normal,tracking=normal,

\usepackage{mathtools}
\usepackage{amsthm}
\usepackage{array}     % For better table column control
\newcolumntype{P}[1]{>{\centering\arraybackslash}p{#1}}

\usepackage{multirow}  % For multi-row cells (if needed in other tables)
\usepackage{tabularx}  % For adjustable width tables (though not strictly needed here)
\usepackage{wrapfig}

\usepackage{amsmath,amssymb,amsfonts}

\usepackage{algorithm}
\usepackage{algorithmic}  % NOT algorithmicx - use the classic version
\usepackage[dvipsnames,table]{xcolor}
\usepackage{soul}
\definecolor{blue}{RGB}{0,0,255}
\definecolor{teal}{RGB}{0,128,128}
\definecolor{violet}{RGB}{138,43,226}
\definecolor{lightblue}{rgb}{0.9, 0.95, 1.0}
\definecolor{mygreen}{rgb}{0.01, 0.5, 0.01}
\definecolor{myred}{rgb}{0.8, 0.01, 0.01}

\usepackage[colorlinks,urlcolor=purple,linkcolor=purple,citecolor=purple]{hyperref}

\usepackage{appendix}
\usepackage{enumitem}
\usepackage{bm}

\usepackage[capitalize,noabbrev]{cleveref}

\theoremstyle{plain}
\newtheorem{theorem}{Theorem}[section]
\newtheorem{proposition}[theorem]{Proposition}

\theoremstyle{definition}

\theoremstyle{remark}
\newtheorem{remark}[theorem]{Remark}

\usepackage[disable,textsize=tiny]{todonotes}

\title{ChronoSpike: An Adaptive Spiking Graph Neural Network for Dynamic Graphs}

\author{
  \textbf{Md Abrar Jahin}$^\diamondsuit$ \quad
  \textbf{Taufikur Rahman Fuad}$^\dagger$ \quad
  \textbf{Jay Pujara}$^\diamondsuit$ \quad
  \textbf{Craig Knoblock}$^\diamondsuit$ \\
  $^\diamondsuit$University of Southern California\\ 
  $^\dagger$Islamic University of Technology
}

\begin{document}

\maketitle

\addtocounter{footnote}{-1}
\renewcommand{\thefootnote}{\fnsymbol{footnote}}
% \footnotetext[2]{Code:~\scriptsize{\url{https://anonymous.4open.science/r/ChronoSpike}}}
\renewcommand{\thefootnote}{\arabic{footnote}}

\begin{abstract}
Dynamic graph representation learning requires capturing both structural relations and temporal evolution, yet existing approaches face a core trade-off: attention-based methods offer expressiveness at $O(T^2)$ complexity, while recurrent architectures suffer from gradient pathologies and dense state storage. Spiking neural networks provide event-driven efficiency but are constrained by sequential propagation, binary information loss, and local aggregation that lacks global context. We propose \textbf{ChronoSpike}, an adaptive spiking graph neural network that integrates learnable LIF neurons with per-channel membrane dynamics, multi-head spatially-attentive aggregation over continuous features, and a lightweight Transformer temporal encoder. This design enables fine-grained local modeling and long-range dependency capture with $O(T \cdot d)$ activation/state memory and an additional $O(T^2)$ per-node attention term that remains small for the horizons evaluated here. ChronoSpike outperforms twelve state-of-the-art baselines on three large benchmarks by 2.0\%~Macro-F1 and 2.4\%~Micro-F1 on average while achieving $3\text{--}10\times$ faster training than recurrent methods with a constant 105K-parameter budget independent of graph size. We provide theoretical guarantees for membrane potential boundedness, gradient flow stability under contraction factor $\rho\!<\!1$, and BIBO stability; interpretability analyses reveal heterogeneous temporal receptive fields and a learned primacy effect with 83--88\% sparsity.
\end{abstract}

\section{Introduction}
\label{sec:intro}
Real-world systems, such as financial transaction networks, social interactions, and biological processes, are inherently dynamic, with structures and node attributes evolving over time. Capturing both structural dependencies and temporal evolution is the central challenge of \emph{Dynamic Graph Representation Learning} (DGRL) \cite{dynamic_graph_survey, yang2023dynamic}. The goal of DGRL is to learn low-dimensional node representations that encode historical dynamics and support downstream tasks such as temporal node classification, link prediction, and anomaly detection \cite{kumar2019predicting_jodie, tgat}. While Graph Neural Networks (GNNs) have achieved strong performance on static graphs \cite{kipf2016semi}, extending them to large-scale dynamic settings efficiently and expressively remains an open problem.

Existing approaches to DGRL fall into several paradigms, each with fundamental scalability or expressiveness limits. \emph{Recurrent architectures} (e.g., JODIE \cite{kumar2019predicting_jodie}, EvolveGCN \cite{pareja2020evolvegcn}) model temporal evolution with recurrent neural networks, but must maintain dense hidden states per node over time, leading to high memory and computation costs. \emph{Attention-based models} (e.g., TGAT \cite{tgat}, TGN \cite{benTemporal}) use temporal self-attention to weight historical interactions, but their complexity often scales quadratically with neighborhood size and temporal horizon. More recently, \emph{state-space models} \cite{yuanDGMambaRobustEfficient2025} and \emph{frequency-domain methods} \cite{tianFreeDyGFrequencyEnhanced2024} achieve linear-time complexity; however, they typically rely on global temporal encodings that can miss localized, fine-grained structural dynamics. Together, these limitations reflect a persistent tension between expressiveness and scalability in dynamic graph learning.

Spiking Neural Networks (SNNs) offer an alternative computational paradigm suited for temporal modeling. Through sparse, binary spike events, SNNs enable event-driven computation with reduced memory and energy cost \cite{roySpikebasedMachineIntelligence2019}. Motivated by these properties, recent work has explored \emph{Spiking Graph Neural Networks} (SGNNs) for dynamic graph learning. Early studies such as SpikeNet \cite{spikenet23} and Dy-SIGN \cite{dysign24} showed that spike-based propagation improves scalability on large graphs. However, recent benchmarks indicate that SGNNs often underperform continuous-valued GNNs, pointing to unresolved representational bottlenecks \cite{zhangSGNNBenchHolisticEvaluation2025}. A closer analysis of prior SGNNs reveals a trilemma. First, SpikeNet-style sequential spike propagation is efficient but lacks adaptive temporal weighting and long-range dependency modeling. Second, Dy-SIGN improves memory efficiency via equilibrium-based training, but assumes convergence to a fixed point and may lose fine-grained temporal information due to binary encoding. Third, Delay-DSGN~\cite{Delay_DSGN_25} introduces learnable delay kernels but relies on local aggregation, limiting its ability to capture global context.

No existing approach simultaneously achieves (i) biologically plausible spike-based computation, (ii) adaptive multi-scale temporal aggregation integrating spatial and temporal information, and (iii) learnable neuron dynamics that stabilize training without equilibrium assumptions. To address this gap, we propose \textbf{ChronoSpike}, a spiking framework for DGRL that combines adaptive spiking neurons with expressive spatial and temporal aggregation. ChronoSpike has three design components. \textbf{(i)} It uses adaptive Leaky Integrate-and-Fire (LIF) neurons with learnable membrane time constants and firing thresholds, enabling stable temporal dynamics. \textbf{(ii)} It applies multi-head attentive spatial aggregation on continuous node features before spike generation, allowing adaptive neighbor weighting under evolving graphs. \textbf{(iii)} It integrates spike representations over time using a lightweight Transformer-based temporal model with learned positional encodings, capturing long-range dependencies beyond sequential spike accumulation. Together, these components enable ChronoSpike to model local structure and global temporal context while preserving the efficiency of spike-based computation.

Our primary contributions are:
\textbf{(i)} We propose \textbf{ChronoSpike}, a spiking framework for dynamic graph representation learning that integrates adaptive spiking with spatial and temporal aggregation.
\textbf{(ii)} We introduce adaptive LIF neurons with learnable membrane constants and firing thresholds, enabling stable and flexible temporal modeling without equilibrium assumptions.
\textbf{(iii)} We design a hybrid architecture combining multi-head attentive spatial aggregation with a lightweight Transformer-based temporal encoder to capture local structural patterns and long-range temporal dependencies.
\textbf{(iv)} Experiments on large-scale dynamic graph benchmarks show that ChronoSpike outperforms existing spiking and non-spiking methods while maintaining favorable computational efficiency.

\begin{figure}[!ht]
    \centering
    \includegraphics[width=0.6\linewidth, trim=0.5cm 0 1.8cm 0, % left bottom right top
        clip]{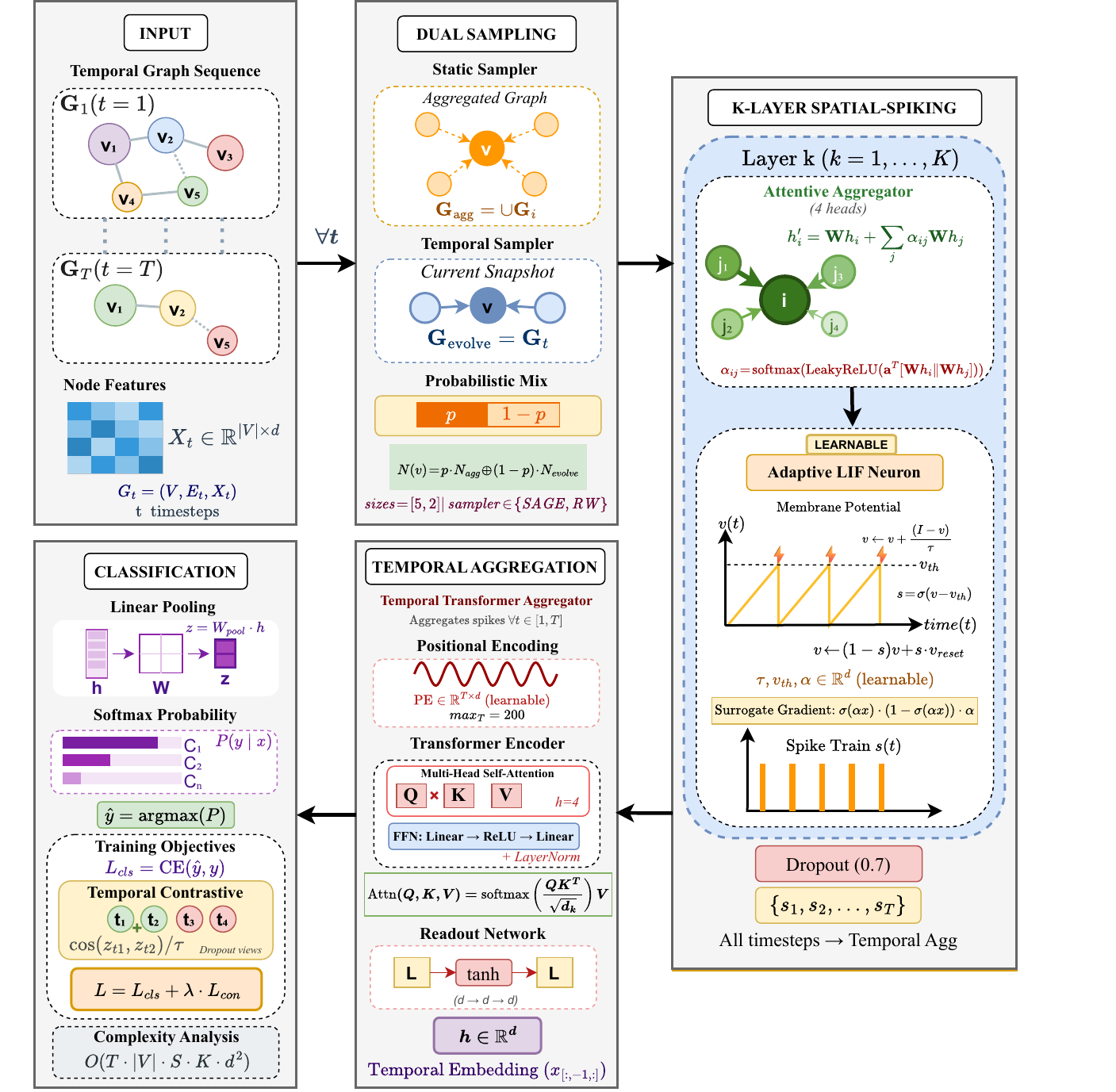}
    \caption{Overview of the ChronoSpike framework. The dynamic graph is represented as a sequence of snapshots. At each time step, node features are aggregated from sampled neighborhoods using a multi-head attentive spatial aggregator and encoded into spike signals via adaptive LIF neurons. Temporal dependencies across snapshots are captured by a lightweight Transformer-based temporal aggregation module with learnable positional encodings. The final node representations are used for downstream prediction.}
    \label{fig:chronospike}
\end{figure}

\section{Related Work}
\textbf{Dynamic Graph Representation Learning. }
DGRL has evolved through several paradigms with distinct scalability-fidelity trade-offs. Recent continuous-time models achieve strong performance but face trade-offs between complexity and accuracy. DyGFormer~\cite{yuBetterDynamicGraph2023} employs a Transformer, whereas GraphMixer~\cite{cong2023graphmixer} uses an MLP-Mixer; both rely on fixed-length history windows, limiting fine-grained temporal resolution and long horizons. FreeDyG~\cite{tianFreeDyGFrequencyEnhanced2024} exploits frequency-domain representations for linear scaling, whereas DG-Mamba~\cite{yuanDGMambaRobustEfficient2025} adapts selective state-space models for efficiency, but both rely on global temporal encodings that fail to capture localized structural shifts in event-driven graphs~\cite{fengComprehensiveSurveyDynamic2026,yiTGBSeqBenchmarkChallenging2025}. \textit{Memory-augmented architectures} maintain explicit node states but incur storage and sequential dependencies. EvolveGCN~\cite{pareja2020evolvegcn} evolves GCN parameters via RNNs, while DyRep~\cite{trivedi2019dyrep} and JODIE~\cite{kumar2019predicting_jodie} update latent representations through temporal point processes. These methods require $O(|V|d)$ memory and suffer gradient pathologies over long sequences~\cite{hochreiter1997long}. \textit{Attention-based variants}, such as TGAT~\cite{tgat}, TGN~\cite{benTemporal}, and TREND~\cite{wen2022trend}, mitigate vanishing gradients via self-attention over temporal neighborhoods but incur costs scaling with graph density and neighborhood size. Neural Temporal Walks~\cite{jinNeuralTemporalWalks2022} incorporate higher-order motifs via learnable walk policies, improving expressiveness at the cost of sampling overhead. Hyperbolic embeddings~\cite{xuScalableEffectiveTemporal2025} capture hierarchical structure but assume quasi-static metrics, violated in networks with rapid topological change~\cite{jiaoSurveyTemporalInteraction2025}. \textit{Classical methods} based on tensor decompositions~\cite{acar2008unsupervised}, incremental spectral updates~\cite{chen2015fast,zhang2018timers}, and temporal random walks~\cite{grover2016node2vec,perozzi2014deepwalk,nguyen2018continuous,mitrovic2019dyn2vec} offer efficiency but lack capacity for complex tasks and impose restrictive assumptions (e.g., low-rank stability, Markovian transitions). These limitations motivate integrating spiking dynamics with learnable temporal aggregation to achieve efficiency without sacrificing expressiveness.

\textbf{Spiking Neural Networks. }
SNNs provide event-driven computation with binary activations, enabling energy efficiency on neuromorphic hardware~\cite{roySpikebasedMachineIntelligence2019}. Surrogate gradient methods enable backpropagation by approximating the non-differentiable Heaviside function~\cite{wu2021training,meng2022training-ICCV,xiao2021training}. This creates a forward-backward mismatch that destabilizes optimization, especially in deep networks~\cite{thiele2020spikegrad,bellec2018long}. Adaptive spiking neurons with learnable membrane time constants~\cite{fangIncorporatingLearnableMembrane2021}, adaptive reset mechanisms~\cite{huangARLIFAdaptiveReset2025,zhangDALIFDualAdaptive2025,yanTrainingHighPerformance2025}, or multi-threshold dynamics~\cite{yangMMTSNNMarkovianDecision2026} improve temporal expressiveness but introduce hyperparameters sensitive to data and initialization. Spiking Transformers extend attention to spike trains. QKFormer~\cite{zhouQKFormerHierarchicalSpiking2024} processes Q-K attention hierarchically across spiking layers. Recent studies~\cite{guoSpikingTransformerIntroducing2025,xiaoRethinkingSpikingSelfAttention2025} introduce addition-only attention for hardware efficiency. Spatial-temporal variants~\cite{leeSpikingTransformerSpatialTemporal2025,zhangSTAASNNSpatialTemporalAttention2025,yuFSTASNNFrequencyBasedSpatialTemporal2025,LU2025ests} partition heads to reduce redundancy. Conversion-based approaches~\cite{diehl2015fast,rueckauer2017conversion,deng2021optimal} map trained ANNs to SNNs but require long inference windows (often hundreds of steps), reducing efficiency gains. These methods show SNN potential for static tasks but expose challenges in temporal credit assignment and latency-accuracy trade-offs, amplified in dynamic graphs.

\textbf{Spiking Neural Networks for Dynamic Graphs. }
Applications of SNNs to graph data mostly target static graphs or limited temporal dynamics. Static spiking GCNs~\cite{zhu2022spiking,xu2021exploiting} perform sparse message passing but lack temporal evolution mechanisms. Knowledge graph embeddings with spike-based encodings~\cite{chian2021learning,DBLP:conf/ijcnn/DoldG21} and synaptic delay models~\cite{xiao2024temporal} capture relational structure but do not model continuous temporal dependencies. Riemannian spiking GNNs~\cite{sunSpikingGraphNeural2024} extend to non-Euclidean geometry but focus on static manifolds. For \textit{dynamic graphs}, SpikeNet~\cite{spikenet23} replaces RNN encoders with spiking neurons, processes snapshots, and updates hidden states via spike accumulation. While this reduces memory compared to continuous RNNs, it propagates information only from the previous step, failing to aggregate long-range dependencies or adaptively weight historical snapshots. Dy-SIGN~\cite{dysign24} reduces memory via implicit differentiation at equilibrium~\cite{bai2019deep,gu2020implicit}, instead of full backpropagation through time, and uses early-layer skip connections to mitigate information loss from binary propagation. However, Dy-SIGN assumes convergence to a fixed point, which may fail in non-stationary graphs. Delay-DSGN~\cite{Delay_DSGN_25} introduces learnable Gaussian delay kernels for aggregation to stabilize gradients, but relies on fixed-form priors that restrict temporal modeling to local shifts and limit global dependency capture. Recent studies on adversarial robustness~\cite{daiMemFreezingNovelAdversarial,lunEffectiveSparseAdversarial2025} and adaptive dynamics~\cite{baronigAdvancingSpatiotemporalProcessing2025,zhaoDynamicReactiveSpiking2024} expose SNN vulnerabilities and their capacity for temporal adaptation. Signal-SGN++~\cite{zhengSignalSGNTopologyEnhancedTimeFrequency2025} combines time-frequency decomposition with spiking graph convolution for skeleton-based action recognition, showing the value of multi-scale temporal representations in spiking models. SGNNBench~\cite{zhangSGNNBenchHolisticEvaluation2025} evaluates 18 datasets and shows that spiking models often underperform continuous GNNs on large-scale benchmarks, especially in balancing accuracy and efficiency.

\section{Problem Definition}
We study DGRL, where the input is a sequence of graph snapshots at discrete time steps, with evolving topology and node attributes. Formally, a dynamic graph is a sequence of $T$ snapshots capturing network states at times $1,\dots,T$. The objective is to learn a time-dependent embedding for each node that encodes its structural context at each time and the temporal evolution of its interactions. We focus on temporal node classification, where dynamic embeddings predict node labels at future time steps. Effective representations must be discriminative, capturing both who and when a node interacts, to enable accurate, time-dependent classification. The full formulation is provided in Appendix~\ref{app:problem}.

\section{ChronoSpike}
\label{sec:chronospike}
We propose \textbf{ChronoSpike}, an adaptive spiking framework for dynamic graph representation learning that addresses three fundamental limitations of existing approaches: sequential spike-propagation methods cannot capture long-range dependencies, binary encoding schemes lose information through equilibrium assumptions, and local aggregation kernels lack global temporal context. ChronoSpike integrates three design components to achieve temporal expressiveness, computational efficiency, and training stability: \emph{(i)} adaptive LIF neurons with learnable membrane time constants and firing thresholds, enabling flexible temporal dynamics without equilibrium constraints; \emph{(ii)} multi-head attentive spatial aggregation on continuous features before spike encoding, preserving fine-grained information while introducing sparsity; and \emph{(iii)} a lightweight Transformer-based temporal encoder with learned positional encodings, capturing long-range dependencies with $O(T \cdot d)$ activation/state memory plus an $O(T^2)$ per-node self-attention term that is bounded for the horizons we evaluate (see Appendix~\ref{app:complexity}). Figure~\ref{fig:chronospike} illustrates the complete architecture. We provide formal complexity analysis, stability guarantees, and convergence proofs in Appendix~\ref{app:theoretical_analysis}, with algorithmic specifications in Appendix~\ref{app:algorithm}.

\subsection{Problem Setting and Notation}
We model a dynamic graph as a sequence of $T$ snapshots $G={G_1,\dots,G_T}$, where each snapshot $G_t=(V,E_t,X_t)$ has a fixed node set $V$ ($|V|=N$), a time-dependent edge set $E_t$, and node features $X_t\in\mathbb{R}^{N\times d}$. For a node $v\in V$, its feature at time $t$ is $\mathbf{x}_v^{(t)}$, and its neighborhood is $\mathcal{N}_t(v)=\{u\in V:(u,v)\in E_t\}$. While the node set is fixed, both node features and connectivity evolve over time. The objective is to learn a mapping $\Phi{\Theta}:(X_1,E_1,\dots,X_T,E_T)\mapsto Z$, where $Z\in\mathbb{R}^{N\times d'}$ contains a $d'$-dimensional embedding per node, with $d'\ll d$. The embeddings should encode structural and temporal information and support downstream tasks, including temporal node classification. Full formulation, assumptions, and notation are in Appendix~\ref{app:notation}.

\subsection{Dynamic Graph Sampling and Spatial Encoding}
\label{subsec:sampling}
ChronoSpike uses inductive neighborhood sampling at each time step to ensure scalability on large dynamic graphs. For a target node $v$, a fixed-size neighbor set is formed by sampling from the cumulative adjacency and newly formed edges at timestep $t$, where a ratio $p$ controls the proportion from each source. This hybrid strategy captures long-term structural context and recent temporal changes while bounding snapshot cost and supporting inductive generalization.

Given the sampled neighborhood, ChronoSpike performs spatial aggregation with a learnable aggregation operator. For node $v$ at time $t$, the intermediate spatial representation is
\begin{equation}
    \mathbf{h}_v^{(t)} = \mathbf{W}_s \mathbf{x}_v^{(t)} \;+\;
    \sum_{u \in \tilde{\mathcal{N}}_t(v)} \alpha_{vu}^{(t)} \mathbf{W}_n \mathbf{x}_u^{(t)}
    \label{eq:spatial}
\end{equation}
where $\mathbf{x}_v^{(t)}$ is the input feature, $\mathbf{W}_s, \mathbf{W}_n \in \mathbb{R}^{d \times d_h}$ are learnable weights, and $\tilde{\mathcal{N}}_t(v)$ denotes the fixed-size neighborhood obtained by the inductive sampling procedure described above (Appendix~\ref{app:spatial_encoding}, Eq.~\ref{eq:8}). The attention coefficients $\alpha_{vu}^{(t)}$ are computed per head using learned projections and normalized over the sampled neighborhood. This mechanism adaptively weights neighbors under evolving topologies while avoiding full adjacency matrix multiplications, maintaining efficiency on large graphs. See Appendix~\ref{app:spatial_encoding} for full sampling and attention details.

\subsection{Spike Encoding via Adaptive LIF Neurons}
\label{subsec:lif}
Adaptive LIF neurons convert aggregated spatial representations into sparse temporal spikes. For each node $v$ and hidden dimension $i$, the membrane potential evolves as
\begin{equation}
    u_{v,i}^{(t)} = u_{v,i}^{(t-1)} + \frac{1}{\tau_i}
    \left( h_{v,i}^{(t)} - (u_{v,i}^{(t-1)} - u_{\mathrm{reset}}) \right)
    \label{eq:lif}
\end{equation}
where $\tau_i$ is a learnable time constant, $u_{\mathrm{reset}}$ is the reset potential, and $h_{v,i}^{(t)}$ is the synaptic input from Eq.~\eqref{eq:spatial}. Neuron parameters are learned per feature channel, enabling heterogeneous temporal dynamics. A spike is emitted when the membrane potential exceeds a learnable threshold $V_{\mathrm{th},i}$:
\begin{equation}
    s_{v,i}^{(t)} = \mathbb{I}\big(u_{v,i}^{(t)} \ge V_{\mathrm{th},i}\big)
\end{equation}
After firing, the membrane potential is reset to prevent unbounded accumulation. This adaptive LIF design enables heterogeneous temporal dynamics while producing sparse spike representations. During training, a sigmoid-based surrogate gradient approximates the non-differentiable spike function, enabling stable end-to-end optimization. Appendix~\ref{app:adaptive_lif} provides the full LIF update, reset, and training details.

\subsection{Temporal Spike Integration}
\label{subsec:temporal}
ChronoSpike integrates temporal information by aggregating spike representations across graph snapshots. The spike outputs ${\mathbf{s}_v^{(1)}, \dots, \mathbf{s}_v^{(T)}}$ are stacked into a sequence and augmented with learnable positional encodings to preserve order. The sequence is processed by a lightweight Transformer encoder to capture long-range dependencies:
\begin{equation}
    \mathbf{Z}v = \mathrm{Transformer}\big(\mathbf{s}v^{(1:T)} + \mathbf{P}{1:T}\big)
\end{equation}
where $\mathbf{P}{1:T}$ are positional encodings. The embedding $\mathbf{z}_v$ from the last timestep is used for downstream prediction. This design decouples temporal modeling from recurrent state propagation and avoids storing dense hidden states, enabling learning over long horizons with controlled memory use. See Appendix~\ref{app:temporal_integration} for spike sequence encoding and Transformer formulation.

\subsection{Joint Optimization with Contrastive Regularization}
\label{subsec:optimization}
Given the final node embedding $\mathbf{z}_v$, ChronoSpike produces predictions using a linear classifier followed by softmax:
\begin{equation}
    \hat{\mathbf{y}}_v = \mathrm{Softmax}(\mathbf{W}_c \mathbf{z}_v + \mathbf{b}_c)
\end{equation}
where $\mathbf{W}_c \in \mathbb{R}^{d' \times C}$ is a learnable weight matrix, $\mathbf{b}_c \in \mathbb{R}^C$ is the classifier bias, and $C$ is the number of classes. The model is trained end-to-end with supervised cross-entropy over labeled nodes. ChronoSpike applies temporal contrastive regularization for representation robustness. For each embedding $\mathbf{z}_v$, two stochastic views are generated via feature dropout, and a contrastive loss enforces consistency between them. Let $\mathbf{z}v$ and $\mathbf{z}v'$ denote the two views. The contrastive objective is:
\begin{equation}
    \mathcal{L}{\mathrm{con}} = - \sum{v \in \mathcal{V}}
    \log \frac{\exp\big(\mathrm{sim}(\mathbf{z}v, \mathbf{z}v') / \tau \big)}
    {\sum{u \in \mathcal{V}} \exp\big(\mathrm{sim}(\mathbf{z}v, \mathbf{z}u') / \tau \big)}
\end{equation}
where $\mathrm{sim}(\cdot,\cdot)$ is cosine similarity and $\tau$ is a temperature; in practice the sums are evaluated over the current mini-batch $\mathcal{V}_b\subseteq\mathcal{V}$ rather than the full node set, matching Algorithm~\ref{alg:chronospike-training}. The overall training objective is:
\begin{equation}
    \mathcal{L} = \mathcal{L}{\mathrm{cls}} + \lambda \mathcal{L}{\mathrm{con}}
\end{equation}
where $\mathcal{L}{\mathrm{cls}}$ is cross-entropy and $\lambda$ weights the contrastive term, which is used only during training and not inference. Since spike generation involves non-differentiable thresholds, ChronoSpike uses surrogate gradients in backpropagation. Spike derivative is approximated by a smooth surrogate, while the forward pass preserves spike dynamics, enabling stable optimization without altering signal propagation. Full loss definitions and training setup are in Appendix~\ref{app:optimization}.

\section{Experiments}
\label{sec:exp}
\subsection{Experimental Setup}

\subsubsection{Datasets}
\label{sec:datasets}

 \begin{wraptable}{r}{0.4\textwidth}
    \centering
    \footnotesize
    \caption{Statistics of datasets used in our experiments.}
    \label{tab:dataset-stats}
    \resizebox{\linewidth}{!}{
    \begin{tabular}{lrrrr}
        \toprule
        \textbf{Dataset} & \textbf{\#Nodes} & \textbf{\#Edges} & \textbf{\#Steps} & \textbf{\#Classes} \\
        \midrule
        DBLP   & 28,085     & 236,894     & 27  & 10 \\
        Tmall  & 577,314    & 4,807,545   & 186 & 5  \\
        Patent & 2,738,012  & 13,960,811  & 25  & 6  \\
        \bottomrule
    \end{tabular}
    }
\end{wraptable}

We evaluate ChronoSpike on three real-world dynamic graph datasets: DBLP, Tmall~\cite{MMDNE_dblp_and_tmall}, and Patent~\cite{patent}, commonly used in prior DGRL works. Dataset details are in Appendix~\ref{app:datasets}. These datasets vary in scale, temporal granularity, and structural dynamics, from moderately sized graphs with short temporal depth to large graphs with millions of nodes and long horizons. Summary statistics are in Table~\ref{tab:dataset-stats}.

\subsubsection{Evaluation Metrics}\label{sec:metrics}
We evaluate temporal node classification using Macro-F1 and Micro-F1, standard metrics in DGRL. Let $\mathcal{C}$ be the set of classes. For each class $c\in\mathcal{C}$,
$\mathrm{Precision}_c=\frac{\mathrm{TP}_c}{\mathrm{TP}_c+\mathrm{FP}_c}$ and
$\mathrm{Recall}_c=\frac{\mathrm{TP}_c}{\mathrm{TP}_c+\mathrm{FN}_c}$,
where $\mathrm{TP}_c$, $\mathrm{FP}_c$, and $\mathrm{FN}_c$ are true positives, false positives, and false negatives. The class-wise F1 score, $\mathrm{F1}_c=\frac{2 \cdot \mathrm{Precision}_c \cdot \mathrm{Recall}_c}{\mathrm{Precision}_c+\mathrm{Recall}_c}$.
$\mathrm{Macro\text{-}F1}=\frac{1}{|\mathcal{C}|}\sum{c\in\mathcal{C}}\mathrm{F1}_c$, while $\mathrm{Micro\text{-}F1}=\frac{2\sum_c \mathrm{TP}_c}{2\sum_c \mathrm{TP}_c+\sum_c \mathrm{FP}_c+\sum_c \mathrm{FN}_c}$.
Efficiency metrics, including trainable parameters and average training time per epoch under identical settings, are also reported. (Details in Appendix~\ref{app:metrics}.)

\subsubsection{Implementation Details}
We compare ChronoSpike with twelve state-of-the-art baselines (Details in Appendix \ref{app:baseline}), including static graph methods DeepWalk \cite{perozzi2014deepwalk} and Node2Vec \cite{grover2016node2vec}; shallow dynamic graph methods HTNE \cite{HTNE}, M$^2$DNE \cite{MMDNE_dblp_and_tmall}, and DyTriad \cite{DynamicTriad}; neural dynamic graph methods, including message passing and recurrent architectures MPNN \cite{MPNN}, JODIE \cite{kumar2019predicting_jodie}, and EvolveGCN \cite{pareja2020evolvegcn}; attention-based methods TGAT \cite{tgat}; and SGNNs SpikeNet \cite{spikenet23}, Dy-SIGN \cite{dysign24}, and Delay-DSGN \cite{Delay_DSGN_25}. ChronoSpike is trained with mini-batch stochastic optimization using AdamW for 100 epochs. The batch size is fixed to 1024 for all datasets. Learning rates are dataset-specific and fixed across runs: $5\times10^{-3}$ for DBLP and Tmall, and $1\times10^{-2}$ for Patent. During inference, batch sizes depend on dataset scale: 10,000 for Patent and 200,000 for DBLP and Tmall. Complete implementation details are provided in Appendix \ref{app:implementation}.
% The code is available\footnote{https://github.com/anonymous/chronospike}.

% Unless otherwise stated, ChronoSpike employs a two-layer spatial aggregation architecture with hidden dimensions (128, 64), four attention heads, and inductive neighborhood sampling with fixed fan-out sizes (5, 2), where neighbors are sampled from a probabilistic mixture of cumulative and newly evolving edges controlled by the sampling ratio $p$. Adaptive LIF neurons are used throughout the model with $\tau=1.0$, $V_{\text{reset}}=0$, $V_{\text{th}}=1.0$, and a sigmoid surrogate gradient with slope $\alpha=1.0$, and neuron states are reset after each temporal forward pass. Temporal aggregation is performed using a single-layer Transformer encoder with four attention heads and learnable positional encodings. A temporal contrastive regularization term with weight 0.1 and temperature 0.5 is applied during training. Dropout rates are set to 0.7 for model regularization and 0.1 for contrastive view generation, and all regularization terms are disabled during inference. 

% Complete implementation details, including hardware and software, are provided in Appendix \ref{app:implementation}.

\begin{table}[!ht]
\centering
\caption{Macro and Micro F1 scores (\%) of node classification tasks with different training ratios. The results are averaged over five runs. The metric is reported as the mean ± standard deviation, calculated over five experimental runs. \textbf{Bold} and \underline{underlined} values indicate the best and second-best performance in each column, respectively. The symbol ``-" denotes time-consuming.}
\label{perf_comparison}
\setlength{\tabcolsep}{1.3mm}
\resizebox{\linewidth}{!}{
\begin{tabular}{c|c|c|cccccccccccc|c}
\toprule[1.5pt]
{\textbf{Datasets}}                                 & {\textbf{Metrics}}       & \textbf{Tr. Ratio} & \textbf{DeepWalk} & \textbf{Node2Vec} & \textbf{HTNE} & \textbf{M$^2$DNE} & \textbf{DyTriad} & \textbf{MPNN}   & \textbf{JODIE}  & \textbf{EvolveGCN} & \textbf{TGAT}   & \textbf{SpikeNet} & \textbf{Dy-SIGN}            & \textbf{Delay-DSGN}                            & \cellcolor{gray!32}\textbf{ChronoSpike}        \\
\midrule[1pt]
\multirow{6}{*}{\textcolor{mygreen}{\textbf{DBLP}}} &
\multirow{3}{*}{{Ma-F1}}                            & 40\%                     & 67.08              & 66.07             & 67.68             & 69.02         & 60.45             & 64.19${\pm0.4}$  & 66.73${\pm1.0}$ & 67.22${\pm0.3}$ & 71.18${\pm0.4}$    & 70.88${\pm0.4}$ & 70.94${\pm0.1}$   & \underline{72.32${\pm0.4}$} & \cellcolor{mygreen!25}\textbf{75.84${\pm0.3}$}                                                  \\
                                                    &                          & 60\%               & 67.17             & 66.81             & 68.24         & 69.48             & 64.77            & 63.91${\pm0.3}$ & 67.32${\pm1.1}$ & 69.78${\pm0.8}$    & 71.74${\pm0.5}$ & 71.98${\pm0.3}$   & 72.07${\pm0.1}$             & \underline{74.16${\pm0.3}$}                    & \cellcolor{mygreen!25}\textbf{77.79${\pm0.4}$} \\
                                                    &                          & 80\%               & 67.12             & 66.93             & 68.36         & 69.75             & 66.42            & 65.05${\pm0.5}$ & 67.53${\pm1.3}$ & 71.20${\pm0.7}$    & 72.15${\pm0.3}$ & 74.65${\pm0.5}$   & 74.67${\pm0.5}$             & \underline{76.54${\pm0.4}$}                    & \cellcolor{mygreen!25}\textbf{79.13${\pm0.3}$} \\
\cmidrule{2-16}
                                                    & \multirow{3}{*}{{Mi-F1}} & 40\%               & 66.53             & 66.80             & 67.68         & 69.23             & 65.13            & 65.72${\pm0.4}$ & 68.44${\pm0.6}$ & 69.12${\pm0.8}$    & 71.10${\pm0.2}$ & 71.98${\pm0.3}$   & 71.90${\pm0.1}$             & \underline{72.56${\pm0.2}$}                    & \cellcolor{mygreen!25}\textbf{76.80${\pm0.2}$} \\
                                                    &                          & 60\%               & 66.89             & 67.37             & 68.24         & 69.47             & 66.80            & 66.79${\pm0.6}$ & 68.51${\pm0.8}$ & 70.43${\pm0.6}$    & 71.85${\pm0.4}$ & 72.35${\pm0.8}$   & 72.61${\pm0.4}$             & \underline{74.44${\pm0.3}$}                    & \cellcolor{mygreen!25}\textbf{78.47${\pm0.3}$} \\
                                                    &                          & 80\%               & 66.38             & 67.31             & 68.36         & 69.71             & 66.95            & 67.74${\pm0.3}$ & 68.80${\pm0.9}$ & 71.32${\pm0.5}$    & 73.12${\pm0.3}$ & 74.86${\pm0.5}$   & 74.96${\pm0.2}$             & \underline{76.87${\pm0.5}$}                    & \cellcolor{mygreen!25}\textbf{79.33${\pm0.1}$} \\
\midrule
\multirow{6}{*}{\textcolor{red}{\textbf{Tmall}}}    & \multirow{3}{*}{{Ma-F1}} & 40\%               & 49.09             & 54.37             & 54.81         & 57.75             & 44.98            & 47.71${\pm0.8}$ & 52.62${\pm0.8}$ & 53.02${\pm0.7}$    & 56.90${\pm0.6}$ & 58.84${\pm0.4}$   & 57.48${\pm0.1}$             & \underline{60.25${\pm0.1}$}                    & \cellcolor{red!25}\textbf{60.80${\pm0.4}$}     \\
                                                    &                          & 60\%               & 49.29             & 54.55             & 54.89         & 57.99             & 48.97            & 47.78${\pm0.7}$ & 54.02${\pm0.6}$ & 54.99${\pm0.7}$    & 57.61${\pm0.7}$ & 61.13${\pm0.8}$   & 60.94${\pm0.2}$             & \underline{62.56${\pm0.3}$}                    & \cellcolor{red!25}\textbf{62.86${\pm0.3}$}     \\
                                                    &                          & 80\%               & 49.53             & 54.58             & 54.93         & 58.47             & 51.16            & 50.27${\pm0.5}$ & 54.17${\pm0.2}$ & 55.78${\pm0.6}$    & 58.01${\pm0.7}$ & 62.40${\pm0.6}$   & 61.89${\pm0.1}$             & \underline{64.02${\pm0.2}$}                    & \cellcolor{red!25}\textbf{64.74${\pm0.2}$}     \\
\cmidrule{2-16}
                                                    & \multirow{3}{*}{{Mi-F1}} & 40\%               & 57.11             & 60.41             & 54.81         & 64.21             & 53.24            & 57.82${\pm0.7}$ & 58.36${\pm0.5}$ & 59.96${\pm0.7}$    & 62.05${\pm0.5}$ & 63.52${\pm0.7}$   & 62.93${\pm0.3}$             & \underline{64.32${\pm0.1}$}                    & \cellcolor{red!25}\textbf{66.19${\pm0.6}$}     \\
                                                    &                          & 60\%               & 57.34             & 60.56             & 54.89         & 64.38             & 56.88            & 57.66${\pm0.5}$ & 60.28${\pm0.3}$ & 61.19${\pm0.6}$    & 62.92${\pm0.4}$ & 64.84${\pm0.4}$   & 64.10${\pm0.3}$             & \underline{66.20${\pm0.2}$}                    & \cellcolor{red!25}\textbf{67.79${\pm0.3}$}     \\
                                                    &                          & 80\%               & 57.88             & 60.66             & 54.93         & 64.65             & 60.72            & 58.07${\pm0.6}$ & 60.49${\pm0.3}$ & 61.77${\pm0.6}$    & 63.32${\pm0.7}$ & 66.10${\pm0.3}$   & 65.82${\pm0.2}$             & \underline{67.88${\pm0.4}$}                    & \cellcolor{red!25}\textbf{69.05${\pm0.4}$}     \\
\midrule
\multirow{6}{*}{\textcolor{blue}{\textbf{Patent}}}  & \multirow{3}{*}{{Ma-F1}} & 40\%               & 72.32${\pm0.9}$   & 69.01${\pm0.9}$   & -             & -                 & -                & -               & 77.57${\pm0.8}$ & 79.67${\pm0.4}$    & 81.51${\pm0.4}$ & 83.53${\pm0.6}$   & 83.57${\pm0.3}$             & \underline{83.72${\pm0.1}$}                    & \cellcolor{blue!25}\textbf{86.20${\pm0.3}$}    \\
                                                    &                          & 60\%               & 72.25${\pm1.2}$   & 69.08${\pm0.9}$   & -             & -                 & -                & -               & 77.69${\pm0.6}$ & 79.76${\pm0.5}$    & 81.56${\pm0.6}$ & 83.85${\pm0.7}$   & 83.77${\pm0.2}$             & \underline{84.01${\pm0.1}$}                    & \cellcolor{blue!25}\textbf{86.34${\pm0.2}$}    \\
                                                    &                          & 80\%               & 72.05${\pm1.1}$   & 68.99${\pm1.0}$   & -             & -                 & -                & -               & 77.67${\pm0.4}$ & 80.13${\pm0.4}$    & 81.57${\pm0.5}$ & 83.90${\pm0.6}$   & 83.91${\pm0.2}$             & \underline{84.20${\pm0.1}$}                    & \cellcolor{blue!25}\textbf{86.09${\pm0.1}$}    \\
\cmidrule{2-16}
                                                    & \multirow{3}{*}{{Mi-F1}} & 40\%               & 71.57${\pm1.3}$   & 68.14${\pm0.9}$   & -             & -                 & -                & -               & 77.64${\pm0.7}$ & 79.39${\pm0.5}$    & 80.79${\pm0.7}$ & 83.48${\pm0.8}$   & 83.50${\pm0.2}$             & \underline{83.66${\pm0.1}$}                    & \cellcolor{blue!25}\textbf{86.03${\pm0.3}$}    \\
                                                    &                          & 60\%               & 71.53${\pm1.0}$   & 68.20${\pm0.7}$   & -             & -                 & -                & -               & 77.89${\pm0.5}$ & 79.75${\pm0.3}$    & 80.81${\pm0.6}$ & 83.80${\pm0.7}$   & 83.47${\pm0.1}$             & \underline{83.97${\pm0.1}$}                    & \cellcolor{blue!25}\textbf{86.16${\pm0.2}$}    \\
                                                    &                          & 80\%               & 71.38${\pm1.2}$   & 68.10${\pm0.5}$   & -             & -                 & -                & -               & 77.93${\pm0.4}$ & 80.01${\pm0.3}$    & 80.93${\pm0.6}$ & 83.88${\pm0.9}$   & 83.90${\pm0.2}$             & \underline{84.15${\pm0.1}$}                    & \cellcolor{blue!25}\textbf{85.94${\pm0.1}$}    \\
\bottomrule[1.5pt]
\end{tabular}
}
\end{table}

%%%%%%%%%%%%%%%%%%%%%%%%%%%%%%%%%%%

\begin{table}[!ht]
    \centering
    \scriptsize
    \caption{Ablation study of ChronoSpike for node classification on DBLP, Tmall, and Patent datasets. Tr. Ratio denotes the training ratio. For ChronoSpike, we report mean ± standard deviation over five runs. Ablation results are reported as single runs, following common practice.}
    \label{tab:ablation_node_cl}
    \setlength{\tabcolsep}{1.3mm}
    \resizebox{\linewidth}{!}{%
        \begin{tabular}{c|c|c|P{1.4cm}P{1.8cm}P{1.4cm}P{1.7cm}P{1.5cm}P{1.35cm}P{1.2cm}|c}
            %{c|c|c|P{1.4cm}P{1.8cm}P{1.4cm}P{1.7cm}P{1.5cm}P{1.35cm}P{1.5cm}|c}
            \toprule[1.5pt]
            \textbf{Datasets} & \textbf{Metrics}                        & \textbf{Tr. Ratio}
                              & w/o Adaptive LIF
                              & w/o Temporal Transformer
                              & w/o Attention
                              & w/o Contrastive Loss
                              & w/o SNN (ReLU)
                              & LSTM Temporal
                              & Static (No Temporal)
                              & \cellcolor{gray!32}\textbf{ChronoSpike}                                                                                                                                                                                                         \\
            \midrule[1pt]
            % ===================== DBLP =====================
            \multirow{6}{*}{\textcolor{mygreen}{\textbf{DBLP}}}
                              & \multirow{3}{*}{Ma-F1}
                              & 40\%                                    & 0.7353             & 0.7084      & 0.7412 & 0.7431      & \ul{0.7480} & 0.7305      & 0.5104      & \cellcolor{mygreen!25}\textbf{0.7584${\pm0.003}$}                                                   \\
                              &                                         & 60\%               & 0.7601      & 0.7237 & 0.7500      & 0.7548      & \ul{0.7666} & 0.7467      & 0.7408                                          & \cellcolor{mygreen!25}\textbf{0.7779${\pm0.004}$} \\
                              &                                         & 80\%               & \ul{0.7866} & 0.7548 & 0.7707      & 0.7845      & 0.7860      & 0.7746      & 0.0456                                          & \cellcolor{mygreen!25}\textbf{0.7913${\pm0.003}$} \\
            \cmidrule{2-11}
                              & \multirow{3}{*}{Mi-F1}
                              & 40\%                                    & 0.7468             & 0.7166      & 0.7476 & 0.7485      & \ul{0.7592} & 0.7361      & 0.6138      & \cellcolor{mygreen!25}\textbf{0.7680${\pm0.002}$}                                                   \\
                              &                                         & 60\%               & 0.7622      & 0.7370 & 0.7570      & 0.7585      & \ul{0.7684} & 0.7549      & 0.7454                                          & \cellcolor{mygreen!25}\textbf{0.7847${\pm0.003}$} \\
                              &                                         & 80\%               & 0.7859      & 0.7609 & 0.7752      & 0.7842      & \ul{0.7888} & 0.7752      & 0.2938                                          & \cellcolor{mygreen!25}\textbf{0.7933${\pm0.001}$} \\
            \midrule
            % ===================== TMALL =====================
            \multirow{6}{*}{\textcolor{red}{\textbf{Tmall}}}
                              & \multirow{3}{*}{Ma-F1}
                              & 40\%                                    & 0.5951             & 0.5666      & 0.5952 & \ul{0.6010} & 0.5983      & 0.5845      & 0.1086      & \cellcolor{red!25}\textbf{0.6080${\pm0.004}$}                                                       \\
                              &                                         & 60\%               & 0.6030      & 0.5850 & 0.6006      & 0.6048      & \ul{0.6103} & 0.6048      & 0.1086                                          & \cellcolor{red!25}\textbf{0.6286${\pm0.003}$}     \\
                              &                                         & 80\%               & 0.6187      & 0.5970 & 0.6153      & \ul{0.6248} & 0.6082      & 0.6209      & 0.1086                                          & \cellcolor{red!25}\textbf{0.6474${\pm0.002}$}     \\
            \cmidrule{2-11}
                              & \multirow{3}{*}{Mi-F1}
                              & 40\%                                    & 0.6460             & 0.6232      & 0.6422 & \ul{0.6498} & 0.6486      & 0.6385      & 0.3705      & \cellcolor{red!25}\textbf{0.6619${\pm0.006}$}                                                       \\
                              &                                         & 60\%               & 0.6563      & 0.6399 & 0.6513      & \ul{0.6596} & 0.6550      & 0.6499      & 0.3706                                          & \cellcolor{red!25}\textbf{0.6779${\pm0.003}$}     \\
                              &                                         & 80\%               & 0.6674      & 0.6518 & 0.6610      & \ul{0.6683} & 0.6592      & 0.6656      & 0.3706                                          & \cellcolor{red!25}\textbf{0.6905${\pm0.004}$}     \\
            \midrule
            % ===================== PATENT =====================
            \multirow{6}{*}{\textcolor{blue}{\textbf{Patent}}}
                              & \multirow{3}{*}{Ma-F1}
                              & 40\%                                    & 0.8307             & 0.8338      & 0.8263 & 0.8325      & 0.8269      & \ul{0.8369} & 0.8045      & \cellcolor{blue!25}\textbf{0.8620$\pm0.003$}                                                        \\
                              &                                         & 60\%               & 0.8292      & 0.8328 & 0.8253      & 0.8351      & 0.8233      & \ul{0.8379} & 0.8057                                          & \cellcolor{blue!25}\textbf{0.8634$\pm0.002$}      \\
                              &                                         & 80\%               & 0.8304      & 0.8330 & 0.8250      & 0.8309      & 0.8312      & \ul{0.8403} & 0.8048                                          & \cellcolor{blue!25}\textbf{0.8609$\pm0.001$}      \\
            \cmidrule{2-11}
                              & \multirow{3}{*}{Mi-F1}
                              & 40\%                                    & 0.8298             & 0.8321      & 0.8252 & 0.8314      & 0.8277      & \ul{0.8366} & 0.8036      & \cellcolor{blue!25}\textbf{0.8603$\pm0.003$}                                                        \\
                              &                                         & 60\%               & 0.8291      & 0.8323 & 0.8238      & 0.8347      & 0.8257      & \ul{0.8379} & 0.8033                                          & \cellcolor{blue!25}\textbf{0.8616$\pm0.002$}      \\
                              &                                         & 80\%               & 0.8304      & 0.8326 & 0.8240      & 0.8299      & 0.8305      & \ul{0.8391} & 0.8041                                          & \cellcolor{blue!25}\textbf{0.8594$\pm0.001$}      \\
            \bottomrule[1.5pt]
        \end{tabular}
    }
\end{table}

\subsection{Experimental Results}

\subsubsection{Performance Comparisons}
Table~\ref{perf_comparison} reports Macro-F1 and Micro-F1 for temporal node classification across three datasets with varying training ratios. ChronoSpike consistently outperforms the strongest baseline, Delay-DSGN~\cite{Delay_DSGN_25}, with average gains of \textbf{2.00\% Macro-F1} and \textbf{2.41\% Micro-F1}. Static methods like Deepwalk and Node2Vec underperform since they ignore temporal evolution, while traditional unsupervised dynamic models like HTNE and M$^2$DNE perform well on smaller graphs but struggle on the large Patent dataset due to high computational cost, a common issue for other methods such as MPNN and DyTriad. Supervised dynamic models (JODIE, EvolveGCN, and TGAT) benefit from more data but still lag behind all spike-based models, including ChronoSpike. Prior spiking architectures (SpikeNet, Dy-SIGN, and Delay-DSGN) are efficient, but their performance plateaus due to local kernels, binary information loss, or strictly sequential propagation, which limit the modeling of long-range structural evolution. ChronoSpike achieves consistent Micro-F1 gains of 2--6\% on DBLP, 1--3\% on Tmall, and 2--3\% on Patent over other spiking models, demonstrating scalability across datasets. By combining hybrid sampling, per-channel adaptive spiking, and a global Transformer, ChronoSpike captures fine-grained temporal dependencies and global context while avoiding recurrent-state instability and equilibrium constraints. To complement these results, we visualize training loss curves across all datasets and supervision levels in Figure \ref{fig:loss_curves}. The plots show smooth, consistent convergence behavior, with faster stabilization at higher supervision ratios (Appendix~\ref{app:train_dynamics}).

\begin{figure}[!ht]
    \centering
    \subfloat[Parameter size\label{param_size}]{%
        \includegraphics[width=0.3\linewidth]{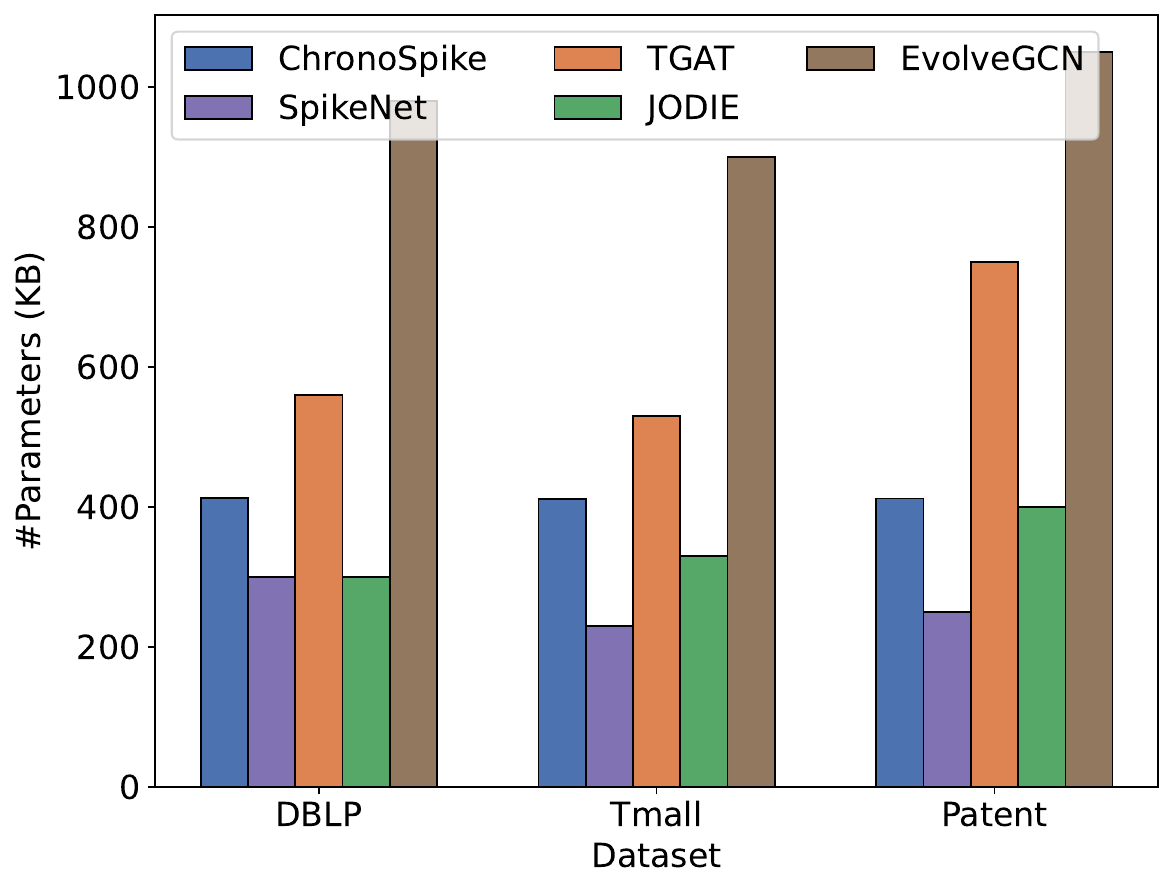}}
    % \hfill
    \hspace{15pt}
    \subfloat[Training time\label{train_time}]{%
        \includegraphics[width=0.3\linewidth]{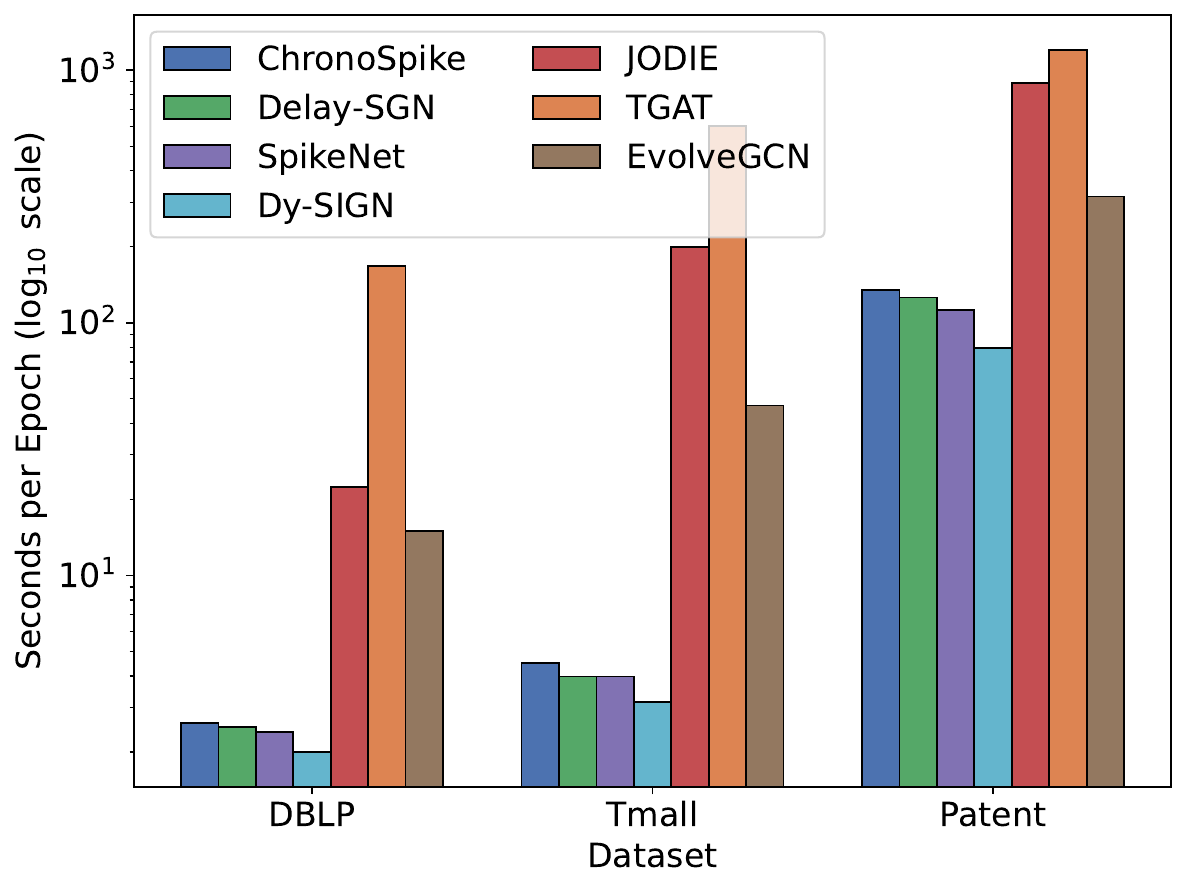}}
    \caption{Overhead comparison of different methods in terms of model parameter size and average training time per epoch. Models that do not scale to the \textit{Patent} dataset or do not report overhead statistics are excluded from the comparison.}
    \label{fig:overhead}
\end{figure}

\subsubsection{Ablation Study}
We conduct ablation studies on DBLP, Tmall, and Patent under different training ratios to assess the contribution of core components of ChronoSpike (Table~\ref{tab:ablation_node_cl}). Removing the temporal Transformer causes the largest drop on DBLP and Tmall (3--5\% Micro- and Macro-F1) and a consistent drop on Patent ($\approx$2.7--3.1\%), confirming its role in capturing long-range temporal dependencies. Replacing adaptive LIF neurons with fixed ones degrades performance across all benchmarks, showing the importance of learnable membrane dynamics. Substituting attentive aggregation with mean pooling results in moderate drops, most notably on dense Tmall and Patent, where it yields among the lowest component-level ablation scores. The contrastive loss provides consistent regularization gains across all settings. The identity of the strongest ablation variant shifts across datasets, the ReLU variant and contrastive loss ablation on DBLP and Tmall, the LSTM temporal variant on Patent, yet ChronoSpike outperforms all ablations uniformly, with margins of 2.0--2.5 percentage points over the best ablation on Patent, demonstrating that sparse spiking dynamics and Transformer-based temporal aggregation provide complementary benefits. The static baseline degrades substantially across the board, confirming that temporal modeling is indispensable regardless of dataset scale. Full results are reported in Appendix~\ref{app:ablation}. We further analyze how ChronoSpike captures temporal dependencies via interpretability studies in Appendix~\ref{app:interpretability}.

\subsubsection{Overhead Evaluation}
We compare the overhead of ChronoSpike with state-of-the-art dynamic graph learning methods, including recent spiking models and widely used non-spiking baselines.

\textbf{Parameter Size. }
Figure~\ref{param_size} reports the number of trainable parameters. ChronoSpike maintains a compact, dataset-independent parameter footprint. It contains 105K trainable parameters, which remain constant across graph scales and temporal horizons because parameters are shared across time steps. This behavior matches recent spiking models such as SpikeNet, Dy-SIGN, and Delay-DSGN, which also avoid parameter growth over time. In contrast, attention-based methods such as TGAT, which use multi-head temporal attention, require more parameters, while EvolveGCN, which evolves model parameters with RNNs, has a larger parameter size. Larger parameter budgets increase memory usage and overfitting risk. By combining spike-based computation with adaptive LIF neurons and a lightweight temporal aggregation module, ChronoSpike achieves strong performance with a modest parameter budget, comparable to or smaller than existing spiking baselines.

\textbf{Training Time. }
Figure~\ref{train_time} shows that ChronoSpike achieves competitive training efficiency relative to spiking models, SpikeNet, Dy-SIGN, and Delay-DSGN, and is much faster than recurrent and attention-based baselines JODIE, TGAT, and EvolveGCN. Although training time increases with dataset scale, especially on the large \textit{Patent} benchmark, ChronoSpike remains scalable and converges in a few epochs. We attribute this efficiency to two factors:
(i) Like SpikeNet and Delay-DSGN, ChronoSpike uses inductive neighborhood sampling and spike-based message propagation, avoiding full-graph computation during training.
(ii) Compared with recurrent and attention-based temporal models, ChronoSpike relies on lightweight spike-driven temporal dynamics, reducing computation while preserving temporal modeling capacity. Appendix~\ref{app:complexity} provides a theoretical complexity analysis.

\begin{figure}[!ht]
    \centering
    \subfloat[LR vs.\ dropout\label{fig:sensLrDroput}]{%
        \includegraphics[width=0.49\linewidth]{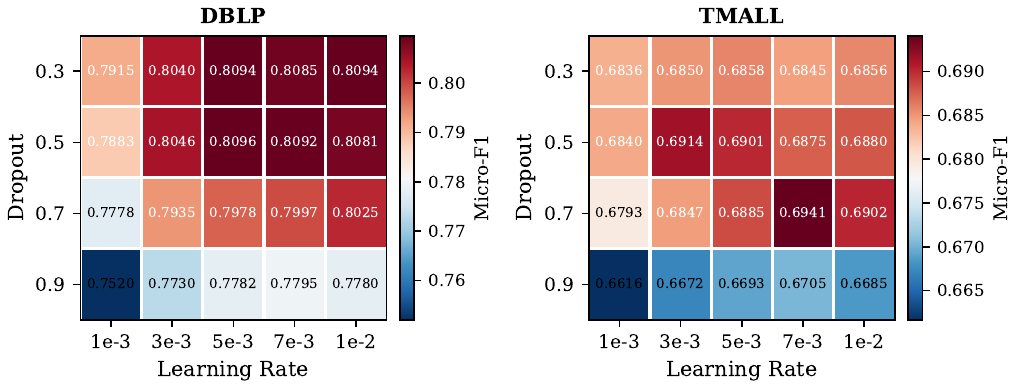}}
    \hfill
    \subfloat[$\alpha$ vs.\ contrastive weight\label{fig:sens_alpha}]{%
        \includegraphics[width=0.49\linewidth]{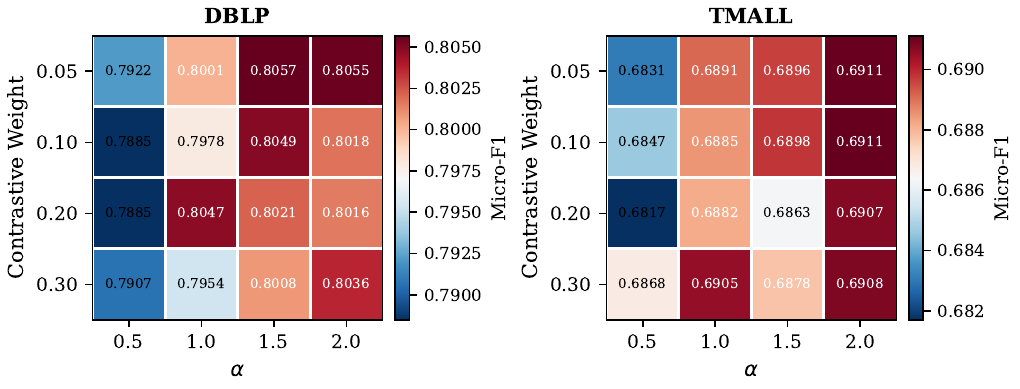}}
    \caption{Parameter sensitivity on DBLP (\textcolor{blue}{left}) and Tmall (\textcolor{myred}{right}) datasets (80\% training). (a) Learning rate vs.\ dropout rate heatmap showing optimal regions. (b) SNN parameter $\alpha$ vs.\ contrastive weight showing robust performance. \textcolor{mygreen}{Darker} colors indicate higher Micro-F1 scores.}
    \label{fig:alphaLrDropout}
\end{figure}

\subsubsection{Parameter Sensitivity Analysis}\label{sec:sensitivity}
We evaluate the robustness and scalability of ChronoSpike through a comprehensive sensitivity analysis. Figures~\ref{fig:alphaLrDropout} and~\ref{fig:samplingBatchHidden} show how the key hyperparameters interact on the DBLP and Tmall datasets, using an 80\% training split. See Appendix~\ref{app:hyperparam} for the detailed hyperparameter grid search space.

\textbf{Learning rate and dropout. } Figure~\ref{fig:sensLrDroput} shows optimal performance for learning rates $\eta \in [3 \times 10^{-3}, 7 \times 10^{-3}]$ and dropout rates $p_{\text{drop}} \in [0.3, 0.5]$. This broad plateau confirms low sensitivity to these hyperparameters within reasonable ranges. On DBLP, Micro-F1 declines monotonically beyond $p_{\text{drop}}=0.5$ (from $0.8094$ at $0.3$ to $0.7780$ at $0.9$). Tmall is more tolerant: its per-cell maximum (Micro-F1 $=0.6941$) is reached at $p_{\text{drop}}=0.7$, with degradation only at $0.9$. We retain $[0.3, 0.5]$ as the default while noting Tmall's tolerance for stronger regularization.

\textbf{SNN-specific parameters. } Figure~\ref{fig:sens_alpha} plots the interaction between spike steepness ($\alpha$) and contrastive weight. Performance stabilizes for $\alpha\!~\!\in\!~\!\![1.0, 2.0]$, indicating robust neuromorphic approach. Smaller contrastive weights $\lambda\in[0.05,0.10]$ perform best (DBLP peaks at $\lambda{=}0.05$, $\alpha{=}1.5$, Micro-F1 $=0.8057$; Tmall at $\lambda\in\{0.05,0.10\}$, $\alpha{=}2.0$, Micro-F1 $=0.6911$), consistent with our default of $\lambda=0.1$ (Algorithm~\ref{alg:chronospike-training}, Appendix~\ref{app:implementation}).

\textbf{Architecture parameters. }
Figure~\ref{fig:sens_sampling} shows that higher sampling improves performance, with $p=0.8$ optimal for Tmall and $p=0.6$ for DBLP on Macro-F1; DBLP's Micro-F1 instead peaks at $p=0.2$, so $p=0.6$ is a Macro/Micro compromise. Performance is flat across batch sizes $\{256, 500, 1024, 2048\}$ (within $\sim 0.5\%$ F1 on both datasets, Figure~\ref{sens_batch}); we default to $1024$ for throughput. Hidden dimensions of $(d_1, d_2)=(128, 64)$ provide the \textit{best trade-off}, while larger sizes (e.g., $(512, 256)$) add minimal benefits (Figure~\ref{sens_hidden}).

\begin{figure}[!ht]
    \centering
    \subfloat[Sampling probability $p$\label{fig:sens_sampling}]{%
        \includegraphics[width=0.33\linewidth]{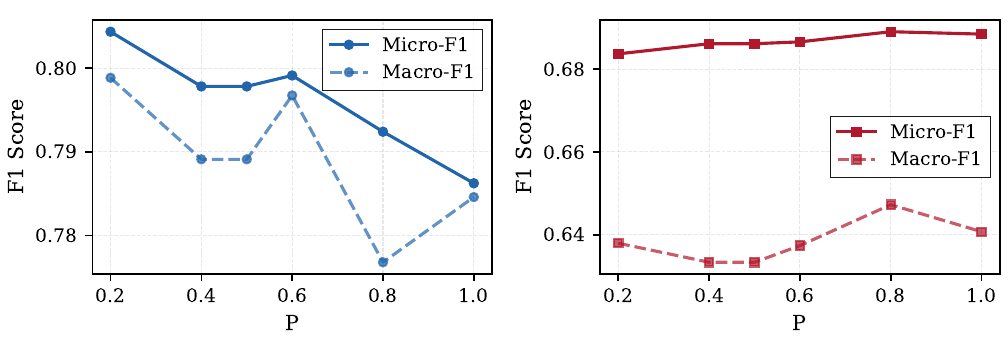}}
    \hfill
    \subfloat[Batch size\label{sens_batch}]{%
        \includegraphics[width=0.33\linewidth]{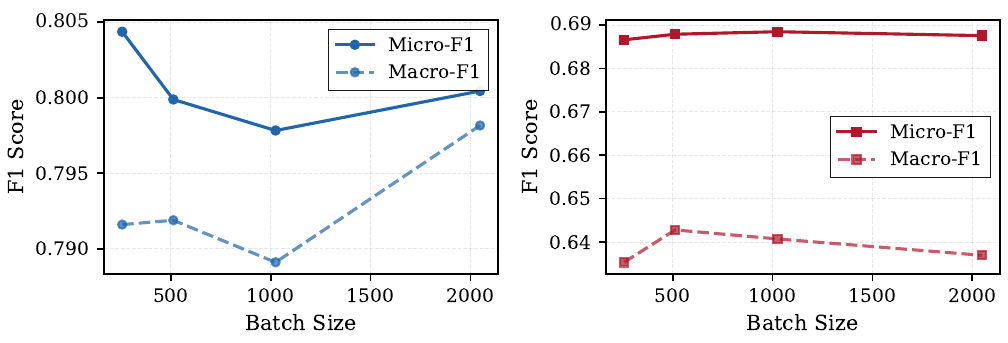}}
    \hfill
    \subfloat[Hidden dimension size\label{sens_hidden}]{%
        \includegraphics[width=0.33\linewidth]{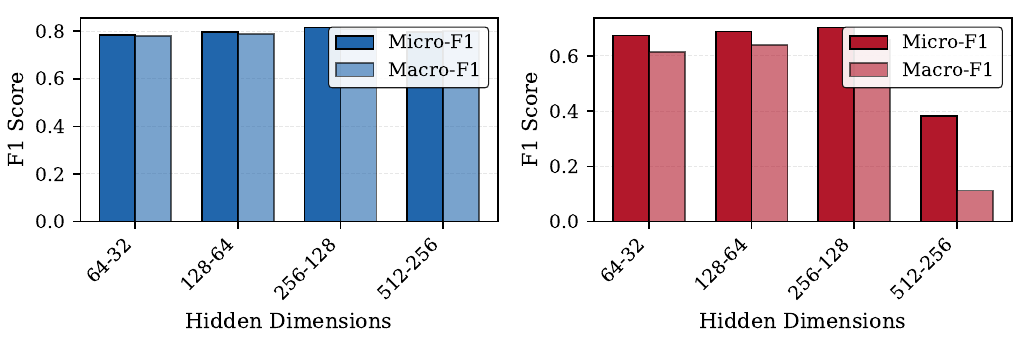}}
    \caption{Sensitivity of ChronoSpike to (a) sampling probability, (b) batch size, and (c) hidden dimension size on DBLP (\textcolor{blue}{left}) and Tmall (\textcolor{myred}{right}) datasets using an 80\% training split.}
    \label{fig:samplingBatchHidden}
\end{figure}

% \begin{figure}[!ht]
%     \centering
%     \subfloat[Sampling probability $p$\label{fig:sens_sampling}]{%
%         \includegraphics[width=0.3349\linewidth]{sensitivity_line_p.pdf}}
%     \hfill
%     \subfloat[Batch size\label{sens_batch}]{%
%         \includegraphics[width=0.3349\linewidth]{sensitivity_line_batch_size.pdf}}\\[2pt]
%     \subfloat[Hidden dimension size\label{sens_hidden}]{%
%         \includegraphics[width=0.3349\linewidth]{sensitivity_hidden_dims.pdf}}
%     \caption{Sensitivity of ChronoSpike to (a) sampling probability, (b) batch size, and (c) hidden dimension size on DBLP (\textcolor{blue}{left}) and Tmall (\textcolor{myred}{right}) datasets using an 80\% training split.}
%     \label{fig:samplingBatchHidden}
% \end{figure}

\section{Conclusion}
We proposed ChronoSpike, an adaptive spiking graph neural network that combines learnable LIF neurons, multi-head spatial attention, and a Transformer-based temporal encoder for dynamic graph learning. ChronoSpike consistently improves over recent baselines on temporal node classification while keeping memory and parameter cost compact, and our theoretical analysis and interpretability studies show that its spiking dynamics remain stable and learn meaningful temporal structure. A natural next step is extending the framework to continuous-time and streaming dynamic graphs, and to deployment on neuromorphic hardware, where its event-driven computation can deliver additional energy savings.

\textbf{Limitations. }
\label{para:limit}
ChronoSpike operates on discrete-time snapshots and requires full historical sequences, limiting its use on irregular, continuous-time, or streaming graphs. Its contrastive module relies on handcrafted augmentations that may not transfer across domains, and the model has not been optimized for neuromorphic hardware.

% ChronoSpike operates on discrete-time snapshots with uniform resolution, which may limit performance on irregular or continuous-time graphs. Its Transformer encoder requires full historical sequences, which limits its use in real-time streaming or partially observed settings. While the model maintains linear complexity, it has not been optimized for neuromorphic hardware. Lastly, its contrastive module relies on handcrafted augmentations, which may limit generalization across domains with distinct temporal semantics.

\textbf{Impact Statement. }
\label{impact}
This work is foundational research on representation learning for dynamic graphs and is not tied to a specific real-world application or deployment. We do not foresee direct societal impacts arising from the method when used as intended.

    {
        \small
        \bibliographystyle{unsrt}
        \bibliography{main}
    }

\clearpage
\appendix

\section{Details of Problem Definition}\label{app:problem}
DGRL addresses the task of learning effective node representations from evolving graphs. We consider a dynamic graph that is observed at a sequence of discrete time steps (often referred to as temporal snapshots). Unlike static graphs with fixed structure, a dynamic graph’s topology (edges) and node attributes can change at each snapshot as new connections form or old ones vanish, and node features update to reflect evolving states. Formally, the data consists of a sequence of graph snapshots over $T$ time steps, capturing the network's state at times $1, 2, \dots, T$. This sequential graph input provides the context for modeling how each node’s local neighborhood and features develop over time.

The goal of DGRL is to learn a time-dependent embedding for each node that encodes both the graph's structural information at a given snapshot and the temporal patterns of its evolution up to that point. In other words, for each node $v$ and time $t$, we seek a representation (a $d'$-dimensional vector) that summarizes $v$’s position in the graph at time $t$ as well as its historical interactions. These node embeddings serve as inputs to downstream predictive tasks. In this work, we focus on the temporal node classification problem: given an evolving graph and ground-truth node labels at certain time steps, the objective is to predict each node's label at one or more future or unseen time points. The learned dynamic embeddings should thus be discriminative with respect to node labels, meaning that a simple classifier applied to the embeddings can accurately infer each node’s category at the corresponding time. By framing the problem in a model-agnostic way, we emphasize that any suitable learning method should capture who a node is connected to and when those connections occur, so that the resulting representations reflect the node’s changing context. Success in this setting is typically measured by classification accuracy (or related metrics) on node labels over time, demonstrating that the model has effectively encoded the temporal graph information into the node embeddings.

\section{Details of ChronoSpike}

\subsection{Details of Problem Setting and Notation}\label{app:notation}
We now formalize the dynamic graph learning setup and introduce notation, as adopted by the proposed ChronoSpike framework. We represent a dynamic graph as a sequence of snapshots $G = {G_1, G_2, \dots, G_T}$ over $T$ discrete time steps. Each snapshot $G_t = (V, E_t, X_t)$ consists of a fixed node set $V$ (with $|V|=N$ nodes), an edge set $E_t$ defining the graph connectivity at time $t$, and a matrix of node features $X_t \in \mathbb{R}^{N \times d}$ capturing $d$-dimensional attribute vectors for all nodes. We assume, for simplicity, that the set of nodes $V$ remains consistent over time; only the relationships and features evolve. For any node $v \in V$, we denote its feature vector at time $t$ as $\mathbf{x}_v^{(t)}$ (the $v$th row of $X_t$). Likewise, let $\mathcal{N}_t(v) = \{u \in V : (u,v) \in E_t\}$ denote the set of neighbors of node $v$ in the snapshot $G_t$. This snapshot-based formulation of the dynamic graph implicitly allows new edges (and even new nodes, if they were to appear) to be incorporated at each time step, while tracking the historical trajectory of each node through its changing feature $\mathbf{x}_v^{(t)}$ and neighborhood $\mathcal{N}_t(v)$.

Given this setup, the objective of DGRL can be described as learning a mapping $\Phi_\Theta$ that encodes the sequence of graph snapshots into low-dimensional node embeddings. Concretely, we seek a learnable function
$\Phi_\Theta : (X_1, E_1, \dots, X_T, E_T) \mapsto Z$,
where $Z \in \mathbb{R}^{N \times d'}$ contains a $d'$-dimensional embedding for each of the $N$ nodes (typically $d' \ll d$ for efficiency). The mapping $\Phi_\Theta$ should be designed such that each node’s embedding in $Z$ captures both the structural information from the graph topology and the temporal information from the graph’s evolution up to time $T$. These embeddings can then be fed into a simple classifier or other task-specific module to perform temporal node classification (or other tasks) at the appropriate time step. This formulation defines the learning objective and notation used throughout the remainder of the paper. So, the problem setting provides the foundation: a sequence of evolving graph snapshots and the requirement to embed nodes for classification over time. The notation introduced here will be used throughout the following sections.

\subsection{Details of Dynamic Graph Sampling and Spatial Encoding}\label{app:spatial_encoding}
ChronoSpike employs an inductive neighborhood sampling strategy at each time step to ensure computational tractability on large-scale dynamic graphs. For a given target node $v$ and timestamp $t$, we construct a sampled neighborhood by drawing a fixed number of neighbors from a probabilistic mixture of two graph sources: the cumulative historical graph $G_{\text{cum}} = (V, \bigcup_{t'=1}^{t-1} E_{t'}, \emptyset)$ that aggregates all edges observed up to time $t-1$, and the current snapshot graph $G_t = (V, E_t, X_t)$ containing only edges active at time $t$. Specifically, for each sampled neighbor, we select from $G_{\text{cum}}$ with probability $p$ and from $G_t$ with probability $1-p$, where $p \in [0, 1]$ is a hyperparameter controlling the balance between historical context and recent dynamics. This dual sampling mechanism addresses a fundamental trade-off in dynamic graph learning: purely historical sampling ($p = 1$) captures long-term structural patterns but may miss recent changes, while purely temporal sampling ($p = 0$) emphasizes current interactions but lacks historical context. The hybrid strategy with $p \in (0, 1)$ adaptively combines both perspectives, enabling the model to leverage persistent structural regularities while remaining sensitive to temporal evolution.

Given the sampled neighborhood $\tilde{\mathcal{N}}_t(v)$ of fixed size $S$ for node $v$ at time $t$, ChronoSpike performs spatial feature aggregation through a multi-head attention mechanism that operates on continuous-valued node features before spike generation. This design choice is crucial: by computing attention weights over real-valued features rather than binary spikes, the model preserves fine-grained information about feature magnitudes and relationships during the aggregation phase. For each attention head $h \in \{1, \ldots, H\}$, the per-head spatial representation $\mathbf{h}_{v,h}^{(t)} \in \mathbb{R}^{d_h}$ is computed as:

\begin{equation}\label{eq:8}
    \mathbf{h}_{v,h}^{(t)} = \mathbf{W}_s \mathbf{x}_v^{(t)} + \sum_{u \in \tilde{\mathcal{N}}_t(v)} \alpha_{vu}^{(t,h)} \mathbf{W}_n^{(h)} \mathbf{x}_u^{(t)}
\end{equation}
where $\mathbf{W}_s \in \mathbb{R}^{d \times d_h}$ is a learnable self-transformation matrix, $\mathbf{W}_n^{(h)} \in \mathbb{R}^{d \times d_h}$ is a head-specific neighbor transformation matrix with $d_h = d/H$ being the per-head dimension, and $\alpha_{vu}^{(t,h)}$ are normalized attention coefficients. The attention weights follow the GAT-style additive parameterization~\cite{velickovic2018graph}:
\begin{equation}\label{eq:9}
\alpha_{vu}^{(t,h)} = \frac{\exp\!\left(\mathrm{LeakyReLU}\!\left((\mathbf{a}_l^{(h)})^{\!\top}(\mathbf{W}_s\mathbf{x}_v^{(t)})_{(h)} + (\mathbf{a}_r^{(h)})^{\!\top}\mathbf{W}_n^{(h)}\mathbf{x}_u^{(t)}\right)\right)}{\sum_{u'\in\tilde{\mathcal{N}}_t(v)}\exp\!\left(\mathrm{LeakyReLU}\!\left((\mathbf{a}_l^{(h)})^{\!\top}(\mathbf{W}_s\mathbf{x}_v^{(t)})_{(h)} + (\mathbf{a}_r^{(h)})^{\!\top}\mathbf{W}_n^{(h)}\mathbf{x}_{u'}^{(t)}\right)\right)}
\end{equation}
where $\mathbf{a}_l^{(h)}, \mathbf{a}_r^{(h)} \in \mathbb{R}^{d_h}$ are per-head learnable attention vectors and $(\mathbf{W}_s\mathbf{x})_{(h)}\in\mathbb{R}^{d_h}$ denotes the $h$-th per-head slice of the shared self-projection (the implementation uses a single shared $\mathbf{W}_s$ whose output is reshaped to $H$ per-head slices, equivalent to $H$ separate $\mathbf{W}_s^{(h)}$). The per-head outputs are concatenated along the channel dimension to form $\mathbf{h}_v^{(t)} = [\mathbf{h}_{v,1}^{(t)} \,\Vert\, \cdots \,\Vert\, \mathbf{h}_{v,H}^{(t)}] \in \mathbb{R}^{d}$ (with $d=H\,d_h$); we use the concatenation directly without an additional output projection, saving $(H d_h)^2$ parameters per layer. This attentive aggregation mechanism adaptively weights the contributions of different neighbors based on their feature similarity to the target node, enabling the model to focus on the most relevant structural context under evolving graph topologies while maintaining computational efficiency through fixed-size neighborhood sampling.

\subsection{Details of Spike Encoding via Adaptive LIF Neurons}\label{app:adaptive_lif}
ChronoSpike converts aggregated spatial representations into sparse, event-driven spike trains using adaptive LIF neurons. Unlike conventional LIF formulations with fixed time constants and thresholds, our adaptive variant introduces learnable per-channel parameters that enable heterogeneous temporal dynamics across feature dimensions. We follow the line of work on learnable membrane time constants and adaptive thresholds initiated by Fang et al.~\cite{fangIncorporatingLearnableMembrane2021} and extended in~\cite{huangARLIFAdaptiveReset2025,zhangDALIFDualAdaptive2025,yanTrainingHighPerformance2025,yangMMTSNNMarkovianDecision2026}; our contribution is not the introduction of these per-neuron adaptive mechanisms in isolation, but rather their per-channel parameterization within a graph-temporal architecture that combines them with continuous-feature spatial attention and a Transformer-based temporal aggregator (Sections~\ref{subsec:sampling}--\ref{subsec:temporal}). For each node $v$ and feature dimension $i \in \{1, \ldots, d_h\}$, the membrane potential $u_{v,i}^{(t)}$ evolves according to a discrete-time first-order linear difference equation derived from the continuous LIF differential equation:
\begin{equation}\label{eq:10}
    u_{v,i}^{(t)} = u_{v,i}^{(t-1)} + \frac{1}{\tau_i} \left( h_{v,i}^{(t)} - (u_{v,i}^{(t-1)} - u_{\text{reset}}) \right)
\end{equation}
where $h_{v,i}^{(t)}$ is the $i$-th component of the aggregated spatial input from Equation (1), $\tau_i > 0$ is a learnable membrane time constant that controls the temporal integration window for dimension $i$, and $u_{\text{reset}}$ is the reset potential (fixed at zero in our implementation). The time constant $\tau_i$ determines the rate at which the membrane potential decays toward the resting state: larger values of $\tau_i$ result in slower decay and longer temporal integration, allowing the neuron to accumulate inputs over extended periods, while smaller values yield faster responses to transient inputs. This update equation can be rewritten in recursive form as $u_{v,i}^{(t)} = \lambda_i u_{v,i}^{(t-1)} + \frac{1}{\tau_i}(h_{v,i}^{(t)} + u_{\text{reset}})$ where $\lambda_i = 1 - \frac{1}{\tau_i}$ is the decay factor. The stability condition $|\lambda_i| < 1$ requires $\tau_i > \frac{1}{2}$ for our forward-Euler discretization with unit time step, which holds empirically throughout training because $\tau_i$ is initialized to $1.0$ and AdamW updates do not push it outside this region in our runs (see Theorem~\ref{thm:gradient-stability} for a full discussion; the released implementation does not enforce this via a hard reparameterization, so for stricter settings a softplus reparameterization $\tau_i=\tfrac{1}{2}+\mathrm{softplus}(\tilde\tau_i)$ is a one-line drop-in).

A spike is emitted when the membrane potential crosses a learnable firing threshold $V_{\text{th},i}$:
\begin{equation}\label{eq:11}
    s_{v,i}^{(t)} = \mathbb{I}(u_{v,i}^{(t)} \geq V_{\text{th},i}) = \begin{cases} 1 & \text{if } u_{v,i}^{(t)} \geq V_{\text{th},i} \\ 0 & \text{otherwise} \end{cases}
\end{equation}
where $\mathbb{I}(\cdot)$ denotes the indicator function. Immediately after spike emission, the membrane potential undergoes a hard reset to prevent unbounded accumulation:
\begin{equation}\label{eq:12}
    u_{v,i}^{(t)} \leftarrow (1 - s_{v,i}^{(t)}) u_{v,i}^{(t)} + s_{v,i}^{(t)} u_{\text{reset}}
\end{equation}
Following standard SNN terminology~\cite{fangIncorporatingLearnableMembrane2021}, this is a hard reset (reset-to-fixed-value): upon firing, the potential is forced to $u_{\text{reset}}$ rather than reduced by $V_{\text{th},i}$ as in a soft (subtractive) reset.

This reset mechanism ensures that the neuron returns to a bounded state after firing, which we prove maintains numerical stability in Theorem \ref{thm:lif-bounded}. The learnable parameters $\{\tau_i, V_{\text{th},i}\}_{i=1}^{d_h}$ are optimized jointly with all other model parameters via backpropagation, enabling the network to discover task-specific temporal dynamics. Different feature channels can learn different integration time scales and firing rates, resulting in heterogeneous temporal receptive fields that improve the model's ability to capture diverse temporal patterns.

The primary challenge in training spiking neural networks is the non-differentiability of the spike generation function $s_{v,i}^{(t)} = \mathbb{I}(u_{v,i}^{(t)} \geq V_{\text{th},i})$. The indicator function has a zero gradient almost everywhere and an undefined gradient at the threshold, preventing standard backpropagation. ChronoSpike addresses this using the surrogate gradient method, where we approximate the derivative during the backward pass while preserving the true discrete spike function in the forward pass. Specifically, we replace the gradient of $\mathbb{I}(\cdot)$ with the derivative of a smooth sigmoid-like function:
\begin{equation}
    \frac{\partial s_{v,i}^{(t)}}{\partial u_{v,i}^{(t)}} \approx \sigma'(u_{v,i}^{(t)} - V_{\text{th},i}; \alpha) = \frac{\alpha}{(\alpha |u_{v,i}^{(t)} - V_{\text{th},i}| + 1)^2}
\end{equation}

where $\alpha > 0$ is a slope parameter controlling the width of the gradient approximation (initialized to $\alpha=1.0$ and learned per feature channel during training, so $\alpha_i$ in fact varies across channels rather than being a fixed global hyperparameter). This surrogate gradient provides a continuous, differentiable approximation to the Heaviside step function's derivative, enabling gradient flow through the spike generation process while maintaining the computational and representational benefits of binary spikes during forward propagation.

\subsection{Details of Temporal Spike Integration}\label{app:temporal_integration}
ChronoSpike aggregates temporal information across the entire sequence of graph snapshots by processing the collected spike representations through a lightweight Transformer-based encoder. For each node $v$, we stack the spike outputs $\{\mathbf{s}_v^{(1)}, \mathbf{s}_v^{(2)}, \ldots, \mathbf{s}_v^{(T)}\}$ from all $T$ time steps into a temporal sequence matrix $\mathbf{S}_v \in \mathbb{R}^{T \times d_h}$, where each row $\mathbf{s}_v^{(t)} \in \mathbb{R}^{d_h}$ represents the spike-encoded feature vector at time $t$. To preserve temporal order information, which is critical for capturing causal dependencies in dynamic graphs, we augment this sequence with learnable positional encodings $\mathbf{P}_{1:T} \in \mathbb{R}^{T \times d_h}$ that provide each time step with a unique positional identifier:
\begin{equation}
    \tilde{\mathbf{S}}_v = \mathbf{S}_v + \mathbf{P}_{1:T}
\end{equation}
where addition is performed element-wise. The positional encodings $\mathbf{P}_{1:T}$ are initialized randomly and optimized during training, allowing the model to learn task-specific temporal representations rather than relying on fixed sinusoidal or absolute position encodings.

The positionally-encoded spike sequence $\tilde{\mathbf{S}}_v$ is then processed by a single-layer Transformer encoder with multi-head self-attention:
\begin{align}
    \mathbf{Y}_v &= \text{LayerNorm}\big(\tilde{\mathbf{S}}_v + \text{MultiHeadAttn}(\tilde{\mathbf{S}}_v)\big), \\
    \mathbf{Z}_v^{\text{temp}} &= \text{TransformerEncoder}(\tilde{\mathbf{S}}_v) = \text{LayerNorm}\big(\mathbf{Y}_v + \text{FFN}(\mathbf{Y}_v)\big)
\end{align}

where the multi-head self-attention mechanism computes pairwise interactions across all time steps via scaled dot-product attention, the feed-forward network (FFN) applies two linear transformations with ReLU activations, and LayerNorm is applied with the standard pre-/post-residual placement of a Transformer encoder block, providing both the attention-side and FFN-side residual connections. The self-attention mechanism enables each time step representation to attend to all other time steps in the sequence, capturing long-range temporal dependencies that cannot be modeled by sequential recurrence or local temporal convolutions. The attention weights learned by this module reveal which historical snapshots are most relevant for predicting node behavior at the final time step, as we demonstrate through interpretability analysis in Appendix G.2.

The final temporal embedding $\mathbf{z}_v \in \mathbb{R}^{d_h}$ for node $v$ is extracted from the last time step of the Transformer output and then passed through a small readout head: $\mathbf{z}_v = \mathbf{W}_2\,\tanh(\mathbf{W}_1\,\mathbf{Z}_v^{\text{temp}}[T,:] + \mathbf{b}_1) + \mathbf{b}_2$ (a two-layer MLP with $\mathbf{W}_1,\mathbf{W}_2\in\mathbb{R}^{d_h\times d_h}$). This readout, omitted from earlier symbolic equations for compactness, accounts for an additional $\approx 2 d_h^2 + 2 d_h$ trainable parameters and matches the released implementation, which aggregates information from the entire temporal sequence through the self-attention mechanism. This design decouples temporal modeling from recurrent state propagation, avoiding the sequential dependency and gradient pathology issues that plague RNN-based approaches while maintaining linear memory complexity in the temporal dimension, since we only store spike representations rather than dense hidden states at each time step.

\subsection{Details of Joint Optimization with Contrastive Regularization}\label{app:optimization}
ChronoSpike is trained end-to-end using a composite objective function that combines supervised classification loss with temporal contrastive regularization. For the classification task, we apply a linear transformation followed by softmax normalization to the final node embeddings to produce class probability distributions:

$$\hat{\mathbf{y}}_v = \text{Softmax}(\mathbf{W}_c \mathbf{z}_v + \mathbf{b}_c) = \frac{\exp(\mathbf{W}_c^{(c)} \mathbf{z}_v + b_c^{(c)})}{\sum_{c'=1}^C \exp(\mathbf{W}_c^{(c')} \mathbf{z}_v + b_{c'}^{(c')})}$$

where $\mathbf{W}_c \in \mathbb{R}^{C \times d_h}$ and $\mathbf{b}_c \in \mathbb{R}^C$ are learnable classifier parameters, $C$ is the number of classes, and $\mathbf{W}_c^{(c)}$ denotes the $c$-th row of $\mathbf{W}_c$. The supervised classification loss is computed as the cross-entropy between predicted and true label distributions over the set of labeled training nodes $\mathcal{V}_{\text{train}} \subseteq V$:

$$\mathcal{L}_{\text{cls}} = -\frac{1}{|\mathcal{V}_{\text{train}}|} \sum_{v \in \mathcal{V}_{\text{train}}} \sum_{c=1}^C y_v^{(c)} \log \hat{y}_v^{(c)} = -\frac{1}{|\mathcal{V}_{\text{train}}|} \sum_{v \in \mathcal{V}_{\text{train}}} \log \hat{y}_v^{(y_v)}$$

where $y_v \in \{1, \ldots, C\}$ is the true class label for node $v$, and we use the simplified form for one-hot encoded labels.

To enhance the robustness and generalization of learned representations, ChronoSpike incorporates a temporal contrastive regularization term inspired by SimCLR. For each node embedding $\mathbf{z}_v$, we generate two stochastic augmented views $\mathbf{z}_v^{(1)}$ and $\mathbf{z}_v^{(2)}$ by applying independent feature dropout with rate $p_{\text{con}} = 0.1$. The contrastive objective encourages these two views of the same node to have similar representations while pushing apart representations of different nodes:

$$\mathcal{L}_{\text{con}} = -\frac{1}{|\mathcal{V}_{\text{train}}|} \sum_{v \in \mathcal{V}_{\text{train}}} \log \frac{\exp(\text{sim}(\mathbf{z}_v^{(1)}, \mathbf{z}_v^{(2)}) / \tau)}{\sum_{u \in \mathcal{V}_{\text{train}}} \exp(\text{sim}(\mathbf{z}_v^{(1)}, \mathbf{z}_u^{(2)}) / \tau)}$$

where $\text{sim}(\mathbf{a}, \mathbf{b}) = \frac{\mathbf{a}^\top \mathbf{b}}{\|\mathbf{a}\|_2 \|\mathbf{b}\|_2}$ is the cosine similarity between vectors $\mathbf{a}$ and $\mathbf{b}$, and $\tau > 0$ is a temperature hyperparameter controlling the concentration of the distribution (we use $\tau = 0.5$). This contrastive term acts as a regularizer that improves representation quality and prevents overfitting, particularly under limited supervision. The final training objective combines both loss components:
\begin{equation}
    \mathcal{L} = \mathcal{L}_{\text{cls}} + \lambda \mathcal{L}_{\text{con}}
\end{equation}
where $\lambda = 0.1$ is a weighting coefficient. The contrastive loss is only applied during training and is disabled during inference to reduce computational cost.

Optimization is performed using the AdamW optimizer with mini-batch stochastic gradient descent. At each iteration, we sample a batch of target nodes, construct their temporal neighborhoods through the sampling procedure described in Section \ref{subsec:sampling}, perform forward propagation through spatial aggregation, spike generation, and temporal integration, and compute gradients of the composite loss with respect to all model parameters. The surrogate gradient approximation for spike generation is applied only during backpropagation, while the forward pass uses true binary spikes, ensuring that the trained model maintains spike-based computation and neuromorphic compatibility while remaining trainable through standard gradient descent.

\begin{algorithm}[!ht]
    \scriptsize
    \caption{ChronoSpike Encoder -- Forward Computation}
    \label{alg:chronospike-encoder}
    \begin{algorithmic}[1]
        \REQUIRE Dynamic graph snapshots $\{G_1, \ldots, G_T\}$ where $G_t = (V, E_t, X_t)$
        \REQUIRE Target node batch $\mathcal{V}_b \subseteq V$, $|\mathcal{V}_b| = B$
        \REQUIRE Layer-wise neighborhood sizes $\{S_1, \ldots, S_K\}$
        \REQUIRE Sampling probability $p \in [0,1]$ for hybrid sampling
        \REQUIRE Model parameters $\Theta = \{\mathbf{W}_s^{(k)}, \mathbf{W}_n^{(k)}, \mathbf{W}_q^{(k)}, \mathbf{W}_k^{(k)}, \mathbf{W}_v^{(k)}\}_{k=1}^K \cup \{\tau_i, V_{\text{th},i}\}_{i=1}^d$
        \ENSURE Temporal node embeddings $\mathbf{Z} \in \mathbb{R}^{B \times d}$
        \STATE \textcolor{blue}{\textbf{// Stage 1: Initialize data structures}}
        \STATE Initialize spike sequence list $\mathcal{S} \leftarrow []$ \hfill $\triangleright$ Store spike outputs across time
        \STATE Initialize per-layer membrane potentials $\mathbf{u}_k^{(0)} \leftarrow \mathbf{0} \in \mathbb{R}^{B \times d}$ for $k=1,\ldots,K$ \hfill $\triangleright$ One state per spatial layer; persists across timesteps
        \STATE Construct cumulative graph $G_{\text{cum}} = (V, \bigcup_{t'=1}^{T-1} E_{t'}, \emptyset)$ \hfill $\triangleright$ Historical structure

        \STATE \textcolor{blue}{\textbf{// Stage 2: Temporal spike generation loop}}
        \FOR{$t = 1$ to $T$}
        \STATE \textcolor{teal}{\textbf{// 2.1: Load current snapshot}}
        \STATE $G_t \leftarrow (V, E_t, X_t)$ \hfill $\triangleright$ Current graph state at timestep $t$

        \STATE \textcolor{teal}{\textbf{// 2.2: Hybrid neighborhood sampling}}
        \STATE $G_{\text{mix}}^{(t)} \leftarrow \text{ProbabilisticMix}(G_{\text{cum}}, G_t, p)$ \hfill $\triangleright$ With prob. $p$: sample from $G_{\text{cum}}$; else: $G_t$

        \STATE \textcolor{teal}{\textbf{// 2.3: Initialize node features for spatial propagation}}
        \STATE Build the multi-hop computational tree: starting from $\mathcal{V}_b$, iteratively sample $\mathcal{N}_k(v)$ for $k=1,\ldots,K$ to obtain the union $\tilde{\mathcal{V}}=\mathcal{V}_b\cup\bigcup_k\mathcal{N}_k$
        \STATE $\mathbf{H}_0 \leftarrow X_t[\tilde{\mathcal{V}}, :]$ \hfill $\triangleright$ Feature matrix for all nodes in the computation tree, $\in \mathbb{R}^{|\tilde{\mathcal{V}}| \times d_{\text{in}}}$
        \STATE $\mathcal{V}_0 \leftarrow \mathcal{V}_b$ \hfill $\triangleright$ Current active node set

        \STATE \textcolor{teal}{\textbf{// 2.4: K-layer spatial aggregation with attention}}
        \FOR{$k = 1$ to $K$}
        \STATE \textcolor{violet}{\textbf{// 2.4.1: Sample neighborhoods}}
        \FOR{each node $v \in \mathcal{V}_{k-1}$}
        \STATE $\mathcal{N}_k(v) \leftarrow \text{Sample}(S_k \text{ nodes from } \mathcal{N}_{G_{\text{mix}}^{(t)}}(v))$
        \ENDFOR
        \STATE $\mathcal{V}_k \leftarrow \bigcup_{v \in \mathcal{V}_{k-1}} \mathcal{N}_k(v)$ \hfill $\triangleright$ Expanded node set

        \STATE \textcolor{violet}{\textbf{// 2.4.2: Multi-head attention aggregation}}
        \STATE $\mathbf{Q} \leftarrow \mathbf{H}_{k-1}[\mathcal{V}_{k-1},:] \mathbf{W}_q^{(k)}$ \hfill $\triangleright$ Query $\in \mathbb{R}^{|\mathcal{V}_{k-1}| \times d_h}$
        \STATE $\mathbf{K} \leftarrow \mathbf{H}_{k-1}[\mathcal{N}_k, :] \mathbf{W}_k^{(k)}$ \hfill $\triangleright$ Keys for sampled neighbors (well-defined because $\tilde{\mathcal{V}}\supseteq\mathcal{N}_k$)
        \STATE $\mathbf{V} \leftarrow \mathbf{H}_{k-1}[\mathcal{N}_k, :] \mathbf{W}_v^{(k)}$ \hfill $\triangleright$ Neighbor values
        \STATE $\mathbf{A} \leftarrow \text{Softmax}\left(\frac{\mathbf{Q} \mathbf{K}^\top}{\sqrt{d_h}}\right)$ \hfill $\triangleright$ Masking is implicit: $\mathbf{K},\mathbf{V}$ are gathered only over the sampled neighbors $\mathcal{N}_k$, so the softmax is naturally restricted to them
        \STATE $\mathbf{H}_{\text{agg}} \leftarrow \mathbf{A} \mathbf{V}$ \hfill $\triangleright$ Weighted aggregation

        \STATE \textcolor{violet}{\textbf{// 2.4.3: Combine self and neighbor features}}
        \STATE $\mathbf{H}_k^{\text{temp}} \leftarrow \mathbf{H}_{k-1}[\mathcal{V}_{k-1},:] \mathbf{W}_s^{(k)} + \mathbf{H}_{\text{agg}} \mathbf{W}_n^{(k)}$ \hfill $\triangleright$ Element-wise addition

        \STATE \textcolor{violet}{\textbf{// 2.4.4: Generate spikes via adaptive LIF neurons (per-layer state $\mathbf{u}_k$})}
        \FOR{each feature dimension $i = 1$ to $d$}
        \STATE $u_{k,i}^{(t)} \leftarrow u_{k,i}^{(t-1)} + \frac{1}{\tau_i}\left(H_k^{\text{temp}}[:, i] - (u_{k,i}^{(t-1)} - u_{\text{reset}})\right)$ \hfill $\triangleright$ Layer-$k$ membrane update (Eq. \ref{eq:10})
        \STATE $s_{k,i}^{(t)} \leftarrow \mathbb{I}(u_{k,i}^{(t)} \geq V_{\text{th},i})$ \hfill $\triangleright$ Spike generation (Eq. \ref{eq:11}): 1 if firing, 0 otherwise
        \STATE $u_{k,i}^{(t)} \leftarrow (1 - s_{k,i}^{(t)}) \cdot u_{k,i}^{(t)} + s_{k,i}^{(t)} \cdot u_{\text{reset}}$ \hfill $\triangleright$ Reset membrane (Eq. \ref{eq:12})
        \ENDFOR

        \STATE $\mathbf{H}_k \leftarrow \mathbf{s}_k^{(t)}$ \hfill $\triangleright$ Spike outputs become next layer input
        \IF{$k < K$}
        \STATE $\mathbf{H}_k \leftarrow \text{Dropout}(\mathbf{H}_k, p=0.7)$ \hfill $\triangleright$ Regularization between layers
        \ENDIF
        \ENDFOR

        \STATE \textcolor{teal}{\textbf{// 2.5: Store final-layer spikes for this timestep}}
        \STATE Append $\mathbf{s}^{(t)} \leftarrow \mathbf{H}_K$ to spike sequence $\mathcal{S}$
        \STATE \textcolor{teal}{\textbf{// 2.6: Membrane state $\{\mathbf{u}_k^{(t)}\}_{k=1}^{K}$ persists for use at timestep $t{+}1$ (no within-loop reset).}}
        \STATE \textcolor{teal}{\textbf{// 2.7: Update cumulative graph}}
        \STATE $E_{\text{cum}} \leftarrow E_{\text{cum}} \cup E_t$ \hfill $\triangleright$ Incrementally add edges
        \ENDFOR

        \STATE \textcolor{blue}{\textbf{// Stage 3: Temporal aggregation via Transformer}}
        \STATE $\mathbf{S} \leftarrow \text{Stack}(\mathcal{S}) \in \mathbb{R}^{B \times T \times d}$ \hfill $\triangleright$ Stack spike sequence
        \STATE $\mathbf{S}_{\text{pos}} \leftarrow \mathbf{S} + \mathbf{P}_{1:T}$ \hfill $\triangleright$ Add learned positional encodings
        \STATE $\mathbf{Z}_{\text{temp}} \leftarrow \text{TransformerEncoder}(\mathbf{S}_{\text{pos}})$ \hfill $\triangleright$ Apply Transformer encoder
        \STATE $\mathbf{Z} \leftarrow \mathbf{Z}_{\text{temp}}[:, -1, :]$ \hfill $\triangleright$ Extract last timestep representation

        \STATE \textbf{return} $\mathbf{Z} \in \mathbb{R}^{B \times d}$ \hfill $\triangleright$ Node embeddings for downstream tasks
    \end{algorithmic}
\end{algorithm}

\begin{algorithm}[!ht]
    \scriptsize
    \caption{ChronoSpike Training Loop}
    \label{alg:chronospike-training}
    \begin{algorithmic}[1]
        \REQUIRE Training node set $\mathcal{V}_{\text{train}} \subseteq V$ with labels $\mathbf{y} \in \{1, \ldots, C\}^{|\mathcal{V}_{\text{train}}|}$
        \REQUIRE Validation node set $\mathcal{V}_{\text{val}} \subseteq V$ with labels $\mathbf{y}_{\text{val}}$
        \REQUIRE Batch size $B = 1024$, learning rate $\eta$, number of epochs $E = 100$
        \REQUIRE Contrastive weight $\lambda = 0.1$, temperature $\tau_{\text{con}} = 0.5$, dropout rates $p_{\text{drop}} = 0.7$, $p_{\text{con}} = 0.1$
        \ENSURE Trained model parameters $\Theta^*$

        \STATE \textcolor{blue}{\textbf{// Initialization}}
        \STATE Initialize model parameters $\Theta$ (Xavier/Glorot initialization for weights)
        \STATE Initialize AdamW optimizer with learning rate $\eta$, weight decay $\lambda_{\text{wd}} = 1 \times 10^{-4}$
        \STATE Partition $\mathcal{V}_{\text{train}}$ into mini-batches $\{\mathcal{V}_b^{(1)}, \ldots, \mathcal{V}_b^{(M)}\}$ where $|\mathcal{V}_b^{(j)}| = B$
        \STATE Initialize best validation performance $\text{BestVal} \leftarrow 0$, patience counter $\text{Patience} \leftarrow 0$

        \STATE
        \FOR{epoch $= 1$ to $E$}
        \STATE \textcolor{teal}{\textbf{// Training phase}}
        \STATE Set model to training mode (enable dropout, batch normalization)
        \STATE Shuffle mini-batch order $\{\mathcal{V}_b^{(1)}, \ldots, \mathcal{V}_b^{(M)}\}$

        \FOR{each mini-batch $\mathcal{V}_b$ in shuffled order}
        \STATE \textcolor{violet}{\textbf{// Forward pass: compute embeddings and logits}}
        \STATE $\mathbf{Z}_b \leftarrow \textsc{ChronoSpikeEncoder}(\mathcal{V}_b; \Theta)$ \COMMENT{Algo 1, returns $\mathbf{Z}_b \in \mathbb{R}^{B \times d}$}
        \STATE $\hat{\mathbf{y}}_b \leftarrow \text{Softmax}(\mathbf{Z}_b \mathbf{W}_c + \mathbf{b}_c)$ \COMMENT{Linear classifier, $\mathbf{W}_c \in \mathbb{R}^{d \times C}$}

        \STATE \textcolor{violet}{\textbf{// Compute classification loss (cross-entropy)}}
        \STATE $\mathcal{L}_{\text{cls}} \leftarrow -\frac{1}{B} \sum_{v \in \mathcal{V}_b} \log \hat{y}_v[y_v]$ \COMMENT{Negative log-likelihood}

        \STATE \textcolor{violet}{\textbf{// Generate stochastic views for contrastive learning}}
        \STATE $\mathbf{Z}_b^{(1)} \leftarrow \text{Dropout}(\mathbf{Z}_b, p_{\text{con}})$ \COMMENT{First view with 10\% dropout}
        \STATE $\mathbf{Z}_b^{(2)} \leftarrow \text{Dropout}(\mathbf{Z}_b, p_{\text{con}})$ \COMMENT{Second view, independently sampled}

        \STATE \textcolor{violet}{\textbf{// Compute temporal contrastive loss (InfoNCE)}}
        \STATE Normalize embeddings: $\tilde{\mathbf{Z}}_b^{(1)} \leftarrow \mathbf{Z}_b^{(1)} / \|\mathbf{Z}_b^{(1)}\|_2$, $\tilde{\mathbf{Z}}_b^{(2)} \leftarrow \mathbf{Z}_b^{(2)} / \|\mathbf{Z}_b^{(2)}\|_2$
        \STATE Compute similarity matrix: $\mathbf{S} \leftarrow \tilde{\mathbf{Z}}_b^{(1)} (\tilde{\mathbf{Z}}_b^{(2)})^\top / \tau_{\text{con}}$ \COMMENT{$\in \mathbb{R}^{B \times B}$}
        \STATE Compute contrastive loss:
        \begin{equation*}
            \mathcal{L}_{\text{con}} = -\frac{1}{B} \sum_{i=1}^B \log \frac{\exp(S_{ii})}{\sum_{j=1}^B \exp(S_{ij})}
        \end{equation*}
        \STATE \COMMENT{Encourages agreement between views on diagonal, disagreement off-diagonal}

        \STATE \textcolor{violet}{\textbf{// Combine losses}}
        \STATE $\mathcal{L} \leftarrow \mathcal{L}_{\text{cls}} + \lambda \mathcal{L}_{\text{con}}$ \COMMENT{Weighted sum}

        \STATE \textcolor{violet}{\textbf{// Backward pass and parameter update}}
        \STATE Zero gradients: $\nabla_\Theta \leftarrow 0$
        \STATE Compute gradients: $\nabla_\Theta \mathcal{L}$ using surrogate gradients for spike function
        \STATE Clip gradients: $\nabla_\Theta \leftarrow \text{Clip}(\nabla_\Theta, \text{max\_norm}=1.0)$ \COMMENT{Prevent exploding gradients}
        \STATE Update parameters: $\Theta \leftarrow \text{AdamW}(\Theta, \nabla_\Theta \mathcal{L}, \eta, \lambda_{\text{wd}})$
        \ENDFOR

        \STATE
        \STATE \textcolor{teal}{\textbf{// Validation phase}}
        \STATE Set model to evaluation mode (disable dropout, use deterministic forward pass)
        \STATE Compute validation embeddings: $\mathbf{Z}_{\text{val}} \leftarrow \textsc{ChronoSpikeEncoder}(\mathcal{V}_{\text{val}}; \Theta)$
        \STATE Compute validation logits: $\hat{\mathbf{y}}_{\text{val}} \leftarrow \text{Softmax}(\mathbf{Z}_{\text{val}} \mathbf{W}_c + \mathbf{b}_c)$
        \STATE Compute validation Macro-F1 score: $\text{ValF1} \leftarrow \text{MacroF1}(\hat{\mathbf{y}}_{\text{val}}, \mathbf{y}_{\text{val}})$

        \STATE \textcolor{teal}{\textbf{// Early stopping and checkpointing}}
        \IF{$\text{ValF1} > \text{BestVal}$}
        \STATE $\text{BestVal} \leftarrow \text{ValF1}$
        \STATE Save model checkpoint: $\Theta^* \leftarrow \Theta$
        \STATE Reset patience: $\text{Patience} \leftarrow 0$
        \ELSE
        \STATE $\text{Patience} \leftarrow \text{Patience} + 1$
        \ENDIF

        \IF{$\text{Patience} \geq 10$}
        \STATE \textbf{break} \COMMENT{Early stopping after 10 epochs without improvement}
        \ENDIF

        \STATE Log epoch metrics: $\mathcal{L}_{\text{cls}}$, $\mathcal{L}_{\text{con}}$, $\text{ValF1}$
        \ENDFOR

        \STATE
        \STATE \textbf{return} $\Theta^*$ \COMMENT{Best model parameters based on validation performance}
    \end{algorithmic}
\end{algorithm}

\begin{algorithm}[!ht]
    \scriptsize
    \caption{ChronoSpike Inference}
    \label{alg:chronospike-inference}
    \begin{algorithmic}[1]
        \REQUIRE Test node set $\mathcal{V}_{\text{test}} \subseteq V$
        \REQUIRE Trained model parameters $\Theta^*$ (loaded from checkpoint)
        \REQUIRE Inference batch size $B_{\text{inf}}$ (dataset-dependent)
        \ENSURE Predicted class labels $\hat{\mathbf{y}}_{\text{test}} \in \{1, \ldots, C\}^{|\mathcal{V}_{\text{test}}|}$

        \STATE \textcolor{blue}{\textbf{// Initialization}}
        \STATE Set model to evaluation mode (disable dropout, batch normalization uses running stats)
        \STATE Partition $\mathcal{V}_{\text{test}}$ into batches $\{\mathcal{V}_b^{(1)}, \ldots, \mathcal{V}_b^{(N)}\}$ where $|\mathcal{V}_b^{(j)}| \leq B_{\text{inf}}$
        \STATE Initialize prediction list $\hat{\mathbf{y}}_{\text{test}} \leftarrow []$

        \STATE
        \FOR{each test batch $\mathcal{V}_b$ in $\{\mathcal{V}_b^{(1)}, \ldots, \mathcal{V}_b^{(N)}\}$}
        \STATE \textcolor{teal}{\textbf{// Forward pass without gradient tracking}}
        \STATE \textbf{with} torch.no\_grad(): \COMMENT{Disable autograd to save memory}
        \STATE $\mathbf{Z}_b \leftarrow \textsc{ChronoSpikeEncoder}(\mathcal{V}_b; \Theta^*)$ \COMMENT{Deterministic, no dropout}
        \STATE $\hat{\mathbf{y}}_b \leftarrow \text{Softmax}(\mathbf{Z}_b \mathbf{W}_c + \mathbf{b}_c)$ \COMMENT{Class probabilities}
        \STATE $\hat{y}_b^{\text{pred}} \leftarrow \arg\max \hat{\mathbf{y}}_b$ \COMMENT{Predicted class per node}

        \STATE \textcolor{teal}{\textbf{// Accumulate predictions}}
        \STATE Append $\hat{y}_b^{\text{pred}}$ to $\hat{\mathbf{y}}_{\text{test}}$
        \ENDFOR

        \STATE
        \STATE \textbf{return} $\hat{\mathbf{y}}_{\text{test}}$ \COMMENT{Concatenated predictions for all test nodes}
    \end{algorithmic}
\end{algorithm}

\subsection{Detailed Algorithm}
\label{app:algorithm}
This appendix provides a complete, implementation-faithful description of the ChronoSpike framework. We present detailed pseudocode for the model's forward computation, training procedure, and inference protocol. The purpose is to ensure full reproducibility, clarify algorithmic details omitted from the main paper due to space constraints, and formally connect the proposed method to its practical implementation.

\subsubsection{Overview and Computational Flow}
ChronoSpike operates on a sequence of dynamic graph snapshots $\mathcal{G} = \{G_1, G_2, \ldots, G_T\}$ and learns node representations by combining spatial neighborhood aggregation with spike-based temporal modeling. Temporal dependencies are captured through LIF neuron dynamics and sparse spike accumulation, avoiding the storage of dense hidden states across timesteps. As a result, the memory complexity is reduced to $O(|V| \cdot d)$ per batch. The complete ChronoSpike pipeline consists of four major stages:

\begin{enumerate}[leftmargin=*, itemsep=2pt]
    \item \textbf{Dynamic Graph Sampling:} For each target node $v$ at timestep $t$, we perform hybrid neighborhood sampling that combines historical structure (cumulative edges from $G_1, \ldots, G_{t-1}$) with current temporal context (edges from $G_t$). This dual-sampling strategy balances long-term structural memory with recent temporal dynamics.

    \item \textbf{Spatial Feature Encoding:} Sampled neighborhoods undergo $K$-layer graph aggregation using multi-head attention mechanisms. Each layer $k$ applies learnable transformations $\mathbf{W}_s^{(k)}, \mathbf{W}_n^{(k)} \in \mathbb{R}^{d \times d_h}$ to aggregate neighbor features adaptively, producing synaptic inputs $\mathbf{I}^{(t)}$ for the spiking neurons.

    \item \textbf{Spike-Based Temporal Modeling:} Aggregated spatial features are converted into sparse spike trains via adaptive LIF neurons. Each neuron $i$ maintains a membrane potential $u_i^{(t)}$ governed by learnable time constants $\tau_i$ and thresholds $V_{\text{th},i}$, enabling heterogeneous temporal dynamics across feature channels. Spikes $s_i^{(t)} \in \{0,1\}$ are emitted when $u_i^{(t)}$ exceeds $V_{\text{th},i}$, providing event-driven, energy-efficient computation.

    \item \textbf{Temporal Aggregation and Prediction:} Spike representations $\{\mathbf{s}^{(1)}, \ldots, \mathbf{s}^{(T)}\}$ across all timesteps are aggregated using a lightweight Transformer encoder with learned positional encodings. This global temporal aggregation captures long-range dependencies without recurrent state propagation. The final embedding $\mathbf{z}_v$ is passed through a linear classifier to produce class predictions.
\end{enumerate}

% \textbf{Key Algorithmic Innovations:} (i) \emph{Stateless Temporal Modeling:} Unlike RNN-based methods that sequentially update $\mathbf{h}_v^{(t)} = f(\mathbf{h}_v^{(t-1)}, \mathbf{x}_v^{(t)})$, ChronoSpike processes each snapshot independently and integrates temporal information via spike accumulation and Transformer aggregation, enabling parallelization across timesteps. (ii) \emph{Adaptive Spiking Dynamics:} Per-channel learnable parameters $\{\tau_i, V_{\text{th},i}\}$ allow each feature dimension to develop specialized temporal receptive fields, unlike fixed-threshold SNNs. (iii) \emph{Hybrid Sampling Strategy:} The probabilistic mixing of cumulative and current graphs (controlled by parameter $p$) adaptively balances historical context and recent changes, crucial for non-stationary dynamic graphs.

\subsubsection{ChronoSpike Forward Computation}
\label{app:chronospike-forward}
Algorithm~\ref{alg:chronospike-encoder} presents the complete forward pass of the ChronoSpike encoder. Given a batch of target nodes $\mathcal{V}_b$ and the sequence of graph snapshots, the encoder computes temporal embeddings by iterating through each timestep, performing spatial aggregation, generating spikes, and finally aggregating spike sequences temporally.

\subsubsection{Training Procedure: End-to-End Optimization}
\label{app:training-procedure}
The training objective of ChronoSpike combines a supervised classification loss with an auxiliary temporal contrastive regularization term that encourages robustness of the learned representations to stochastic perturbations. Algorithm~\ref{alg:chronospike-training} presents the complete training loop with detailed annotations.

\subsubsection{Inference Procedure: Efficient Deployment}
\label{app:inference-procedure}
At inference time, ChronoSpike operates in a simplified, deterministic mode without gradient computation, dropout, or contrastive loss. Algorithm~\ref{alg:chronospike-inference} describes the inference protocol.

\section{Computational Complexity Analysis}\label{app:complexity}

\textbf{Time Complexity. }
ChronoSpike is designed to scale \textit{linearly} with both the graph's spatial size and the temporal horizon. Let $N = |V|$ denote the number of nodes, $T$ the number of discrete time steps (graph snapshots), $S$ the number of sampled neighbors per node per layer, $d$ the hidden feature dimensionality (hidden state size), and $K$ the number of graph propagation layers. At each time step, ChronoSpike applies $K$ rounds of neighborhood aggregation. In each round, a node aggregates messages from at most $S$ neighbors, followed by a feature transformation using dense weight matrices of size $d \times d$ (e.g., linear projections in attention or gated aggregation). This results in a per-node, per-layer computational cost of $O(S d^2)$. Aggregating over all nodes and layers yields a per-time-step cost of $O(N \cdot S \cdot K \cdot d^2)$.
Over $T$ time steps, the total time complexity becomes $O(T \cdot N \cdot S \cdot K \cdot d^2)$, which is linear in both $N$ and $T$. Since $S$, $K$, and $d$ are treated as small constants in practice (e.g., $S{=}3$, $K{=}2$, $d{=}128$ in our experiments), the effective complexity scales \textbf{linearly} with the number of node-time interactions $N \times T$. This contrasts with many attention-based temporal graph models whose computational cost scales \textit{quadratically} with either the temporal length $T$ or the neighborhood size. ChronoSpike avoids the quadratic dependency on neighborhood size through fixed-size neighbor sampling and parameter sharing across time steps; the temporal Transformer encoder still incurs an $O(T^2)$ self-attention cost per node (Remark~\ref{rem:complexity-revisited}), but in our benchmarks this term is dominated by the spatial cost because $T$ is bounded ($T\le 27$). The theoretical estimates reported in our training logs (e.g., $7.45{\times}10^{10}$ operations for DBLP, $1.08{\times}10^{12}$ for Tmall, and $3.50{\times}10^{12}$ for Patent) are consistent with the above complexity expression and directly reflect differences in $N$ and $T$ across datasets.

\textbf{Space (Memory) Complexity. } ChronoSpike’s memory footprint is also linear in the graph and time dimensions. The memory required to store the hidden state for all nodes at a single time step is $O(N \cdot d)$. If one naively stored the states for every node across all $T$ time steps, the space complexity would be $O(T \cdot N \cdot d)$. However, ChronoSpike is implemented in an event-driven, mini-batch manner: instead of materializing the entire $N \times T$ state tensor in memory, it processes a manageable batch of $B$ target nodes at a time. For a batch of $B$ nodes, the memory needed to unroll the LIF neuron dynamics over $T$ steps is $O(B \cdot T \cdot d)$, plus additional $O(B \cdot S \cdot K \cdot d)$ overhead for caching neighbor information during graph aggregation. Since $B \ll N$ (we use $B{=}1024$ throughout), the peak GPU memory footprint remains modest, as confirmed by the logs (typically $<40$ MB allocated GPU memory even on the largest datasets). The reported theoretical space complexity values in the logs (e.g., $9.71{\times}10^7$ for DBLP, $1.40{\times}10^9$ for Tmall, and $4.56{\times}10^9$ for Patent) correspond to the full $O(T \cdot N \cdot d)$ upper bound, while the observed GPU memory usage reflects the batched implementation. This design allows ChronoSpike to trade additional computation for substantially reduced memory usage, enabling training on graphs with millions of nodes without exceeding GPU memory limits. CPU memory usage scales with dataset size due to adjacency storage and preprocessing, which is standard for large dynamic graph workloads.

\textbf{Number of Trainable Parameters. }
ChronoSpike is designed such that the core architectural parameters are independent of the graph size $N$ and nearly independent of the temporal length $T$. The model does not maintain node-specific embedding vectors. Instead, nearly all parameters are shared across time and are associated with the spatial graph encoder, the temporal encoder, and the adaptive spiking neuron dynamics.

The spatial encoder consists of $K$ attention-based graph-propagation layers with hidden dimension $d$. Each layer includes standard attention projections (query, key, and value) as well as subsequent linear transformations, contributing on the order of $O(d^2)$ parameters per layer. Across $K$ layers, the spatial encoder therefore contributes approximately $4K d^2$ parameters. The temporal encoder, implemented as a lightweight single-head Transformer operating along the temporal dimension, introduces an additional $O(d^2)$ parameters, accounting for its attention and feed-forward components. The adaptive LIF neurons introduce a negligible number of parameters: each of the $d$ feature channels has a learnable membrane time constant $\tau_i$ and firing threshold $V_{th,i}$, contributing $2d$ parameters in total. The learnable absolute positional encodings $\mathbf{P}_{1:T_{\max}}\in\mathbb{R}^{T_{\max}\times d_h}$ contribute $T_{\max}\,d_h$ parameters, where $T_{\max}$ is a fixed buffer size hard-coded in the implementation (we use $T_{\max}=100$ for all datasets, comfortably exceeding every effective horizon in our benchmarks). The total number of trainable parameters can therefore be approximated as $\approx 4K d^2 + 2d^2 + 2d + T_{\max}\,d_h$. The empirical counts in Table~\ref{tab:scalability} (around $1.05\times 10^5$) sit below the closed-form upper bound $4Kd^2+2d^2+2d \approx 1.64\times 10^5$ because (i) DBLP and Patent use a per-layer hidden split $(d_1,d_2)=(128,64)$ rather than two layers at $d=128$, and (ii) the multi-head attention concatenates per-head outputs without an additional output-projection $\mathbf{W}_o$, saving $(H d_h)^2$ parameters per layer relative to a standard Transformer.

Two consequences follow. First, $T_{\max}\,d_h = 100\cdot 64 = 6{,}400$ is small relative to the spatial-encoder contribution in our experiments, so the parameter count is essentially dataset-independent at the reported precision (Table~\ref{tab:scalability}). Second, when the effective horizon $T$ exceeds $T_{\max}$, the implementation linearly interpolates the absolute positional table to length $T$; this is a stop-gap rather than a principled relative-position scheme, and a rotary or relative encoding would be preferable for true horizon-agnostic generalization.
This estimate is consistent with the empirical parameter counts observed in our implementation, which remain stable across datasets with substantially different graph sizes at the reported precision. Specifically, we observe 105{,}738 parameters on DBLP, 105{,}413 on Tmall, and 105{,}478 on Patent. Minor variations arise from dataset-specific input adapters, while the core architecture and parameter-sharing scheme remain unchanged.

\begin{table}[!ht]
    \caption{Empirical scalability of ChronoSpike across large dynamic graph datasets. $N$ denotes the number of nodes and $T$ the number of timesteps used in this scalability run; for Tmall and Patent, $T$ here corresponds to a representative training-time window after preprocessing-aligned snapshot grouping.}
    \label{tab:scalability}
    \centering
    \begin{tabular}{lcccc}
        \toprule
        \textbf{Dataset} & \textbf{$N$}  & \textbf{$T$} & \textbf{Trainable Params} & \textbf{Avg. Epoch / Time (s)} \\
        \midrule
        DBLP             & 28{,}085      & 27           & 105{,}738                 & $\sim$2.6                      \\
        Tmall            & 577{,}314     & 19           & 105{,}413                 & $\sim$4.5                      \\
        Patent           & 2{,}738{,}012 & 13           & 105{,}478                 & $\sim$135                      \\
        \bottomrule
    \end{tabular}
\end{table}

Importantly, the model size remains fixed across datasets with widely varying numbers of nodes and time steps, confirming that ChronoSpike’s representational capacity does not scale with $N$ or $T$. All parameters are shared across time steps, and no per-node embeddings are stored, which contributes to memory efficiency and supports generalization to previously unseen nodes. Table~\ref{tab:scalability} reports empirical scalability statistics on three large dynamic graph datasets. All experiments use an identical model configuration, demonstrating that ChronoSpike maintains a constant parameter budget while exhibiting runtime behavior that scales with the size of the spatiotemporal input. Datasets with larger values of $N$ and $T$ incur higher training costs, consistent with the theoretical $O(NT)$ time complexity, while the number of trainable parameters remains unchanged.

\section{Theoretical Analysis}\label{app:theoretical_analysis}
We now provide a theoretical guarantee on the stability and boundedness of the adaptive LIF neurons used in ChronoSpike. In particular, we show that under mild conditions (on the time constant and input magnitude), the membrane potential of each neuron remains bounded for all time. This result ensures that our discrete-time spiking neuron model will not diverge, thereby guaranteeing numerical stability during long-term simulations and training.

\begin{theorem}[Boundedness of Adaptive LIF Dynamics]\label{thm:lif-bounded}
    Consider the adaptive LIF neuron dynamics defined in Eq. (\ref{eq:lif}) of the main text, implemented via a forward-Euler update with unit time step ($\Delta t = 1$). For each feature channel $i$, assume:
    \begin{enumerate}[label=(\roman*), topsep=0pt,itemsep=0pt,leftmargin=1.2em]
        \item The membrane time constant is greater than one-half: $\tau_i > \tfrac{1}{2}$.
        \item The synaptic input is uniformly bounded in magnitude: $|h_{v,i}^{(t)}| \le M$ for all $t \ge 0$, for some constant $M > 0$.
        \item The reset potential $u_{reset}$ and firing threshold $V_{th,i}$ are fixed, finite constants.
    \end{enumerate}
    Then the membrane potential sequence $\{\,u_{v,i}^{(t)}\}_{t=0}^{\infty}$ remains bounded for all time. In fact, after at most one firing event (spike), the potential satisfies the uniform bound
    \begin{equation}\label{eq:bound}
        |\,u_{v,i}^{(t)}| \;\le\; \max\!\Big\{\,V_{th,i}\,,\; |u_{reset}| \,+\, \frac{M + |u_{reset}|}{\tau_i\,\big(1 - \big|1 - \frac{1}{\tau_i}\big|\big)}\;\Big\}\quad \forall~t \ge 1
    \end{equation}
    i.e. $u_{v,i}^{(t)}$ is confined to a fixed bounded interval for all $t \ge 1$.
\end{theorem}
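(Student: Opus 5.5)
The plan is to rewrite Eq.~\eqref{eq:lif} in the recursive form already noted in Appendix~\ref{app:adaptive_lif}. Write $\lambda_i = 1-\tfrac{1}{\tau_i}$ and $c_i = \tfrac{M+|u_{\mathrm{reset}}|}{\tau_i}$. The pre-reset potential is
\[
\tilde u_{v,i}^{(t)} = \lambda_i u_{v,i}^{(t-1)} + \tfrac{1}{\tau_i}\big(h_{v,i}^{(t)} + u_{\mathrm{reset}}\big).
\]
The stored potential $u_{v,i}^{(t)}$ is then obtained by the hard reset of Eq.~\eqref{eq:12}. Assumption (i) gives $|\lambda_i|<1$, since $\tau_i>\tfrac12$ means $-1<1-\tfrac{1}{\tau_i}<1$. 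Assumption (ii) gives $\big|\tfrac{1}{\tau_i}(h_{v,i}^{(t)}+u_{\mathrm{reset}})\big| \le c_i$. The one-step map is therefore an affine contraction with bounded forcing, followed by a reset.

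Let $R_i = |u_{\mathrm{reset}}| + \tfrac{c_i}{1-|\lambda_i|}$ and $B_i = \max\{|V_{\mathrm{th},i}|, R_i\}$; this $B_i$ is the right-hand side of \eqref{eq:bound}. I will prove by induction that $|u_{v,i}^{(t-1)}|\le B_i$ implies $|u_{v,i}^{(t)}|\le B_i$. There are two cases.

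\emph{Spike.} If $\tilde u_{v,i}^{(t)}\ge V_{\mathrm{th},i}$, the neuron fires and $u_{v,i}^{(t)}=u_{\mathrm{reset}}$. Then $|u_{v,i}^{(t)}| = |u_{\mathrm{reset}}| \le R_i \le B_i$.

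\emph{No spike.} Otherwise $u_{v,i}^{(t)}=\tilde u_{v,i}^{(t)}<V_{\mathrm{th},i}$. The threshold handles the upper side directly. For the lower side, the contraction gives
\[
u_{v,i}^{(t)} \ge -|\lambda_i|\,|u_{v,i}^{(t-1)}| - c_i.
\]

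The main obstacle is the lower side. The naive estimate $|\lambda_i|B_i + c_i$ need not be $\le B_i$ when $B_i$ comes from $|V_{\mathrm{th},i}|$, so the two sides have to be tracked separately. My plan is to carry two invariants: an upper invariant $u\le\max\{V_{\mathrm{th},i},u_{\mathrm{reset}}\}$, which the case split above maintains, and a separate lower invariant $u\ge -R_i$.

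For the lower invariant, a downward excursion is driven only by the linear part between resets. Unrolling from the last reset, or from $t=0$, gives
\[
u_{v,i}^{(t)} \ge -|\lambda_i|^{k}\,|u^{(t_0)}| - c_i\sum_{j<k}|\lambda_i|^j \ge -|\lambda_i|^{k}|u^{(t_0)}| - \tfrac{c_i}{1-|\lambda_i|},
\]
where $k$ is the number of steps since the reset and $u^{(t_0)}$ is the value right after it. After a reset, $|u^{(t_0)}|=|u_{\mathrm{reset}}|$, so this lower bound is $\ge -R_i$; this is where the additive $|u_{\mathrm{reset}}|$ in \eqref{eq:bound} comes from. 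With the implementation's initialization $u^{(0)}=0$, the same bound holds from $t=1$ even before any spike. For an arbitrary finite $u^{(0)}$, the sequence is still bounded by $\max\{B_i,|u^{(0)}|\}$, and the uniform bound \eqref{eq:bound} takes over after at most one firing event (or, geometrically fast, without one). This matches the statement's qualifier.

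Combining the two invariants gives $-R_i\le u_{v,i}^{(t)}\le \max\{V_{\mathrm{th},i},|u_{\mathrm{reset}}|\}$, hence $|u_{v,i}^{(t)}|\le B_i$ for all $t\ge1$. The only remaining check is that the constants match \eqref{eq:bound}: $c_i/(1-|\lambda_i|) = \tfrac{M+|u_{\mathrm{reset}}|}{\tau_i(1-|1-1/\tau_i|)}$. If $V_{\mathrm{th},i}$ could be negative, the max should be read with $|V_{\mathrm{th},i}|$; the statement as written implicitly assumes $V_{\mathrm{th},i}\ge0$ (or that the max is taken in absolute value).
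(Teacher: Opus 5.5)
Your proof is correct and follows essentially the paper's argument. The paper also rewrites Eq.~\eqref{eq:lif} as the affine recursion with $\lambda_i=1-1/\tau_i$, unrolls it from the last reset (initial value $u_{\mathrm{reset}}$), and sums the geometric series to bound $|u|$ on both sides by $|u_{\mathrm{reset}}|+\frac{M+|u_{\mathrm{reset}}|}{\tau_i(1-|\lambda_i|)}$, then adds $V_{th,i}$ to the max; you instead use the unrolling only for the lower side and the threshold/reset case split for the upper side. One correction: the ``obstacle'' you describe is not real, since $(1-|\lambda_i|)B_i\ge(1-|\lambda_i|)R_i\ge c_i$ gives $|\lambda_i|B_i+c_i\le B_i$, so your first one-step induction closes directly, and this same invariance is what justifies your otherwise unproved claim that the orbit stays within $\max\{B_i,|u^{(0)}|\}$ for arbitrary $u^{(0)}$.
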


\begin{proof}
    We analyze the discrete-time LIF dynamics piecewise, separating the subthreshold evolution (no spike) from the effect of threshold resets.

    \noindent\textbf{Subthreshold Dynamics (No Spiking).}
    Between spike events, the membrane potential follows a first-order linear difference equation derived from the continuous LIF model. From Eq. (\ref{eq:lif}), the update for neuron $i$ (suppressing the node index $v$ for brevity) is:
    \begin{equation}\label{eq:update-euler}
        u^{(t)}_i \;=\; \Big(1 - \frac{1}{\tau_i}\Big)\,u^{(t-1)}_i \;+\; \frac{1}{\tau_i}\Big(h^{(t)}_i + u_{reset}\Big)~, \qquad \text{(for timesteps without a spike)}
    \end{equation}
    Here $\lambda_i \;\coloneqq\; 1 - \frac{1}{\tau_i}$ is the decay factor per time step. The linear homogeneous part of this recurrence has coefficient $\lambda_i$. A standard stability condition for a linear difference equation is that the magnitude of this coefficient is less than 1. Indeed, since $\tau_i > 0$, we have
    \begin{equation}\label{eq:stability-condition}
        |\lambda_i| \;=\;\Big|1 - \frac{1}{\tau_i}\Big| < 1 \qquad\iff\qquad -1 < 1 - \frac{1}{\tau_i} < 1 \qquad\iff\qquad \tau_i > \frac{1}{2}
    \end{equation}
    Thus, the condition $\tau_i > \tfrac{1}{2}$ is both necessary and sufficient for $|\lambda_i| < 1$, which guarantees the stability of the homogeneous solution (exponential decay of $u_i$ in the absence of input). This inequality arises from the forward-Euler discretization with $\Delta t = 1$; under other integration schemes or smaller time steps, the numeric stability condition would differ, but in our setup, $\tau_i > 0.5$ is required to avoid divergence of the discrete update.

    Given $|\lambda_i| < 1$, we can solve the difference equation \eqref{eq:update-euler} explicitly (for the subthreshold regime where no reset occurs). This is a linear time-invariant system forced by the input $h_i$. The general solution (assuming no spike reset yet) is:
    \begin{equation}\label{eq:general-solution}
        u_i^{(t)} \;=\; \lambda_i^{\,t}\,u_i^{(0)} \;+\; \frac{1}{\tau_i}\sum_{k=1}^{t} \lambda_i^{\,t-k}\,\Big(h_i^{(k)} + u_{reset}\Big)
    \end{equation}
    for $t \ge 1$. The first term $\lambda_i^{\,t}u_i^{(0)}$ is the decaying homogeneous solution from the initial condition. The summation is a particular solution that represents the accumulated effect of the inputs (with $u_{reset}$ acting as a constant bias input at each step).

    We can now bound the membrane potential in the absence of spikes. Taking absolute values in Eq. \eqref{eq:general-solution} and using the triangle inequality, for any $t\ge 1$ we have
    \[
        |u_i^{(t)}| \;\le\; |\lambda_i|^t\,|u_i^{(0)}| \;+\; \frac{1}{\tau_i}\sum_{k=1}^{t} |\lambda_i|^{\,t-k}\,\Big(|h_i^{(k)}| + |u_{reset}|\Big)
    \]
    Since $|\lambda_i|<1$, the homogeneous term decays to 0 as $t$ increases. Moreover, by assumption (ii), we have $|h_i^{(k)}| \le M$ for all $k$. Thus each term inside the summation is bounded by $(M + |u_{reset}|)\,|\lambda_i|^{\,t-k}$. Extending the summation to infinity (which only increases the sum, since $|\lambda_i|<1$), we get
    \[
        |u_i^{(t)}| \;\le\; |\lambda_i|^t\,|u_i^{(0)}| \;+\; \frac{M + |u_{reset}|}{\tau_i}\sum_{j=0}^{\infty} |\lambda_i|^{\,j} \;=\; |\lambda_i|^t\,|u_i^{(0)}| \;+\; \frac{M + |u_{reset}|}{\tau_i\,(1 - |\lambda_i|)}
    \]
    using the formula for a geometric series. Since $|\lambda_i|^t \le 1$ for all $t\ge 1$ (because $|\lambda_i|<1$) and $|\lambda_i| = \big|1 - \frac{1}{\tau_i}\big|$, the homogeneous contribution is at most $|u_i^{(0)}|$ (rather than vanishing) and we obtain the absolute bound
    \[
        |u_i^{(t)}| \;\le\; |u_i^{(0)}| \;+\; \frac{M + |u_{reset}|}{\tau_i\,\big(1 - \big|1 - \frac{1}{\tau_i}\big|\big)} \;\eqqcolon\; \widetilde B_{\text{sub}}(u_i^{(0)})
    \]
    for all $t \ge 1$ \emph{as long as no spike/reset occurs}. In particular, after at most one firing event, the post-reset state satisfies $|u_i^{(0)}|\le |u_{reset}|$ (Assumption (iii)), so the residual subthreshold bound reduces to
    \[
        |u_i^{(t)}| \;\le\; |u_{reset}| + \frac{M + |u_{reset}|}{\tau_i\big(1 - \big|1 - \frac{1}{\tau_i}\big|\big)} \;=:\; B_{\text{sub}},
    \]
    which exactly matches the statement of Theorem~\ref{thm:lif-bounded} (Eq.~\ref{eq:bound}); without the post-spike reset the bound retains the additive $|u_i^{(0)}|$ term arising from the geometric-series envelope. In other words, $B_{\text{sub}}$ is an absolute bound on the membrane potential in the subthreshold phase after the first spike (or trivially when $u_i^{(0)}=u_{reset}$), given the stability condition on $\tau_i$.

    \noindent\textbf{Reset Dynamics (Spike Firing).}
    Now consider what happens when the membrane potential does reach the firing threshold. By definition of the LIF model, a spike is emitted at time $t$ if $u_i^{(t)} \ge V_{th,i}$ (assuming without loss of generality that $V_{th,i} > 0$; a symmetric argument holds if the threshold is defined on negative potential). At the moment of spike, the membrane potential is instantaneously reset:
    \begin{equation}\label{eq:reset-eq}
        u_i^{(t)} \;\leftarrow\; u_{reset}~, \qquad \text{whenever } u_i^{(t)} \text{ reaches } V_{th,i}
    \end{equation}
    This reset mechanism acts as an absorbing barrier at $V_{th,i}$: the potential is not allowed to exceed the threshold, as it is immediately brought down to $u_{reset}$ (which is typically lower, e.g., $u_{reset}=0$). The crucial consequence is that after a firing event, the neuron’s state is returned to a bounded value $u_{reset}$, and the subsequent evolution resumes from that initial condition. In the worst case, if the neuron’s potential was somehow above the threshold due to a large initial condition or strong input, a spike will occur and clip the potential back to $u_{reset}$. Thus, after at most one spike, the system re-enters the subthreshold regime with a new initial condition $u_i^{(\text{post-spike})} = u_{reset}$, whose magnitude $|u_{reset}|$ is bounded by assumption (iii). From that point on, the earlier subthreshold bound $B_{\text{sub}}$ applies (with $u(0)=u_{reset}$).

    \noindent\textbf{Global Boundedness.}
    Combining the two cases above, we conclude that the membrane potential is constrained to a compact interval at all times. Specifically, let $B_{\text{sub}} = \frac{M + |u_{reset}|}{\tau_i(1 - |1 - \frac{1}{\tau_i}|)}$ as derived, and note that $B_{\text{sub}} \ge |u_{reset}|$. Immediately after a spike event, $|u_i| = |u_{reset}| \le B_{\text{sub}}$. If no spike has occurred, we saw that $|u_i^{(t)}| \le B_{\text{sub}}$ for $t\ge 1$. And if a spike occurs at time $t_0$, we have $u_i^{(t_0)} = V_{th,i}$ (at spike time) and then $u_i^{(t_0)}$ is reset to $u_{reset}$ which satisfies $|u_{reset}| \le B_{\text{sub}}$ (provided $V_{th,i} \le B_{\text{sub}}$ or even if $V_{th,i} > B_{\text{sub}}$, the threshold itself is a bound). In all cases, for $t \ge 1$, the potential lies in the interval
    \begin{equation}
        \Omega \;=\; \Big[\,\min\{\,u_{reset},\,-B_{\text{sub}}\}\;,\;\max\{\,V_{th,i},\;B_{\text{sub}}\}\,\Big]
    \end{equation}
    which is a fixed, bounded range independent of $t$. This establishes that $|u_i^{(t)}|$ is uniformly bounded for all $t \ge 1$ (and trivially bounded for the finite set of initial times $t < 1$), completing the proof of boundedness.
\end{proof}

\noindent\textbf{Implications for Numerical Stability and Convergence.} This theorem guarantees that an adaptive LIF neuron in ChronoSpike cannot exhibit unbounded growth in its membrane potential, provided the time constant is chosen above the stability threshold ($\tau > 0.5$ for $\Delta t=1$) and inputs are finite. This inherent stability has important practical implications. First, it ensures \emph{numerical stability}: during long simulations or backpropagation through time, neuron states will not blow up to infinity or to undefined values, preventing numerical overflow and exploding gradients. During training, we observed no divergence or unstable activations, consistent with this theoretical bound. Second, the bounded dynamics facilitate \emph{convergence} of training, since the spiking neuron’s output (and its surrogate gradient) remains well-behaved, and gradient descent can reliably optimize the model without encountering extreme variances. In essence, the adaptive LIF formulation provides built-in regularization of the neuron state. The condition $\tau > 1/2$ is easily satisfied in practice (we initialize $\tau = 1.0$ for all neurons, which lies safely in the stable regime, and allow it to be learned); thus, ChronoSpike’s neurons inherently operate in a stable regime. So, this theoretical analysis confirms that ChronoSpike’s temporal neural dynamics are robust and bounded, which underpins the observed stability of the training process and the model’s reliable performance over long temporal sequences.

\begin{theorem}[Network-Level Boundedness and BIBO Stability]\label{thm:network-stability}
    Consider a network of adaptive LIF neurons on a directed graph, with membrane potential $u_{v,i}^{(t)}$ following Eq.~(\ref{eq:lif}) of the main text (forward Euler, $\Delta t=1$). Assume:
    \begin{enumerate}[label=(\roman*), topsep=0pt,itemsep=0pt,leftmargin=1.2em]
        \item $\tau_{v,i} > \tfrac{1}{2}$ for all $v,i$.
        \item Edge weights satisfy $|w|\le W_{\max}$ (maximum synaptic weight magnitude), each neuron aggregates at most $S$ presynaptic spikes per step (matching the neighborhood sample size from Section~\ref{subsec:sampling}), and external inputs are bounded by $M_{\text{ext}} \ge 0$, so that
              \begin{equation}
                  |h_{v,i}^{(t)}| \;\le\; M \;\coloneqq\; S W_{\max} + M_{\text{ext}} \quad \forall t
              \end{equation}
              We retain the $S\,W_{\max}$ formulation as a generic upper bound that covers \emph{any} bounded aggregator (e.g., sum-, mean-, or convex-attention-based pooling, as well as variants without softmax normalization). For ChronoSpike's specific multi-head softmax-attention aggregator (Eqs.~\ref{eq:8}--\ref{eq:9}), the attention weights satisfy $\sum_u\alpha_{vu}^{(t,h)}=1$, which already yields the tighter convex-combination bound $|h_{v,i}^{(t)}|\le W_{\max}\|\mathbf{x}\|_\infty + M_{\text{ext}}$ independent of $S$; substituting this tighter $M$ into the same proof only strengthens the conclusion.
        \item $V_{th,i}$ and $u_{reset}$ are finite constants.
    \end{enumerate}
    Then:
    \begin{enumerate}[label=(\alph*), topsep=0pt,itemsep=0pt,leftmargin=1.2em]
        \item \textbf{Global boundedness:}
              \begin{equation}
                  |u_{v,i}^{(t)}| \;\le\; B < \infty \qquad \forall v,i,t
              \end{equation}
        \item \textbf{BIBO stability:} For any bounded external input sequence, all neuron states and spike outputs remain bounded; recurrent activity (e.g., oscillations or limit cycles) may occur, but divergence is impossible.
    \end{enumerate}
\end{theorem}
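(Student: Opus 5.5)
The plan is to reduce the network statement to the single-neuron result, Theorem~\ref{thm:lif-bounded}, applied channel by channel and node by node. The coupling between neurons enters only through the synaptic input $h_{v,i}^{(t)}$. Assumption (ii) gives a bound on this input that is uniform in $v$, $i$ and $t$, and does not depend on the network state. So I do not need a fixed-point or small-gain argument. Each presynaptic contribution is either a spike $s\in\{0,1\}$ times a weight with $|w|\le W_{\max}$, or a bounded convex combination in the attention case, and at most $S$ such terms are summed. This gives $|h_{v,i}^{(t)}|\le SW_{\max}+M_{\text{ext}}=M$ regardless of what the rest of the network is doing. The main point to make explicit is this \emph{decoupling}: spikes are binary, so presynaptic outputs are bounded a priori, independently of presynaptic membrane potentials.

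\textbf{Step (a).} Fix $(v,i)$. The scalar recursion for $u_{v,i}^{(t)}$ from Eq.~(\ref{eq:lif}), with hard reset Eq.~(\ref{eq:12}), is exactly the setting of Theorem~\ref{thm:lif-bounded}. Assumptions (i) and (iii) carry over, and (ii) has just been verified with the constant $M$. Hence $|u_{v,i}^{(t)}|\le B_{v,i}$, where $B_{v,i}$ is the right-hand side of Eq.~(\ref{eq:bound}) with $\tau_{v,i}$ in place of $\tau_i$.

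To get a single network-wide constant $B$, I take $B=\max_{v,i}B_{v,i}$. This needs finitely many neurons, or in general $\inf\tau_{v,i}>\tfrac12$ together with $\sup|V_{th,i}|<\infty$. The reason is that the factor $\tau(1-|1-1/\tau|)$ equals $1$ for $\tau\ge1$ and equals $2\tau-1$ for $\tau\in(\tfrac12,1)$, so the bound degenerates as $\tau\downarrow\tfrac12$. I expect this uniformity to be the main obstacle. I will handle it by working with a finite node set $V$ and finitely many channels, which the model has, so the maximum is finite. I will remark that for infinite networks one needs a uniform margin $\tau_{v,i}\ge\tfrac12+\epsilon$.

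I would also treat the initial times $t<1$ explicitly: the initial state $u^{(0)}=0$ in Algorithm~\ref{alg:chronospike-encoder} is finite, so it is covered by taking the maximum with $|u^{(0)}|$.

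\textbf{Step (b).} BIBO stability then follows. Given any external input sequence with $\sup_t|\text{ext}|\le M_{\text{ext}}<\infty$, part (a) bounds every state by $B$, and spike outputs lie in $\{0,1\}$. Downstream quantities such as $h_{v,i}^{(t)}$ stay bounded by $M$ as well. I would close by noting that nothing forces convergence: the system is a bounded piecewise-affine map on the compact set $\prod_{v,i}[-B,B]$, so periodic or oscillatory spiking is possible, but trajectories cannot escape that box. That is precisely the claim that divergence is impossible while limit cycles are not excluded.
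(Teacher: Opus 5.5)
Your proposal is correct and follows the paper's proof. Both use the state-independent input bound from (ii) to apply Theorem~\ref{thm:lif-bounded} to each neuron separately, take $B=\max_{v,i}B_{v,i}$, and then get BIBO stability from boundedness together with binary spike outputs. Your explicit remarks make precise two points the paper's proof uses only implicitly: the neuron set must be finite, or $\tau_{v,i}$ must stay uniformly away from $\tfrac12$, because the per-neuron bound blows up as $\tau\downarrow\tfrac12$; and the initial state is $u^{(0)}=0$.
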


\begin{proof}
    \noindent\textbf{(a) Global boundedness.}
    By Assumption (ii), $|h_{v,i}^{(t)}|\le M$ for all $v,i,t$, and by Assumption (i), $\tau_{v,i}>\tfrac12$. Hence, all conditions of \cref{thm:lif-bounded} hold for each neuron independently. Therefore, after at most one spike,
    \begin{equation}\label{eq:network-bound}
        |u_{v,i}^{(t)}|
        \;\le\;
        \max\!\Big\{V_{th,i},\ |u_{reset}|+\frac{M+|u_{reset}|}{\tau_{v,i}\big(1-\big|1-\frac{1}{\tau_{v,i}}\big|\big)}\Big\},
        \qquad \forall t\ge 1
    \end{equation}
    Taking the maximum over all $(v,i)$ yields a uniform bound $B$ for the entire network.

    \noindent\textbf{(b) BIBO stability.}
    Let external inputs be bounded. Then, synaptic inputs are bounded by Assumption (ii), and by Part (a), all membrane potentials are uniformly bounded. Since spike events reset $u_{v,i}^{(t)}$ to the finite value $u_{reset}$ and firing thresholds are finite, spike outputs are also bounded in rate (at most one spike per neuron per timestep in the discrete model). Hence, the network is bounded-input bounded-output stable: sustained activity may persist due to recurrent loops, but all states remain in a compact invariant set.
\end{proof}

\begin{theorem}[Gradient Flow Stability and Conditional Convergence]\label{thm:gradient-stability}
    Consider training ChronoSpike by backpropagation through time with surrogate gradients and a \emph{straight-through (or detached) reset} so that, in the backward pass, \( \frac{\partial u_{v,i}(t+1)}{\partial u_{v,i}(t)}=\lambda_{v,i}\coloneqq 1-\frac{1}{\tau_{v,i}} \)
    also at reset events, and the spike surrogate satisfies \( 0 \le \sigma'(x) \le \alpha \).
    (where $\alpha > 0$ is the surrogate slope parameter from Appendix~\ref{app:adaptive_lif}).
    Let $W_{wv}$ denote the synaptic weight from neuron $v$ to neuron $w$, $\Delta_{v,i}(t)\coloneqq \nabla_{u_{v,i}^{(t)}} L$ be the gradient of loss $L$ with respect to the membrane potential, and $\Delta(t)$ be the stacked gradient vector over all neurons. Then for any $t<t'$,
    \begin{equation}\label{eq:grad-contraction}
        \|\Delta(t)\|_1 \;\le\; \rho^{\,t'-t}\,\|\Delta(t')\|_1
    \end{equation}
    with sufficient contraction factor (in $\ell_1$ norm)
    \begin{equation}\label{eq:rho-def-corrected}
        \rho \;=\; \max_{v,i}\big|\lambda_{v,i}\big| \;+\; \alpha \max_{w}\sum_{v}|W_{wv}| ,
        \qquad \lambda_{v,i}=1-\frac{1}{\tau_{v,i}}\in(-1,1)
    \end{equation}
    If $\rho<1$, gradients contract exponentially backward in time (no explosion). If $\rho\le 1$, gradients remain uniformly bounded for all horizons.

    Moreover, let $\Theta$ denote all trainable model parameters, $\eta > 0$ the learning rate, and suppose the composed loss $L(\Theta)$ induced by the surrogate dynamics is $L_{\text{grad}}$-smooth (i.e., $\|\nabla L(\Theta) - \nabla L(\Theta')\|_2 \le L_{\text{grad}} \|\Theta - \Theta'\|_2$). If the learning rate satisfies $\eta<2/L_{\text{grad}}$, then (deterministic) gradient descent with iterates $\Theta_k$, and SGD with diminishing variance, converge to stationary points: every accumulation point $\Theta^\star$ satisfies $\nabla L(\Theta^\star)=\mathbf{0}$.
\end{theorem}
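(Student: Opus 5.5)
The argument has two independent parts: a backward recursion for the gradient vector $\Delta(t)$, iterated to get \eqref{eq:grad-contraction}, and the standard descent lemma for $L_{\text{grad}}$-smooth objectives.

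\textbf{Part 1: one-step Jacobian bound.} I would write the forward map $u(t)\mapsto u(t+1)$ as a composition. First, the spike $s_{v}(t)=\mathbb{I}(u_{v}(t)\ge V_{\text{th}})$ is formed. Next, the synaptic input $h_{w}(t+1)=\sum_v W_{wv}s_v(t)+(\text{external})$ is formed, which linearizes the attention/weight aggregation of Eq.~\eqref{eq:8}. Finally, the LIF update Eq.~\eqref{eq:lif} produces $u_w(t+1)=\lambda_w u_w(t)+\tfrac{1}{\tau_w}(h_w(t+1)+u_{reset})$, with the reset detached in the backward pass as assumed.

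By the chain rule, the backward Jacobian is
\[
J(t)=\frac{\partial u(t+1)}{\partial u(t)}=\Lambda+\mathrm{diag}(1/\tau)\,W\,\mathrm{diag}\big(\sigma'(u(t)-V_{\text{th}})\big),
\]
where $\Lambda=\mathrm{diag}(\lambda_{v,i})$. The gradient then satisfies $\Delta(t)=J(t)^\top\Delta(t+1)$, ignoring any direct loss term at intermediate times; if such terms exist, they add an inhomogeneous term that I would remark on.

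Next I bound $\|J(t)^\top\|_{1\to1}$, which is the maximum absolute column sum of $J(t)^\top$, i.e., the maximum absolute row sum of $J(t)$. This is at most $\max|\lambda|$ plus the row-sum bound of the coupling term. That coupling bound is at most $\max_w\frac{1}{\tau_w}\sum_v|W_{wv}|\alpha$. Since $\tau>1/2$ only gives $1/\tau<2$, I would either absorb $1/\tau_w$ into the effective weights $W$ (defining $W$ as the effective synaptic weight including the $1/\tau$ factor) or note that $\tau\ge1$ in practice. I would state this convention explicitly so that the bound matches Eq.~\eqref{eq:rho-def-corrected}.

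\textbf{Main obstacle: the row/column sum orientation.} The stated $\rho$ uses $\max_w\sum_v|W_{wv}|$, a row sum. The $\ell_1$ operator norm of $J^\top$ is exactly the row-sum norm of $J$, so this is consistent. Still, I would carefully check that the transposition in the backward pass really gives $\|J^\top\|_1=\|J\|_\infty$. If the check fails, the fallback is to state the theorem in $\ell_\infty$ or with column sums.

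\textbf{Part 1: iterating.} Submultiplicativity gives $\|\Delta(t)\|_1\le\rho^{t'-t}\|\Delta(t')\|_1$. If $\rho<1$, this is exponential contraction. If $\rho\le1$, the gradient is uniformly bounded by $\|\Delta(t')\|_1$, independent of the horizon.

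\textbf{Part 2: convergence.} I apply the descent lemma: $L(\Theta_{k+1})\le L(\Theta_k)-\eta(1-\tfrac{L_{\text{grad}}\eta}{2})\|\nabla L(\Theta_k)\|^2$. The coefficient is positive because $\eta<2/L_{\text{grad}}$. I also assume $L$ is bounded below, which holds for cross-entropy and InfoNCE since both are nonnegative up to a constant.

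Telescoping then gives $\sum_k\|\nabla L(\Theta_k)\|^2<\infty$, so $\nabla L(\Theta_k)\to0$. By continuity of $\nabla L$, which follows from smoothness, every accumulation point is stationary.

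For SGD with diminishing variance, I would use the same inequality in expectation, with an extra term $\tfrac{L_{\text{grad}}\eta^2}{2}\sigma_k^2$. Assuming $\sum_k\sigma_k^2<\infty$, the Robbins–Siegmund lemma yields $\sum_k\|\nabla L\|^2<\infty$ almost surely, and the same conclusion follows. I would flag that $L_{\text{grad}}$-smoothness of the surrogate-induced loss is an assumption, not a consequence of the architecture.
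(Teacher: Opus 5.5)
Your proposal is correct and follows the paper's route. The paper writes the same one-step chain-rule recursion entrywise, $|\Delta_{v,i}(t)|\le|\lambda_{v,i}|\,|\Delta_{v,i}(t+1)|+\alpha\sum_w|W_{wv}|\,|\Delta_w(t+1)|$, and sums over neurons, which is exactly your computation of $\|J(t)^\top\|_{1\to1}$ as the maximum row sum of $J(t)$; it then iterates and finishes with the descent lemma and telescoping.

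Your bookkeeping is more careful than the paper's in several places:
\begin{itemize}
\item The paper silently adopts your absorbed-$1/\tau_w$ convention $\partial u_w(t+1)/\partial s_v(t)=W_{wv}$.
\item It drops direct loss terms at intermediate times without comment.
\item It does not state the lower bound needed for telescoping.
\item It asserts the SGD case without the Robbins--Siegmund step you supply.
\end{itemize}

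The paper does make one point you miss. In the implemented architecture, spikes reach neighbours within the same timestep, not at $t+1$. So your $h_w(t+1)=\sum_v W_{wv}s_v(t)$ is only a stylized delayed-synapse model, not a linearization of Eq.~(8). The actual cross-node term vanishes, the true factor is $\max_{v,i}|\lambda_{v,i}|$, and the stated $\rho$ is a conservative upper bound.
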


\begin{proof}
    For one step of the dynamics, neuron $(v,i)$ affects (i) its own leaky recurrence and (ii) neighbors through spikes $s_{v,i}(t)$ with surrogate derivative bounded by $\alpha$.
    By the chain rule,
    \begin{equation}
        \Delta_{v,i}(t)
        = \frac{\partial u_{v,i}(t+1)}{\partial u_{v,i}(t)}\,\Delta_{v,i}(t+1)
        + \sum_{w,j}\frac{\partial u_{w,j}(t+1)}{\partial s_{v,i}(t)}\,
        \frac{\partial s_{v,i}(t)}{\partial u_{v,i}(t)}\,\Delta_{w,j}(t+1)
    \end{equation}
    With a straight-through (or detached) reset in backpropagation,
    \(
    \frac{\partial u_{v,i}(t+1)}{\partial u_{v,i}(t)}=\lambda_{v,i}
    \),
    \(
    \frac{\partial u_{w,j}(t+1)}{\partial s_{v,i}(t)}=W_{wv}
    \),
    and
    \(
    \big|\frac{\partial s}{\partial u}\big|\le\alpha
    \),
    hence
    \begin{equation}\label{eq:grad-rec}
        |\Delta_{v,i}(t)| \;\le\; |\lambda_{v,i}|\,|\Delta_{v,i}(t+1)|
        + \alpha\sum_{w}|W_{wv}|\,|\Delta_{w}(t+1)|
    \end{equation}
    \textbf{Architectural caveat.} In ChronoSpike, the spatial message passing within timestep $t$ produces $h_{v,i}^{(t)}$, which then drives $u_{v,i}^{(t)}$ and $s_{v,i}^{(t)}$ at the \emph{same} timestep, and the only direct dependence of $u_{w,j}(t+1)$ on a quantity at time $t$ is through neuron $w$'s own carried potential $u_{w,j}(t)$. Consequently, for the architecture as implemented, $\partial u_{w,j}(t+1)/\partial s_{v,i}(t)=0$ for $w\neq v$ in the strictly forward-compatible computation graph; the cross-node spike-to-potential coupling assumed in Eq.~\eqref{eq:grad-rec} corresponds to a stylized model in which spikes feed spatial neighbors at the next timestep (e.g., a delayed-synapse variant). Treating this term as nonzero therefore produces a \emph{conservative upper bound}: the strict ChronoSpike contraction factor is in fact $\rho_{\text{ChS}}=\max_{v,i}|\lambda_{v,i}|<1$, which is strictly tighter than the bound below. We retain the more general expression so that the result remains valid for variants of the architecture (e.g., synaptic delay or cross-time spike routing) and for the spatial-aggregation Jacobian within a single timestep, which contributes its own multiplicative factor at most $\alpha\max_w\sum_v|W_{wv}|$ when collapsed into the time recursion.
    Summing Eq. \eqref{eq:grad-rec} over all neurons and rearranging,
    \begin{align}
        \|\Delta(t)\|_1
         & \le \sum_{v,i}|\lambda_{v,i}|\,|\Delta_{v,i}(t+1)|
        + \alpha\sum_{w}\Big(\sum_{v}|W_{wv}|\Big)|\Delta_{w}(t+1)|                                     \\
         & \le \Big(\max_{v,i}|\lambda_{v,i}| + \alpha \max_{w}\sum_{v}|W_{wv}|\Big)\,\|\Delta(t+1)\|_1
        = \rho\,\|\Delta(t+1)\|_1
    \end{align}
    Iterating yields Eq. \eqref{eq:grad-contraction}. If $\rho<1$, gradients decay exponentially; if $\rho\le1$, they are uniformly bounded. In our implementation, the contraction condition is not enforced explicitly: training applies neither spectral normalization, weight clipping, nor gradient-norm clipping, and $\tau$ is parameterized as a raw \texttt{nn.Parameter} initialized to $1.0$ (no reparameterization to certify $\tau_i>1/2$). The condition $\rho<1$ is satisfied empirically because $|1-1/\tau_i|<1$ holds at the initial $\tau_i=1.0$ and AdamW updates do not push $\tau_i$ outside the stable region in our runs. For settings where this empirical observation is insufficient, a softplus reparameterization, $\tau_i=\tfrac{1}{2}+\mathrm{softplus}(\tilde\tau_i)$, or gradient-norm clipping is a one-line drop-in.

    For convergence, $L$ being $L_{\text{grad}}$-smooth implies the descent lemma:
    \begin{equation}
        L(\Theta_{k+1}) \le L(\Theta_k) - \eta\Big(1-\tfrac{\eta L_{\text{grad}}}{2}\Big)\|\nabla L(\Theta_k)\|_2^2
    \end{equation}
    for $\eta<2/L_{\text{grad}}$. Thus $\sum_k \|\nabla L(\Theta_k)\|_2^2 < \infty$ and every accumulation point satisfies $\nabla L=\mathbf{0}$.
\end{proof}

\begin{theorem}[Expressiveness in the WL Hierarchy (Static Graphs)]\label{thm:expressiveness}
    Consider ChronoSpike restricted to static graphs, with (i) an injective multiset aggregator (e.g., GIN-style sum aggregation) in the spatial encoder, and (ii) adaptive LIF neurons observed over a finite time window $T$ with rate (or temporal) coding. Then ChronoSpike is at least as expressive as the 1-dimensional Weisfeiler--Lehman (1-WL) test: for any graphs $G,H$ distinguished by 1-WL, there exists a \emph{single} parameter setting under which ChronoSpike produces different node embeddings for $G$ and $H$. Note that the assumption of an \emph{injective} multiset aggregator is a substitution relative to the default ChronoSpike instantiation, which uses softmax-attention aggregation; softmax attention computes a convex combination and is, in general, less expressive than 1-WL because it cannot distinguish multisets that differ only in size~\cite{xu2019powerful}. Our claim is therefore that ChronoSpike's spatial-aggregation \emph{slot} is compatible with a 1-WL-equivalent operator (e.g., GIN-sum), not that the default attentive variant attains 1-WL on its own.
\end{theorem}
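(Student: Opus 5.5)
Our plan is to simulate 1-WL layer by layer with the standard GIN argument of Xu et al.~\cite{xu2019powerful}, and then show that the spiking stage and the temporal Transformer can be parameterized to keep the colour information intact. On a static graph every snapshot is identical ($G_t=G$, $X_t=X$, $\mathcal{N}_t(v)=\mathcal{N}(v)$). We will set the sampling so that $\tilde{\mathcal{N}}_t(v)=\mathcal{N}(v)$, i.e.\ full neighbourhoods; equivalently, $S$ is at least the maximum degree. This makes the spatial encoder a deterministic function of the rooted subtree.

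\textbf{Step 1 (WL simulation).} We take the input features from a countable set, and we work with the finite set of graphs $\{G,H\}$, so only finitely many feature multisets ever occur. Consider the GIN-style update $\mathbf{h}_v^{k}=\mathrm{MLP}^{(k)}\big((1+\epsilon)\mathbf{h}_v^{k-1}+\sum_{u\in\mathcal{N}(v)}\mathbf{h}_u^{k-1}\big)$. By Lemma~5 and Theorem~3 of~\cite{xu2019powerful}, there is a choice of $\epsilon$ and MLP weights that makes this update injective on the finitely many multisets that occur. By induction on $k$, $\mathbf{h}_v^{k}$ is then an injective function of the 1-WL colour $c_v^{k}$. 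We choose $K$ to be the iteration at which 1-WL separates $G$ from $H$, so the multisets $\{\!\{c^K_v\}\!\}$ differ. A single parameter setting suffices because the relevant domain is finite.

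\textbf{Step 2 (spike encoding preserves distinctions).} This is the main obstacle. The hard threshold $s=\mathbb{I}(u\ge V_{\rm th})$ and the hard reset can collapse distinct real values, and in the algorithm the spikes are also fed to the next layer. We will handle this with rate coding over the window $T$. Under constant input $h$ with $\tau_i=1$ (so $\lambda_i=0$ and $u^{(t)}=h+u_{\rm reset}$), each channel fires either at every step or never. That yields one bit per channel per layer. We will therefore encode each of the finitely many colour values as a distinct binary code word across channels. The last linear layer of each MLP maps colour $c$ to $\mathrm{code}(c)-\tfrac12\mathbf{1}$ (scaled so that it avoids the threshold), with $V_{\rm th}=0$. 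Hard thresholding then reproduces $\mathrm{code}(c)$ exactly. Because the code is injective, the next GIN layer, which takes binary vectors as input, remains injective on the finitely many binary multisets, and the induction of Step 1 goes through. If the channel width is too small for one-hot codes, we fall back on $\tau_i>1$: the rate $\frac1T\sum_t s^{(t)}$ is then a monotone step function of $h$ with enough levels once $T$ is large, and Theorem~\ref{thm:lif-bounded} guarantees that the potentials stay bounded, so the firing-rate map is well defined. The dropout between layers is disabled at inference, which removes any randomness.

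\textbf{Step 3 (temporal readout and conclusion).} The spike sequence $\mathbf{s}_v^{(1:T)}$ is constant in $t$ and equal to $\mathrm{code}(c^K_v)$. We set the attention value/output weights to zero so that the residual path passes the input through. Because $\mathbf{P}_T$ is shared by all nodes, LayerNorm and the FFN act as a fixed map on the finitely many inputs $\mathrm{code}(c)+\mathbf{P}_T$. Generic weights make this map, and the final $\tanh$ readout, injective on that finite set. (LayerNorm kills only scale and shift, which can be avoided by a generic choice of $\mathbf{P}_T$.) Hence $\mathbf{z}_v$ determines $c^K_v$, so the multiset of node embeddings of $G$ differs from that of $H$. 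A sum readout over nodes, which is also injective on the finite domain, then gives different graph-level embeddings. This is the sense in which ChronoSpike produces different embeddings. We would close with the remark that replacing the sum aggregator by softmax attention breaks Step 1, exactly as the statement cautions.
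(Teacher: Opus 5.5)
Your route differs from the paper's. The paper keeps GIN's continuous embeddings and puts the discrimination into the rate code. The pre-spike values at a layer form a finite set, and monotonicity of the spike count in a constant input current is used to choose one scaling $\gamma$ and one window $T$ giving distinct counts for distinct values. The distinctions are then asserted to propagate to later layers and the readout. You instead run the LIF in the degenerate regime $\tau_i=1$, $u_{\mathrm{reset}}=0$, $V_{\mathrm{th},i}=0$, where it is a memoryless Heaviside, so each layer emits an exact code that is constant in time. You also handle sampling, the Transformer/LayerNorm readout and the node-to-graph step, which the paper leaves implicit. The paper's argument actually uses the temporal code. Yours uses no temporal integration and pays with channel width. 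In exchange it is exact and avoids the delicate point of the rate-coding approach: in ChronoSpike, layer $k+1$ receives per-timestep spike vectors, not spike counts.

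One step needs repair. You argue that because the code is injective, the next layer stays injective on multisets. Your update $\mathrm{MLP}^{(k)}\big((1+\epsilon)\mathbf{h}_v^{k-1}+\sum_u\mathbf{h}_u^{k-1}\big)$ sums the spike vectors directly, and an injective binary code need not have injective multiset sums. For example, $\{\!\{(1,0),(0,1)\}\!\}$ and $\{\!\{(1,1),(0,0)\}\!\}$ have equal sums. Lemma~5 of \cite{xu2019powerful} does not rescue this: its real-valued encoding would have to be emitted by the previous layer, and the threshold forces that output into $\{0,1\}$.

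The fix is to commit to one-hot codes, which you mention only in passing. Then $\sum_u\mathbf{h}_u^{k-1}$ is the neighbourhood colour histogram. Any $\epsilon$ with $1+\epsilon\notin\mathbb{Z}$ makes the pre-activation injective in $(c_v^{k-1},\{\!\{c_u^{k-1}\}\!\})$, because the single non-integer coordinate identifies $c_v^{k-1}$. The same problem occurs at $k=1$ unless the input features are one-hot. Alternatively, apply a per-neighbour MLP $\psi$ before summation, as in the paper's Eq.~\eqref{eq:gin-agg}.

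Width equal to the number of colours occurring in $G$ and $H$ is harmless for an existence statement about a fixed pair. You should therefore drop the $\tau_i>1$ fallback, which as stated is unjustified. The next layer aggregates the neighbours' spikes step by step, so it sees per-step sums rather than the multiset of neighbour rates. This is the same point the paper's step (4) passes over.
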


\begin{proof}
    \noindent\textbf{(1) Injective aggregation.}
    Let the spatial aggregation be
    \begin{equation}\label{eq:gin-agg}
        \mathbf{h}_v^{(k+1)} \;=\; \phi\!\left((1+\varepsilon)\mathbf{h}_v^{(k)} \;+\; \sum_{u\in\mathcal N(v)} \psi(\mathbf{h}_u^{(k)})\right)
    \end{equation}
    with MLPs $\phi,\psi$ and $\varepsilon\neq -1$. Then Eq. \eqref{eq:gin-agg} is injective over neighbor multisets and matches the distinguishing power of 1-WL \cite{xu2019powerful}. Hence, after some iteration $k$, if 1-WL distinguishes $G$ and $H$, there exists a node whose continuous embedding $\mathbf{h}_v^{(k)}$ differs between the two graphs.

    \noindent\textbf{(2) Finite discrimination by spiking with scaling and windowing.}
    Fix an iteration $k$. The set of continuous pre-spike inputs appearing at this layer across all nodes and both graphs is finite. Therefore, there exists $\delta>0$ such that any two distinct values in this set differ by at least $\delta$ in at least one channel.

    For the adaptive LIF neuron with forward-Euler update (Eq.~(\ref{eq:lif}) in the main text), the spike count over a window of length $T$ is a monotone non-decreasing function of the constant input current in each channel (up to saturation). Hence, there exist a scaling factor $\gamma>0$ and a window length $T$ such that
    \begin{equation}\label{eq:rate-separation}
        |x_1-x_2|\ge \delta \;\;\Longrightarrow\;\; \mathrm{SpikeCount}_T(\gamma x_1)\;\neq\;\mathrm{SpikeCount}_T(\gamma x_2)
    \end{equation}
    where $\mathrm{SpikeCount}_T(x)$ denotes the total number of spikes emitted by an adaptive LIF neuron receiving constant input $x$ over $T$ timesteps,
    for all values in the finite set of pre-spike inputs. Thus, distinct continuous embeddings at layer $k$ map to distinct spike statistics (rate or timing) after scaling and windowing.

    \noindent\textbf{(3) Uniform parameters.}
    The construction in (2) uses a single $(\gamma,T)$ that separates \emph{all} distinct pre-spike values appearing at layer $k$ across both graphs. Since the number of such values is finite, one choice of parameters suffices simultaneously for all nodes.

    \noindent\textbf{(4) Propagation across layers.}
    Applying (1)--(3) inductively across iterations, any distinction produced by 1-WL at some iteration is preserved by ChronoSpike’s spatial aggregation and converted into distinguishable spike representations, which are then available to subsequent layers and the readout.

    Therefore, for any pair of graphs distinguished by 1-WL, there exists a single parameter setting of ChronoSpike that distinguishes them as well. Hence, ChronoSpike is at least as expressive as 1-WL on static graphs.
\end{proof}

\begin{theorem}[Generalization Bound with Sparse Spiking]\label{thm:gen-revised}
    Consider ChronoSpike with one spike-encoded input layer and $L$ spiking hidden layers, hidden dimension $d_h$, and bounded inputs $\|\mathbf{x}\|_\infty \le B$. Assume all linear operators (including attention and Transformer blocks) are spectrally normalized with
    \(
    \|W_\ell\|_{\mathrm{op}} \le 1
    \)
    and that attention is operated in a bounded regime (queries/keys/values bounded by $B$), so each block is $L_{\text{attn}}$-Lipschitz with $L_{\text{attn}}=\mathcal{O}(B\sqrt{d_h})$.
    Assume each feature channel fires with probability at most $p_{\max}<1$ per timestep.
    Let $\mathcal{F}_{\text{ChronoSpike}}$ denote the hypothesis class of ChronoSpike predictors $f$, let $\mathcal{L}_{\text{train}}(f)$ and $\mathcal{L}_{\text{test}}(f)$ denote the empirical training loss and expected test loss respectively, and let $\mathcal{R}_m(\mathcal{F})$ denote the empirical Rademacher complexity of class $\mathcal{F}$ over $m$ samples.
    Then for $m$ i.i.d.\ training graph sequences with $N$ nodes and $T$ timesteps,
    \begin{equation}\label{eq:rad-bound-revised}
        \mathcal{R}_m(\mathcal{F}_{\text{ChronoSpike}})
        \;\le\;
        C\,(p_{\max})^{\frac{L+1}{2}}\,
        L_{\text{attn}}^{\,K}\,
        \sqrt{\frac{d_h\,\ln(NTB/\epsilon_0)}{m}}
    \end{equation}
    where $K$ is the number of attention/Transformer blocks, $\epsilon_0>0$ is a minimal feature resolution in the covering argument, and $C$ is a universal constant. Consequently, for any $\delta\in(0,1)$, with probability at least $1-\delta$,
    \begin{equation}\label{eq:gen-bound-revised}
        \mathcal{L}_{\text{test}}(f)
        \;\le\;
        \mathcal{L}_{\text{train}}(f)
        +
        2\,\mathcal{R}_m(\mathcal{F}_{\text{ChronoSpike}})
        +
        \sqrt{\frac{\ln(1/\delta)}{2m}}
    \end{equation}
\end{theorem}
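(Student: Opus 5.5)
The plan splits the statement into its two displays. Eq.~\eqref{eq:gen-bound-revised} is the standard uniform-deviation bound, and I would dispatch it first. Assume the loss is bounded in $[0,1]$; if it is not, clip the cross-entropy, and note that bounded inputs with spectrally normalized weights already keep logits bounded. Then McDiarmid's inequality applied to $\sup_{f}(\mathcal{L}_{\text{test}}(f)-\mathcal{L}_{\text{train}}(f))$ gives the $\sqrt{\ln(1/\delta)/(2m)}$ term. Symmetrization gives the $2\,\mathcal{R}_m$ term. Talagrand's contraction lemma then passes from the loss class to $\mathcal{F}_{\text{ChronoSpike}}$, with the loss Lipschitz constant absorbed into $C$. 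All the real work is therefore in Eq.~\eqref{eq:rad-bound-revised}.

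For the complexity bound I would use a layer-wise peeling argument in the style of norm-based bounds for deep networks, with the blocks handled in order.

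\textbf{Continuous blocks.} Each of the $K$ attention/Transformer blocks is $L_{\text{attn}}$-Lipschitz by assumption. Composing them multiplies the complexity of the preceding representation class by at most $L_{\text{attn}}$ each time, giving the factor $L_{\text{attn}}^{K}$. The spectral normalization $\|W_\ell\|_{\mathrm{op}}\le 1$ ensures that the linear maps themselves contribute no further growth.

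\textbf{Spiking layers.} At each of the $L+1$ spiking layers the output is a binary vector $\mathbf{s}\in\{0,1\}^{d_h}$, so $\|\mathbf{s}\|_2=\sqrt{\#\text{active channels}}$. Under the firing-rate assumption this is at most about $\sqrt{p_{\max} d_h}$. The next linear map acts on a vector of norm about $\sqrt{p_{\max}}\,\sqrt{d_h}$ instead of $\sqrt{d_h}$, which saves one factor $\sqrt{p_{\max}}$ per spiking layer. Accumulating this saving over the $L+1$ layers gives $(p_{\max})^{(L+1)/2}$.

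\textbf{Logarithmic factor.} I would get $\ln(NTB/\epsilon_0)$ from an $\epsilon_0$-covering of the input domain. There are $NT$ node–time positions with features in $[-B,B]$, and discretizing at resolution $\epsilon_0$ leaves finitely many distinct inputs. Massart's finite-class lemma (or a Dudley integral truncated at $\epsilon_0$) then yields $\sqrt{d_h\ln(NTB/\epsilon_0)/m}$.

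The main obstacle is the spiking nonlinearity. The threshold map is discontinuous, so it is not Lipschitz and cannot simply be peeled. On top of that, the sparsity assumption controls active channels only in probability, not deterministically.

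I would first handle both by conditioning on the realized spike pattern. For a fixed pattern, the network restricted to that pattern is a composition of Lipschitz continuous blocks with fixed binary masks, and the peeling argument above applies verbatim. A Chernoff bound shows that, with high probability over the $NTd_h$ channel–timestep events, each layer has at most $O(p_{\max}d_h)$ active channels. The number of distinct admissible patterns on the covered input set is finite and enters only logarithmically, so it can be merged into the $\ln(NTB/\epsilon_0)$ term. The failure probability is absorbed into $C$, or into a slight inflation of $\delta$ in Eq.~\eqref{eq:gen-bound-revised}.

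If conditioning turns out to be too lossy, the fallback is to replace the Heaviside function with the surrogate sigmoid of slope $\alpha$ when measuring complexity. That map is Lipschitz with constant $\alpha$, and its $\alpha$-dependence would likewise be absorbed into $C$.
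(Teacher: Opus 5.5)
\textbf{Gap 1: the $(p_{\max})^{(L+1)/2}$ factor does not follow from your mechanism.} A bound $\|\mathbf{s}\|_2\le\sqrt{p_{\max}d_h}$ on a binary spike vector does not compound across layers. A threshold unit's output is determined by which channels fire, not by the scale of its input.

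\begin{itemize}
\item If layer $\ell$ emits $\mathbf{s}_\ell$ with $\|\mathbf{s}_\ell\|_2\le\sqrt{p_{\max}d_h}$, then $\|W_{\ell+1}\mathbf{s}_\ell\|_2\le\sqrt{p_{\max}d_h}$.
\item The next spike vector again only satisfies $\|\mathbf{s}_{\ell+1}\|_2\le\sqrt{p_{\max}d_h}$, not $p_{\max}\sqrt{d_h}$. Every spiking layer resets the scale.
\end{itemize}

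So a norm-based argument gives at most one factor $\sqrt{p_{\max}}$, from the last spiking layer feeding the Transformer.

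Peeling does not rescue this. The Heaviside has no Lipschitz constant. Under your surrogate fallback, each layer contributes a factor of order $\alpha$ rather than $\sqrt{p_{\max}}$. The surrogate's outputs are also no longer binary, so the active-channel count you rely on disappears.

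The paper's sketch does not get the factor from output norms either. It works with the surrogate $\tilde\sigma_\alpha$ throughout, using Ledoux--Talagrand contraction. It then attributes a $(p_{\max})^{1/2}$ factor per layer to Bernoulli gating after conditioning on the active set (``sparsity-induced variance reduction''), citing prior work for that step. Your argument needs some per-layer contraction by $\sqrt{p_{\max}}$ of this kind, and nothing in it supplies one.

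\textbf{Gap 2: conditioning on the spike pattern breaks down.} Fixing a Heaviside spike pattern is not like fixing a ReLU activation pattern. A ReLU pattern masks a signal that still depends on the parameters, whereas a fixed spike pattern \emph{is} the layer output. After conditioning, everything upstream of the last spiking layer is constant, so there is nothing left to peel.

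The complexity then lives entirely in how many patterns the class realizes on the $m$ samples as the parameters vary. That is a growth-function count, and its logarithm scales with the number of parameters times $\ln m$. It cannot be merged into $\ln(NTB/\epsilon_0)$: covering the input domain is beside the point, since the union is over hypotheses on a fixed sample. It also does not produce the $p_{\max}$ factor.

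The Chernoff step has further problems.
\begin{itemize}
\item It needs independence across channel--timestep events, which fails because a node's channels share inputs.
\item It must hold uniformly over $f$, because $\mathcal{R}_m$ is a supremum.
\item A failure probability cannot be absorbed into the multiplicative constant $C$.
\end{itemize}

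\textbf{Minor point on Eq.~\eqref{eq:gen-bound-revised}.} $C$ does not appear in that display, so the loss's Lipschitz constant (and the vector-contraction factor for multiclass logits) cannot be absorbed there. In addition, McDiarmid plus symmetrization yields the \emph{expected} Rademacher complexity. With the empirical one, as the statement defines it, the confidence term becomes $3\sqrt{\ln(2/\delta)/(2m)}$.
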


\begin{proof}[Proof Sketch]
    Throughout the proof we work with the bounded $\alpha$-Lipschitz surrogate $\tilde\sigma_\alpha$ (Eq.~(13)), not the discontinuous Heaviside used in the forward pass: since $\mathbb{I}(\cdot\ge V_{\text{th}})$ has unbounded Lipschitz constant, the Ledoux--Talagrand contraction step is applied to $\tilde\sigma_\alpha$. Each spiking layer induces a Bernoulli-style gating with activation probability at most $p_{\max}$ per channel. Conditioning on the active set, the network reduces to a composition of Lipschitz maps with reduced effective dimension. By contraction of Rademacher complexity under Lipschitz maps and sparsity-induced variance reduction, each spiking layer contributes a factor $(p_{\max})^{1/2}$, and the spike-encoded input contributes an additional $(p_{\max})^{1/2}$, yielding $(p_{\max})^{(L+1)/2}$ overall \cite{zhang2024generalization}.
    With spectral normalization, linear blocks do not amplify complexity. Attention/Transformer blocks are $L_{\text{attn}}$-Lipschitz under bounded inputs, contributing multiplicatively as $L_{\text{attn}}^{K}$. For spatiotemporal inputs with bounded magnitude $B$, covering-number arguments over feature maps with parameter sharing yield an entropy term of order $\ln(NTB/\epsilon_0)$ and dimension factor $\sqrt{d_h}$, giving Eq. \eqref{eq:rad-bound-revised}. Applying the standard Rademacher generalization inequality yields Eq. \eqref{eq:gen-bound-revised} \cite{mohri2018foundations}.
\end{proof}

\begin{proposition}[Transformer Temporal Aggregator Expressiveness]\label{prop:transformer-expressiveness}
    Let $\{s_v(t)\}_{t=1}^T$ be the spike-encoded feature sequence for node $v$, and let $z_v$ be the output of ChronoSpike’s Transformer temporal encoder with learned absolute positional embeddings $P_{1:T}$. Then, for a fixed horizon $T$ and sufficient width, the Transformer is a universal approximator of sequence-to-sequence (and sequence-to-vector) mappings on spike data. Moreover:
    \begin{enumerate}[label=(\roman*), leftmargin=1.6em, topsep=0pt, itemsep=0pt]
        \item \textbf{No intrinsic decay of long-range influence:} a spike at time $t$ can directly affect the output at time $T$ via a single attention hop, with no multiplicative decay in $T-t$.
        \item \textbf{Rich temporal pattern modeling:} with sufficient hidden size and (optionally) multiple heads, the encoder can represent order, periodicity, and long-range causal relations among spikes.
        \item \textbf{Positional-encoding limitation:} with learned absolute embeddings $P_t$, time-shift equivariance and extrapolation to $t>T_{\text{train}}$ are not guaranteed; relative-position schemes would be required for horizon-agnostic generalization.
    \end{enumerate}
    Thus, within the trained horizon $T$, the temporal encoder can preserve and combine information from the entire spike history, while extrapolation beyond $T$ depends on the positional encoding design.
\end{proposition}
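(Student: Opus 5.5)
The plan is to reduce the universality claim to known results on Transformers and then verify (i)--(iii) directly from the encoder equations of Appendix~\ref{app:temporal_integration}. The key observation is that the input domain is tiny. For a fixed horizon $T$, each $\mathbf{s}_v^{(t)}\in\{0,1\}^{d_h}$, so the inputs $\mathbf{S}_v$ range over the finite set $\{0,1\}^{T\times d_h}$. Adding learned positional encodings $\mathbf{P}_{1:T}$ makes the rows of $\tilde{\mathbf{S}}_v$ pairwise distinguishable by position. I would choose $\mathbf{P}_{1:T}$ with distinct rows separated by more than $\sqrt{d_h}$, so that no two (position, spike vector) pairs collide. Then I invoke the universal approximation theorem for Transformers with trainable positional encodings (Yun et al.). It states that any continuous sequence-to-sequence map on a compact domain can be approximated arbitrarily well by a Transformer of sufficient width/depth. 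Since the domain is finite, any target map on it extends to a continuous function, so the theorem applies; in fact we get exact interpolation up to $\epsilon$. The sequence-to-vector case then follows by composing with the last-position readout $\mathbf{Z}_v^{\text{temp}}[T,:]$ and the MLP head.

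\textbf{Main obstacle.} The encoder as stated is a \emph{single} layer, whereas Yun et al.\ use depth. I expect this mismatch to be the real difficulty, and I would handle it by stating the claim under ``sufficient width \emph{and} depth''. If I must stay single-layer, I would try a second route. Over the finite input set, one attention head with a very large query/key scale can act as a selector copying a hashed encoding of the full sequence into position $T$. The FFN, which is a universal approximator on a finite set, together with the readout MLP then implements the lookup table. The hashing step is where I am least confident.

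\textbf{Item (i).} This is a direct computation. The output at position $T$ is $\sum_{t}a_{Tt}\,W_V\tilde{\mathbf{s}}^{(t)}$, so its Jacobian with respect to $\mathbf{s}^{(t)}$ contains the term $a_{Tt}W_V$. The coefficient $a_{Tt}$ is a softmax weight, and it can be made bounded away from zero independently of $T-t$ by choosing $W_Q,W_K$ so that the query at $T$ aligns with $P_t$. This contrasts with the factor $\lambda^{T-t}$ of the LIF recursion from Theorem~\ref{thm:gradient-stability}.

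\textbf{Item (ii).} This follows as a corollary of universality. Order, periodicity (a function of $t \bmod k$ computed from $P_t$), and causal relations such as ``a spike at $t_1$ precedes one at $t_2$'' are all functions on the finite domain, so they are representable.

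\textbf{Item (iii).} A counterexample suffices. The embeddings $P_t$ for $t>T_{\text{train}}$ are never trained, or are interpolated as described in Appendix~\ref{app:complexity}, so the outputs on shifted or longer sequences are unconstrained by the training data. I would exhibit two parameter settings that agree on all inputs of length $\le T_{\text{train}}$ but differ beyond it. Shift-equivariance fails for a similar reason: $\tilde{\mathbf{S}}$ changes non-equivariantly when the input is shifted, because $P_t$ is absolute rather than relative.
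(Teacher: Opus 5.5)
Your skeleton matches the paper's proof, which is only a citation-level sketch. It cites Yun et al.\ for universality, uses a single attention hop from $t$ to $T$ for (i), says ``heads and FFN compose'' for (ii), and notes that absolute $P_t$ beyond $T_{\text{train}}$ are untrained for (iii). Your arguments for (i)--(iii) are fine and more explicit than the paper's. The gap is in the universality step, and the paper's proof passes over it without comment.

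Yun et al.'s theorem is a fixed-width, growing-depth result (two heads of size one, FFN hidden size four, arbitrarily many blocks, LayerNorm omitted). So it neither supplies ``sufficient width'' nor covers the single-layer encoder the statement is about. Your first route, assuming ``width and depth'', proves the claim for a different architecture. The theorem also bounds the error only in $L^p$, $1\le p<\infty$, over a compact domain. Hence your step ``the domain is finite, extend continuously, hence interpolation up to $\epsilon$'' does not follow. An $L^p$ bound says nothing about the values on the measure-zero set $\{0,1\}^{T\times d_h}+\mathbf{P}_{1:T}$ of actual spike inputs. What you need is a pointwise (memorization) statement for a one-layer block.

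Your fallback is the right route, and the ``hashing'' step is less delicate than you fear once you spend the width the statement grants. Put $\mathbf{P}_t=R\,\mathbf{p}_t$ with distinct unit vectors $\mathbf{p}_t$ and $R\gg\sqrt{d_h}$. Give head $t$ the rank-one maps $W_Q=\mathbf{e}\,\mathbf{p}_T^\top$ and $W_K=\mathbf{e}\,\mathbf{p}_t^\top$. The logit of key $t'$ at query $T$ is then $(R+O(\sqrt{d_h}))(R\,\mathbf{p}_t^\top\mathbf{p}_{t'}+O(\sqrt{d_h}))$. This is uniquely maximal at $t'=t$, with a margin that does not depend on the spike contents, and scaling the logits makes the softmax leakage as small as needed on this finite set.

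Let head $t$ carry a one-dimensional value that is injective on $\{0,1\}^{d_h}$ (a generic linear functional). With $T$ heads, the attention output at position $T$ is then, up to per-head constants and negligible leakage, an injective code of $(\mathbf{s}^{(1)},\dots,\mathbf{s}^{(T)})$. An FFN of sufficient hidden width can send these finitely many codes to any prescribed values, up to the final LayerNorm's normalization. You still have to check that the residual additions and the LayerNorms (invariant under $x\mapsto ax+b\mathbf{1}$, $a>0$) do not merge two codes; on a finite set that is a non-generic coincidence. The single-head version of this step is essentially the one-layer contextual-mapping and memorization result of Kajitsuka and Sato (ICLR 2024). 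With universality on the finite domain established this way, your derivation of (ii) as a corollary goes through as written.
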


\begin{proof}
    For fixed $T$, Transformers with positional information are universal approximators of sequence-to-sequence functions \cite{yun2020are}. Hence the mapping
    \[
        (s_v(1),\dots,s_v(T)) \;\mapsto\; z_v
    \]
    can approximate any continuous function with sufficient width.

    Self-attention computes $\mathrm{Att}(Q,K,V)=\mathrm{softmax}(QK^\top)V$, allowing the query at position $T$ to place mass on any key at position $t$, creating an $\mathcal{O}(1)$-hop path from $t$ to $T$ and proving (i). Multiple heads and feed-forward sublayers yield expressive compositions, enabling detection of order and periodic patterns, proving (ii).

    With learned absolute embeddings $P_t$, temporal indices are represented by fixed vectors seen during training. For $t>T_{\text{train}}$, embeddings are untrained or extrapolated ad hoc, so relative distances are not encoded in a shift-equivariant manner, establishing (iii). Therefore, within horizon $T$, the encoder is fully expressive, while extrapolation depends on positional encoding choice \cite{vaswani2017attention}.
\end{proof}

\begin{remark}[Time and Memory Complexity Revisited]\label{rem:complexity-revisited}
    Per training iteration (including BPTT), the time complexity is
    \begin{equation}\label{eq:time-complexity}
        \mathrm{Time}(N,T)
        =\mathcal{O}\!\left(N S K d_h^2 T \;+\; N d_h^2 T^2\right)
    \end{equation}
    The term $\mathcal{O}(N S K d_h^2 T)$ is from spatial message passing over $T$ steps, where each of $N$ nodes attends to at most $S$ neighbors with $K$ heads/layers and $\mathcal{O}(d_h^2)$ projections per aggregation. The term $\mathcal{O}(N d_h^2 T^2)$ is from temporal self-attention: vanilla softmax attention costs $\mathcal{O}(T^2 d_h)$ per node plus $\mathcal{O}(T d_h^2)$ projections, giving an additive per-node cost $\mathcal{O}(T^2 d_h + T d_h^2)$; aggregating over the $N$ target nodes yields the total $\mathcal{O}(N\,T\,d_h\,(T+d_h))$, which we summarize by the dominant term $\mathcal{O}(N d_h^2 T^2)$ in the regime $T\gtrsim d_h$ (and by $\mathcal{O}(N\,T\,d_h^2)$ otherwise).
    In our experiments, $T$ is bounded (e.g., $T \le 27$ for DBLP and Patent) \cite{MMDNE_dblp_and_tmall}, and for longer horizons (e.g., Tmall), we use a lightweight single-layer Transformer with a small hidden dimension, making the $T^2$ term negligible in practice, consistent with Table~\ref{tab:scalability}. We emphasize that the asymptotic claim of memory complexity $O(T\cdot d)$ refers to the activation and state storage of the spike sequence and per-layer LIF buffers; the temporal Transformer itself contributes the additional $O(T^2 d_h)$ attention term per node, which is why we describe time complexity as "near-linear in $T$" only for the bounded-$T$ regime considered here, rather than asymptotically linear for unbounded $T$. For genuine $O(T)$ extrapolation, a linear-attention or state-space replacement of the Transformer would be required.

    For memory, naive BPTT would require storing all states, $\mathcal{O}(N T d_h)$. ChronoSpike uses an event-driven minibatch over $B$ target nodes, storing only their histories and neighbors:
    \begin{equation}\label{eq:mem-complexity}
        \mathrm{Memory}(B,T)
        =\mathcal{O}\!\left(B T d_h \;+\; B T S K d_h\right)
    \end{equation}
    Here, the second term reflects that, under BPTT with surrogate gradients, the spatial activations of the $K$-layer aggregation tree (size $\mathcal{O}(BSK d_h)$) must be cached at every one of the $T$ timesteps to support the backward pass.
    with $B\ll N$ (e.g., $B=1024$) \cite{spikenet23}. Thus, memory is independent of $N$ in practice.

    In summary:
    \begin{itemize}[topsep=0pt,itemsep=0pt,leftmargin=1.4em]
        \item \textbf{Time:} $\mathcal{O}(N S K d_h^2 T + N d_h^2 T^2)$ (effectively near-linear in $N$ and $T$ for small $T$).
        \item \textbf{Memory:} $\mathcal{O}(B T d_h + B S K d_h)$ with $B\ll N$.
    \end{itemize}
    Parameter count is constant ($\approx 10^5$) w.r.t.\ $N,T$, matching empirical scaling \cite{dysign24}.
\end{remark}

\section{Details of Experimental Setup}

\subsection{Datasets Details}
\label{app:datasets}
We evaluate ChronoSpike on three large real-world dynamic graph datasets, namely DBLP, Tmall~\cite{MMDNE_dblp_and_tmall}, and Patent~\cite{patent}, which have been adopted in prior work on DGRL. \textit{DBLP} is an academic collaboration network in which nodes represent authors and edges represent co-authorship relations that evolve over time; node features describe author attributes, and nodes are labeled by research areas. \textit{Tmall} is a large-scale e-commerce interaction graph constructed from user-item behavioral data, where nodes represent users and items, edges denote interaction events aggregated into temporal snapshots, and node labels indicate predefined categories. \textit{Patent} is a dynamic citation network composed of patent documents as nodes and citation relations as time-evolving edges, with node labels corresponding to patent categories. These datasets differ substantially in terms of graph scale, temporal granularity, and structural dynamics, ranging from moderately sized graphs with limited temporal depth to extremely large graphs with millions of nodes and long temporal horizons. The detailed statistics of the datasets are summarized in Table~\ref{tab:dataset-stats}.

\subsubsection{Evaluation Metrics}
\label{app:metrics}
We evaluate model performance using Macro-F1 and Micro-F1 scores, which are standard metrics for temporal node classification on dynamic graphs and have been widely adopted in prior work. Let $\mathcal{C}$ denote the set of classes. For each class $c \in \mathcal{C}$, the class-wise F1 score is defined as
\begin{equation}
    \mathrm{F1}_c = \frac{2 \cdot \mathrm{Precision}_c \cdot \mathrm{Recall}_c}{\mathrm{Precision}_c + \mathrm{Recall}_c}
\end{equation}
where $\mathrm{Precision}_c$ and $\mathrm{Recall}_c$ are computed based on the predicted and ground-truth labels for class $c$. The Macro-F1 score is obtained by averaging the F1 scores over all classes,
\begin{equation}
    \mathrm{Macro\text{-}F1} = \frac{1}{|\mathcal{C}|} \sum_{c \in \mathcal{C}} \mathrm{F1}_c
\end{equation}
which treats all classes equally and is sensitive to performance on minority classes. The Micro-F1 score is computed by aggregating true positives, false positives, and false negatives across all classes before computing the F1 score.
\begin{equation}
    \mathrm{Micro\text{-}F1} = \frac{2 \cdot \sum_c \mathrm{TP}_c}{2 \cdot \sum_c \mathrm{TP}_c + \sum_c \mathrm{FP}_c + \sum_c \mathrm{FN}_c}
\end{equation}
and reflects overall classification accuracy weighted by class frequency. In addition to predictive performance, we report efficiency metrics, including the number of trainable parameters and the average training time per epoch, to assess ChronoSpike's scalability. These metrics are measured consistently across all methods under identical experimental settings.

\subsection{Comprehensive Implementation Details}\label{app:implementation}
To evaluate the effectiveness of ChronoSpike, we compare it with twelve state-of-the-art baselines, including static graph methods DeepWalk \cite{perozzi2014deepwalk} and Node2Vec \cite{grover2016node2vec}; shallow dynamic graph methods HTNE \cite{HTNE}, M$^2$DNE \cite{MMDNE_dblp_and_tmall}, and DyTriad \cite{DynamicTriad}; neural dynamic graph methods, including message passing and recurrent architectures MPNN \cite{MPNN}, JODIE \cite{kumar2019predicting_jodie}, and EvolveGCN \cite{pareja2020evolvegcn}; attention-based methods TGAT \cite{tgat}; and SGNNs SpikeNet \cite{spikenet23}, Dy-SIGN \cite{dysign24}, and Delay-DSGN \cite{Delay_DSGN_25}.

ChronoSpike is trained using mini-batch stochastic optimization with the AdamW optimizer. All models are trained for 100 epochs. The training batch size is fixed to 1024 for all datasets. The learning rate is dataset-specific and fixed across runs: we use a learning rate of $5\times10^{-3}$ for DBLP and Tmall, and $1\times10^{-2}$ for Patent to account for its larger scale and sparser supervision. During inference, batch sizes are determined deterministically based on dataset scale. For large-scale graphs with more than one million nodes (Patent), we use an inference batch size of 10,000, while for smaller datasets (DBLP and Tmall), we use a batch size of 200,000 to maximize throughput without exceeding GPU memory limits.

Unless otherwise stated, ChronoSpike employs a two-layer spatial aggregation architecture with hidden dimensions of (128, 64). The spatial attentive aggregator uses four attention heads. Inductive neighborhood sampling follows a fixed fan-out strategy with layer-wise sample sizes (5, 2). At each layer, neighbors are sampled from a probabilistic mixture of cumulative historical edges and newly evolving edges, controlled by the sampling ratio $p$. Adaptive LIF neurons are used throughout the model. The membrane time constant is initialized as $\tau = 1.0$, the reset potential is fixed to $V_{\text{reset}} = 0$, and the firing threshold is initialized as $V_{\text{th}} = 1.0$. A smooth sigmoid surrogate gradient with slope parameter $\alpha = 1.0$ is employed during backpropagation. All neuron parameters are learned independently per feature channel. Neuron states are reset after each temporal forward pass. Temporal aggregation is performed using a lightweight Transformer encoder with a single layer and four attention heads, along with learnable positional encodings. A temporal contrastive regularization term is applied during training with a fixed weight of 0.1 and a temperature of 0.5. The model uses a dropout rate of 0.7, while a rate of 0.1 is used to generate contrastive views. All regularization terms are disabled during inference.

\textbf{Hardware Configuration. }
\label{hardware}
All experiments were conducted on a high-performance computing cluster. The server was equipped with a dual-socket Intel Xeon Gold 5317 processor running at 3.00 GHz with 48 CPU cores and 755 GB of system memory. For GPU-accelerated computation, we used 8 NVIDIA RTX A6000 GPUs, each with 48 GB of GDDR6 memory, running driver version 535.274.02.

% at the University of Southern California, Information Sciences Institute

\textbf{Software Environment. }
Our implementation was developed in Python 3.10.12 on Ubuntu 22.04.5 LTS. We leveraged PyTorch 2.0.1~\cite{paszke2019pytorch} as our primary deep learning framework, compiled with CUDA 11.8~\cite{nickolls2008scalable} support for GPU acceleration. Numerical computations were performed using NumPy 1.26.4~\cite{harris2020array} and SciPy 1.15.3~\cite{virtanen2020scipy}. For machine learning utilities including preprocessing, evaluation metrics, and data splitting, we employed scikit-learn 1.7.2~\cite{pedregosa2011scikit}. Additional utilities included tqdm 4.67.1 for progress monitoring, psutil 7.1.3 for system resource tracking, and gdown 5.2.0 for dataset retrieval.

\textbf{Dataset Preprocessing. } Dynamic graph snapshots are preprocessed as follows: (1) Node features $X_t$ are standardized to zero mean and unit variance per timestep across the joint $(\text{node}\times\text{feature\_dim})$ axis using all nodes; this is a transductive node-classification setup with node-level (not time-level) splits, so val/test nodes participate in the per-timestep statistics, matching the standard protocol used by SpikeNet and prior work on these benchmarks. (2) Edge lists $E_t$ are deduplicated and sorted by node ID for efficient neighborhood lookups. (3) Isolated nodes (degree = 0 across all timesteps) are removed from the test set to avoid undefined attention computations; in addition, self-loops are added to every adjacency matrix ($A_t \leftarrow A_t \vee I$) so that any node retained in train/val with degree $0$ at time $t$ still has at least one valid neighbor for attention, eliminating undefined-softmax cases throughout training and evaluation. (4) For datasets with categorical node features (e.g., DBLP), one-hot encoding is applied before standardization.

\textbf{Model Initialization. } Weight matrices $\{\mathbf{W}_s^{(k)}, \mathbf{W}_n^{(k)}, \mathbf{W}_q^{(k)}, \mathbf{W}_k^{(k)}, \mathbf{W}_v^{(k)}\}$ are initialized using Xavier/Glorot uniform initialization. Biases are initialized to zero. LIF neuron parameters are initialized as $\tau_i = 1.0$, $V_{\text{th},i} = 1.0$ for all feature dimensions $i$. Positional encodings $\mathbf{P}_{1:T}$ are initialized randomly from $\mathcal{N}(0, 0.02)$ and subsequently optimized.

\textbf{Hyperparameter Search Strategy. }\label{app:hyperparam} We perform grid search over the following ranges for sensitivity analysis: learning rate $\eta \in \{1 \times 10^{-3}, 3 \times 10^{-3}, 5 \times 10^{-3}, 7 \times 10^{-3}, 1 \times 10^{-2}\}$, dropout $p_{\text{drop}} \in \{0.3, 0.5, 0.7, 0.9\}$, contrastive weight $\lambda \in \{0.05, 0.1, 0.2, 0.3\}$, sampling probability $p \in \{0.2, 0.4, 0.6, 0.8, 1.0\}$, and hidden dimensions $d \in \{64, 128, 256, 512\}$. The best configuration on the validation set is selected, and final test performance is reported using this configuration (single run, no test-time hyperparameter tuning). Sensitivity analysis (Section \ref{sec:sensitivity}) confirms robustness to hyperparameter choices within reasonable ranges. For the twelve baselines, we adopt the published hyperparameters reported by the original authors when available, and otherwise perform an equivalent-budget search over the corresponding learning rate/hidden dimension/dropout grid (matching the cardinality of our own search) on the validation set. This ensures comparable tuning effort across methods; we did not employ a larger budget for ChronoSpike than for any single baseline.

\subsection{Baseline Methods}
\label{app:baseline}
We benchmark ChronoSpike against 12 methods categorized into five paradigms:

\subsubsection{Static Graph Methods}
\paragraph{DeepWalk}~\cite{perozzi2014deepwalk} learns node embeddings by performing truncated random walks on the graph and treats the resulting node sequences as sentences. It optimizes a skip-gram objective to capture node co-occurrence patterns in the sampled walks.
\paragraph{Node2Vec}~\cite{grover2016node2vec} extends DeepWalk by introducing biased random walks that interpolate between breadth-first and depth-first sampling strategies. This bias enables flexible local and global exploration of network neighborhoods during training sequence generation.

Because these methods operate on static graphs, we construct a single graph by aggregating all past interactions and apply it to the final timestep.

\subsubsection{Shallow Dynamic Graph Methods}
\paragraph{HTNE}~\cite{HTNE} integrates Hawkes point processes with attention mechanisms to model the temporal formation of network neighborhoods. It learns node embeddings by capturing both temporal influence and neighborhood relevance in event sequences.
\paragraph{M$^2$DNE}~\cite{MMDNE_dblp_and_tmall} models dynamic networks using a temporal point process framework with attention mechanisms. This method jointly captures short-term interaction patterns and long-term network evolution through temporal embedding updates.
\paragraph{DyTriad}~\cite{DynamicTriad} explicitly models triadic closure processes to characterize structural evolution in dynamic networks, which lets the model effectively capture graph dynamics. It learns node embeddings by capturing motif-level temporal transitions over time.

\subsubsection{Recurrent and Neural Dynamic Methods}
\paragraph{MPNN-LSTM} (MPNN)~\cite{MPNN} combines graph message passing with recurrent neural networks to update node representations over time. Temporal dependencies are modeled through sequential hidden state propagation across graph snapshots.
\paragraph{JODIE}~\cite{kumar2019predicting_jodie} maintains dynamic embeddings for users and items that are updated at every interaction event using coupled recurrent neural networks. It further projects embeddings forward in time to support future interaction prediction.
\paragraph{EvolveGCN}~\cite{pareja2020evolvegcn} evolves the parameters of graph convolutional networks using recurrent neural networks at each timestamp. Instead of updating node embeddings directly, it updates the GCN weights to reflect graph evolution over time.

\subsubsection{Attention-Based Methods}
\paragraph{TGAT}~\cite{tgat} applies self-attention mechanisms with functional time encodings to aggregate temporal and topological information inductively. It supports inductive learning on temporal graphs and does not rely on recurrent hidden states.

\subsubsection{Spiking Graph Neural Networks}
\paragraph{SpikeNet}~\cite{spikenet23} introduces spiking neurons into graph neural networks by modeling temporal dynamics through membrane potential accumulation across snapshots. Neighborhood information is propagated via spike-based message passing over graph structures.
\paragraph{Dy-SIGN}~\cite{dysign24} formulates dynamic graph learning as an equilibrium spiking neural system, where node representations are obtained as the fixed points of an iterative spike-based message-passing process. The model is optimized using implicit differentiation, which avoids explicit backpropagation through time.
\paragraph{Delay-DSGN}~\cite{Delay_DSGN_25} incorporates learnable synaptic delay mechanisms into spiking graph message passing to account for temporal latency in signal propagation. It models temporal dependencies by associating spike transmissions with adaptive delay kernels that aggregate information across multiple time steps.

\section{Details of Experimental Results}

\subsection{Details of Ablation Study}\label{app:ablation}
We analyze ChronoSpike on the DBLP, Tmall, and Patent datasets with different training ratios to validate the contribution of each core component to its performance. We systematically removed or replaced the key modules, as shown in Table~\ref{tab:ablation_node_cl}. Removing the temporal Transformer results in the largest performance drop on DBLP and Tmall, with both Micro and Macro F1 scores decreasing by about 3--5\%, confirming that this component captures global long-range dependencies that local spiking dynamics alone cannot address. On Patent, the drop is smaller but still consistent ($\approx$2.7--3.1\%), reflecting that its shorter temporal horizon (T=25) reduces the relative advantage of global temporal attention; nonetheless, the Transformer remains beneficial across all settings. Similarly, the removal of adaptive LIF neurons consistently led to worse results on all three datasets, confirming that learnable membrane time constants are crucial to effectively model complex temporal patterns. Removing the attentive aggregator and using mean pooling caused a moderate but consistent drop across datasets, with the effect most pronounced on the dense Tmall graph. On Patent, the attention ablation yields among the lowest component-level scores at most training ratios (e.g., 0.8240 Mi-F1 at 80\%), though at 60\% the ReLU variant scores slightly lower, suggesting that both spatial attention and spike-based computation contribute meaningfully on large-scale graphs. The contrastive loss provides a consistent regularization benefit across all three datasets and training ratios.

The identity of the strongest ablation variant varies across datasets and settings. On DBLP, the ReLU variant (w/o SNN) and the adaptive LIF ablation alternate as the top-performing ablations; on Tmall, the contrastive loss ablation and the ReLU variant are frequently among the strongest. On Patent, the LSTM temporal variant performs best across all training ratios (e.g., 0.8403 Ma-F1 and 0.8391 Mi-F1 at 80\%), suggesting that Patent's large spatial scale and shorter temporal depth make recurrent temporal modeling a competitive alternative. Despite these variations, ChronoSpike consistently outperforms all ablation variants across every dataset and training ratio, with margins of approximately 2.0--2.5 percentage points over the best ablation on Patent, indicating that the combination of sparse spiking dynamics and Transformer-based temporal readout provides complementary benefits beyond what any single component achieves individually.
Finally, the static baseline confirms that effective dynamic graph learning fundamentally requires temporal modeling. The failure is most dramatic on DBLP, where performance drops to as low as 4.56\%, while on Patent, the degradation is less severe (Ma-F1 $\approx$0.805), likely because Patent's rich node features carry substantial class-discriminative information even without temporal context. Nonetheless, the consistent gap of roughly 5--6\% on Patent and the near-complete collapse on certain DBLP splits confirm that temporal modeling is indispensable.

\begin{figure}[!ht]
    \centering
    \includegraphics[width=\linewidth]{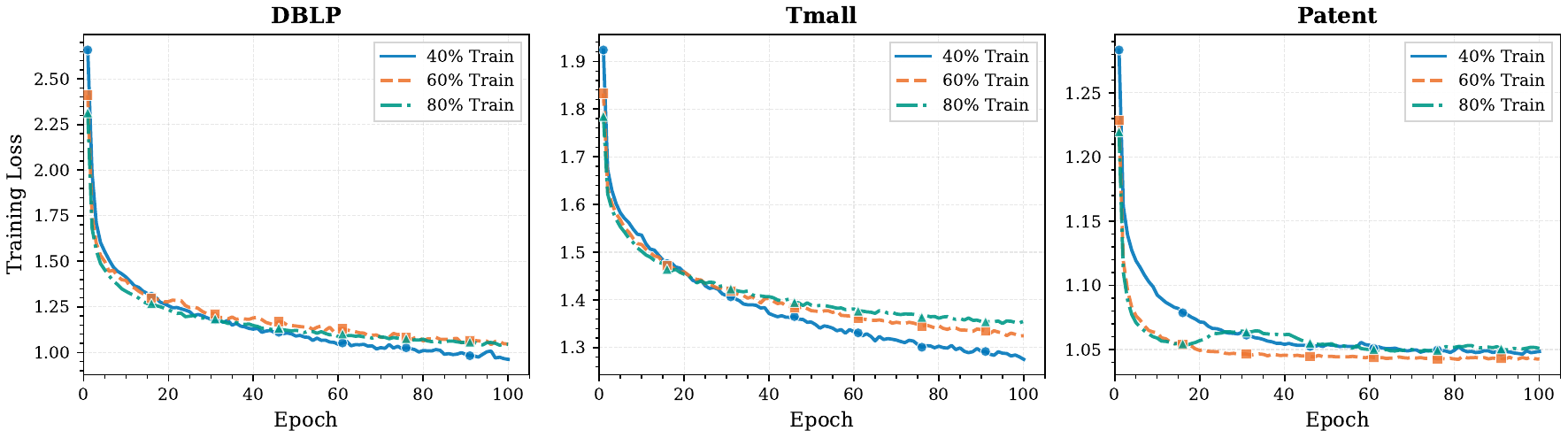}
    \caption{Training loss curves of ChronoSpike on DBLP, Tmall, and Patent with 40\%, 60\%, and 80\% training splits. ChronoSpike shows stable convergence across all settings, with faster loss reduction at higher supervision.}
    \label{fig:loss_curves}
\end{figure}

\subsection{Training Dynamics and Convergence Behavior}\label{app:train_dynamics}
Figure~\ref{fig:loss_curves} shows training loss on DBLP, Tmall, and Patent under 40\%, 60\%, and 80\% training splits. Across all datasets and splits, loss decreases smoothly, indicating stable optimization under spiking dynamics and temporal aggregation. Most loss reduction occurs within the first 20--30 epochs on DBLP and Tmall, after which curves plateau, indicating fast convergence. On DBLP and Tmall, higher training ratios yield faster convergence and lower final loss, reflecting the benefit of additional supervision for learning temporal representations. On Patent, loss decreases more slowly due to larger scale and longer temporal horizons, but remains smooth without divergence, indicating robustness over long sequences. Loss curves for different splits remain well separated, suggesting improvements arise from effective use of labeled data rather than overfitting. The absence of sharp fluctuations further indicates that surrogate-gradient training with adaptive LIF dynamics supports stable learning. ChronoSpike enables stable, scalable training on large, dynamic graphs across supervision levels.

\subsection{Interpretability Analysis}\label{app:interpretability}
We conduct an interpretability analysis on the DBLP dataset using 1000 randomly sampled test nodes to understand ChronoSpike's learned representations and mechanisms.

\subsubsection{Interpretability of Temporal Dependencies}
\label{appendix:temporal_interpretability}

To assess ChronoSpike’s ability to capture long-range dependencies, we analyze Temporal Transformer attention and compute snapshot importance by averaging attention over the full horizon ($T=27$). Figure~\ref{fig:temporal} shows that attention concentrates on early snapshots. Instead of shifting toward recent steps, as often seen in recurrent models due to vanishing gradients, ChronoSpike exhibits a primacy effect, with peak attention at $t\!=\!2$ -- $t\!=\!4$ followed by gradual decay. This indicates that the Transformer retrieves early structural context while still integrating recent updates, preserving long-range dependencies.

\begin{figure}[!ht]
    \centering
    \includegraphics[width=0.7\linewidth]{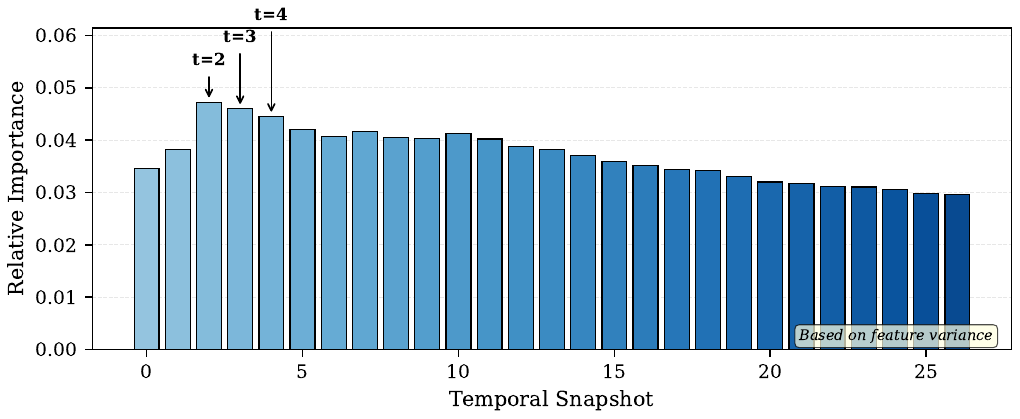}
    \caption{Learned Temporal Importance. The distribution shows a strong focus on early snapshots ($t=2, 3, 4$) rather than the most recent time steps. This shows ChronoSpike's ability to use long-range historical context.}
    \label{fig:temporal}
\end{figure}

\begin{figure}[!ht]
    \centering
    \includegraphics[width=0.7\linewidth]{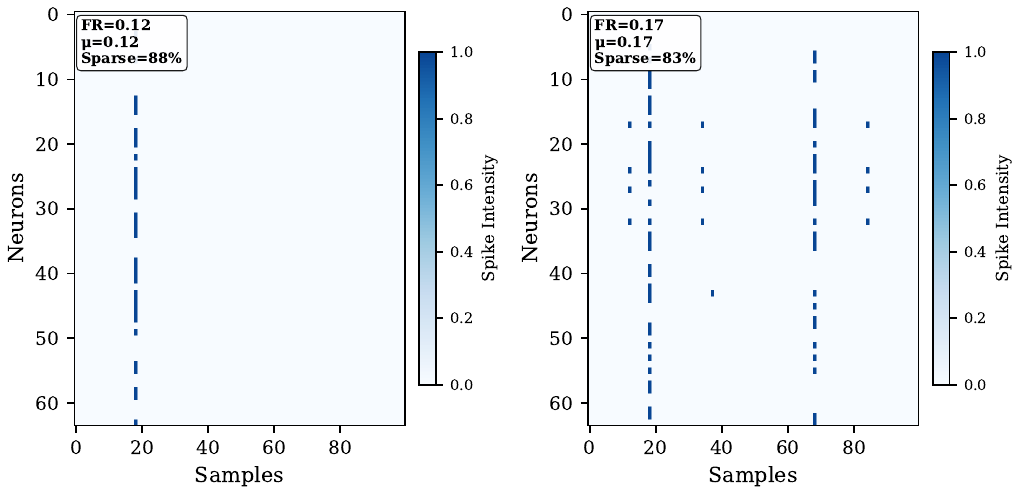}
    \caption{Spiking raster plots for two SNN layers. Spike events for Layer~1 (\textcolor{mygreen}{left}) and Layer~2 (\textcolor{myred}{right}) are shown over neurons (y-axis) and input samples (x-axis).}
    \label{fig:spikes}
\end{figure}

\begin{figure}[!ht]
    \centering
    \includegraphics[width=0.7\textwidth]{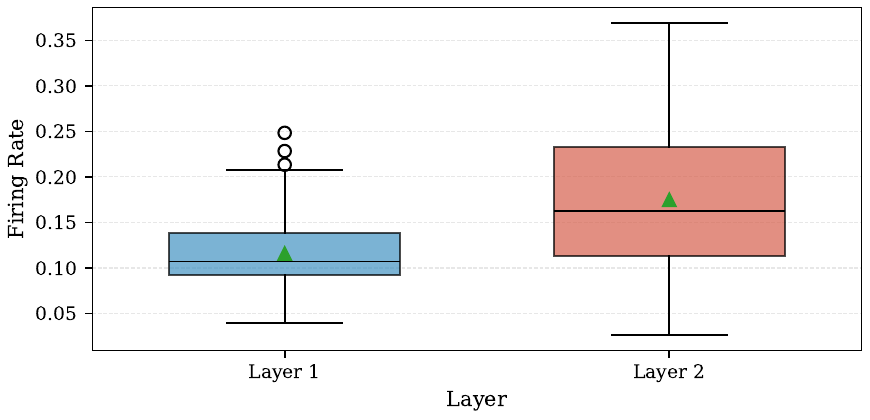}
    \caption{Distribution of per-sample firing rates across layers. Boxplots summarize firing rates (spikes per neuron per sample) aggregated over neurons for each input sample. Boxes indicate interquartile ranges with medians, whiskers extend to the most extreme non-outlier values within $1.5\times$ the IQR, and individual points beyond the whiskers represent outliers (Tukey convention); triangles indicate means. Layer~2 exhibits higher median firing rates and larger variability across samples than Layer~1.}
    \label{fig:firing_rates}
\end{figure}

\begin{figure}[!ht]
    \centering
    \includegraphics[width=0.7\linewidth]{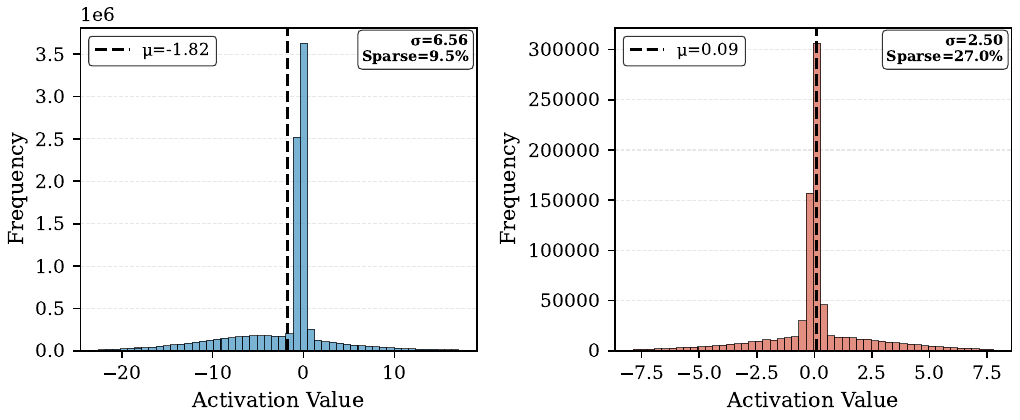}
    \caption{Histograms of membrane potentials measured before spike generation for Layer~1 (\textcolor{blue}{left}) and Layer~2 (\textcolor{myred}{right}), aggregated over all neurons, timesteps, and input samples in the evaluation set. Dashed vertical lines indicate mean values ($\mu$), and reported $\sigma$ denotes standard deviation. Here, ``\textcolor{mygreen}{Sparse}'' denotes the percentage of membrane potentials near zero $(|V|<0.01)$, indicating quiescent neural activity.}
    \label{fig:activation_dist}
\end{figure}

\subsubsection{Neuromorphic Activity Analysis}
\label{appendix:neuromorphic_activity}

We further analyze spiking behavior to verify ChronoSpike’s event-driven computation. Figure~\ref{fig:spikes} shows raster plots across neurons for a random subset of test samples, confirming highly sparse activity. Layer~1 has a silence ratio of about 88\% (firing rate $\mu \approx 0.12$), indicating selective feature detection with most neurons inactive. Layer~2 shows slightly higher activity (silence ratio $\approx 83\%$, $\mu \approx 0.17$), consistent with integrating upstream signals, while preserving sparsity.

Figure~\ref{fig:firing_rates} summarizes firing-rate distributions. Layer~1 shows a low median and narrow interquartile range, indicating stable selective encoding. Layer~2 shows a higher median and greater variability, reflecting aggregation of diverse feature combinations. Although deeper layers are more active in encoding complex patterns, the median firing rate remains below 0.20, preserving sparsity and preventing avalanche effects typical in deep SNNs, while maintaining energy efficiency.

\subsubsection{Layer Activation Statistics}
\label{appendix:activation_stats}
We examine the distribution of pre-spike membrane potentials in Figure~\ref{fig:activation_dist} to analyze internal signal propagation in ChronoSpike. The histograms show a clear shift in representations between the two spiking layers. Layer~1 operates in a high-variance regime ($\sigma=6.56$) with a negative mean ($\mu=-1.82$). This broad subthreshold distribution indicates that the first layer suppresses noise through strong inhibition while allowing only salient features to drive spiking activity. In contrast, Layer~2 exhibits a tighter distribution ($\sigma=2.50$) centered near zero ($\mu=0.09$), with a lower proportion of quiescent neurons (9.5\% subthreshold vs.\ 27.0\% in Layer~1). This shift shows that ChronoSpike progressively normalizes feature activations with depth, stabilizing dynamics, and placing deeper layers in a more active, balanced regime suited for temporal integration.

% \clearpage
% \input{checklist}

\end{document}